\newtheorem{assumption}{Assumption}
\begin{document}

\title{Differentially Private Sparse Linear Regression with Heavy-tailed Responses}

\author{
Xizhi Tian\inst{1,2,3},
Meng Ding \inst{4},
Touming Tao \inst{5}, \\
Zihang Xiang\inst{1,2},
Di Wang\inst{\dagger,1,2}
}

\authorrunning{X.Tian et al.}

\institute{Provable Responsible AI and Data Analytics (PRADA) Lab 
\and
King Abdullah University of Science and Technology
\and
Utrecht University
\and
University at Buffalo
\and 
Technical University Berlin}

\maketitle              
\def\thefootnote{$\dagger$}\footnotetext{Corresponding Author.} 
\begin{abstract}
As a fundamental problem in machine learning and differential privacy (DP), DP linear regression has been extensively studied. However, most existing methods focus primarily on either regular data distributions or low-dimensional cases with irregular data. To address these limitations, this paper provides a comprehensive study of DP sparse linear regression with heavy-tailed responses in high-dimensional settings.
In the first part, we introduce the DP-IHT-H method, which leverages the Huber loss and private iterative hard thresholding to achieve an estimation error bound of 
\(
    \tilde{O}\biggl( 
        s^{* \frac{1 }{2}} 
        \cdot \biggl(\frac{\log d}{n}\biggr)^{\frac{\zeta}{1 + \zeta}} 
        + 
        s^{* \frac{1 + 2\zeta}{2 + 2\zeta}} 
        \cdot \biggl(\frac{\log^2 d}{n \varepsilon}\biggr)^{\frac{\zeta}{1 + \zeta}} 
    \biggr)
\)
under the $(\varepsilon, \delta)$-DP model, where $n$ is the sample size, $d$ is the dimensionality, $s^*$ is the sparsity of the parameter, and $\zeta \in (0, 1]$ characterizes the tail heaviness of the data.
In the second part, we propose DP-IHT-L, which further improves the error bound under additional assumptions on the response and achieves 
\(
    \tilde{O}\Bigl(\frac{(s^*)^{3/2} \log d}{n \varepsilon}\Bigr).
\)
Compared to the first result, this bound is independent of the tail parameter $\zeta$. 
Finally, through experiments on synthetic and real-world datasets, we demonstrate that our methods outperform standard DP algorithms designed for ``regular'' data.

\keywords{Differential Privacy  \and Sparse Linear Regression \and Heavy-tailed Data}
\end{abstract}

\section{Introduction}

Differential Privacy~(DP) \cite{dwork2006calibrating} has received significant attention and is now widely considered the \emph{de facto} standard to protect privacy in data analysis. DP provides a rigorous mathematical framework to ensure that the inclusion or exclusion of any single individual's data in a dataset does not significantly affect the output of an analysis, thereby preserving privacy. A large body of research has explored various DP guarantees, and these concepts have been successfully adopted in industry~\cite{ding2017telemetry,tang2017privacy}.

Linear regression in the DP model has been extensively studied for many years, becoming one of the most thoroughly explored topics in machine learning and DP communities. A substantial body of research has addressed the problem from various perspectives. Early investigations of DP linear regression were closely tied to more general frameworks such as DP Stochastic Convex Optimization~(DP-SCO) and Empirical Risk Minimization~(DP-ERM), as explored in the seminal works of~\cite{chaudhuri2009privacy,chaudhuri2011differentially,wang2017differentially,wang2019differentially,wang2019differentially1,wang2020escaping,huai2020pairwise,xue2021differentially,su2022faster,su2024faster,ding2024revisiting,zhang2025improved,shen2023differentially}. 
Subsequently, numerous methods have been proposed to address DP linear regression under different settings. For instance, several studies~\cite{bassily2020stability,feldman2020private,iyengar2019towards,wang2022differentially,ding2025nearly} focus on low-dimensional scenarios in the central DP model, while others~\cite{cai2020cost,kasiviswanathan2016efficient,kifer2012private,talwar2015nearly,wang2020knowledge} extend these ideas to high-dimensional sparse linear regression—an increasingly relevant setting for modern, structured data. The local DP model has also attracted attention~\cite{duchi2018minimax,wang2020empirical,wang2019sparse,wang2020sparse,zhu2023improved,zhu2024truthful}, imposing stricter privacy requirements but potentially offering stronger protection.

Despite these advances, most prior work assumes \emph{regular} data distributions, typically requiring that features and responses are bounded or sub-Gaussian. Classical approaches such as output perturbation~\cite{chaudhuri2011differentially} or objective/gradient perturbation~\cite{bassily2019private} rely on these assumptions to guarantee an $O(1)$-Lipschitz loss for all data points. In practice, however, these conditions often fail—particularly in domains like biomedicine and finance, where heavy-tailed distributions commonly arise~\cite{biswas2007statistical,ibragimov2015heavy,woolson2011statistical}. In such cases, Lipschitz conditions may be violated; for example, in linear regression with squared loss $\ell(w,(x,y)) = (w^\top x - y)^2$, heavy-tailed features can lead to unbounded gradients, invalidating assumptions used in many DP proofs. Although gradient truncation or trimming~\cite{abadi2016deep} has been suggested as a remedy, a thorough analysis of its convergence properties under DP constraints has been lacking. This gap underscores the need for private and robust methods specifically tailored to heavy-tailed data.

Recent works have begun addressing DP linear regression in the presence of heavy-tailed data~\cite{barber2014privacy,kamath2020private,liu2021robust,wang2020differentially,wang2025private}. However, some of these approaches suffer from a polynomial dependence on the data dimension \(d\), making them unsuitable for high-dimensional settings where \(d \gg n\). Thus, a natural question is: What are the theoretical behaviors of DP linear regression in high-dimensional sparse cases with heavy-tailed data?


\noindent {\bf Our Contributions.} In this paper, we study the setting of high-dimensional sparse linear regression with heavy-tailed response under DP constraints. Specifically, we consider the scenario where the feature vector is sub-Gaussian and the response only has a finite $(1+\zeta)$-th moment with $\zeta \in (0, 1]$. We propose novel DP linear regression algorithms that are robust to heavy-tailed data and capable of handling high-dimensional sparse problems effectively. Specifically:

\begin{enumerate}
  \item For general heavy-tailed responses, we propose the DP-IHT-H algorithm, addressing the unexplored challenge of adapting Huber loss-based linear regression to differential privacy. Specifically, it achieves an error bound under $(\epsilon, \delta)$-DP given by
    \[
       \tilde{O}\biggl( 
        s^{* \frac{1}{2} }
        \cdot \biggl(\frac{\log d}{n}\biggr)^{\frac{\zeta}{1 + \zeta}} 
        + 
        s^{* \frac{1 + 2\zeta}{2 + 2\zeta}} 
        \cdot \biggl(\frac{\log^2 d}{n \varepsilon}\biggr)^{\frac{\zeta}{1 + \zeta}} 
    \biggr),
    \]
    where $n$ is the sample size, $d$ is the data dimensionality, and $s^*$ represents the sparsity of the parameter.

    \item To further improve the error bound,  we propose the DP-IHT-L algorithm, which leverages the \(\ell_1\) loss function instead. By adding some mild assumptions on the response noise, the DP-IHT-L algorithm achieves a lower gradient bound and reduces the magnitude of the added noise. This enhancement leads to improved error bounds and greater stability, regardless of the value of \(\zeta\). Specifically, under \((\epsilon, \delta)\)-DP, the error is bounded by  
\(
\tilde{O}\Bigl(\frac{(s^*)^{3/2} \log d}{n \epsilon}\Bigr), 
\)  which is independent of the moment and matches best-known results of the sub-Gaussian case \cite{hu2022high,cai2021cost}. 
    \item Through extensive experiments on both synthetic and real-world datasets, we demonstrate that our methods outperform DP algorithms designed for regular data. Moreover, in some cases, DP-IHT-L further improves upon DP-IHT-H.
\end{enumerate}

\section{Related Work}\label{sec2}
As we mentioned, DP linear regression has been extensively studied. Still, most existing methods assume that the underlying data distribution is sub-Gaussian or bounded, rendering them unsuitable for heavy-tailed data. In contrast, in the non-private setting, recent advances have addressed Stochastic Convex Optimization~(SCO) and Empirical Risk Minimization~(ERM) under heavy-tailed data~\cite{brownlees2015empirical,holland2019better,lecue2018robust,lugosi2019risk,sun2020adaptive,shen2023computationally}. However, these non-private methods are not directly adaptable to private settings, particularly in our high dimensional sparse scenarios.

The first study addressing DP-SCO with heavy-tailed data was proposed by~\cite{wang2020differentially}, which introduced three methods based on distinct assumptions. The first method utilizes the Sample-and-Aggregate framework~\cite{nissim2007smooth}, but its stringent assumptions lead to a relatively large error bound. The second method leverages smooth sensitivity~\cite{bun2019average} but requires the data distribution to be sub-exponential. Additionally, \cite{wang2020differentially} proposed a private estimator inspired by robust statistics, which shares similarities with our approach.  Building on the mean estimator proposed in~\cite{kamath2020private}, \cite{kamath2021improved,tao2022private} recently investigated DP-SCO and achieved improved (expected) excess population risks of \(\tilde{O}\left(\left(\frac{d}{\epsilon n}\right)^{\frac{1}{2}}\right)\) and \(\tilde{O}\left(\frac{d}{\epsilon n}\right)\) for convex and strongly convex loss functions, respectively. These results rely on the assumption that the gradient of the loss function has a bounded second-order moment, aligning with the best-known outcomes for heavy-tailed mean estimation. However, these approaches only consider low dimensional case and  cannot address high-dimensional or sparse learning problems. Furthermore, their methods are not directly extendable to our models, where the assumption of bounded second-order moments of the loss function gradient is overly restrictive than our assumption on the finite $(1+\zeta)$-moment for the response. 

Recently, \cite{hu2022high} proposed a method that requires only that the distributions of \(x\) sub-Gaussian and \(y\) at least have bounded second-order moments, achieving an error bound of \(\tilde{O}\left(\frac{(s^*)^{3} \log^{2} d}{n \epsilon}\right)\). In our paper, we consider the weaker assumption that \(y\) only has the bounded $(1+\zeta)$-th moment. We show that, under some additional assumptions, it is possible to achieve almost the same error as in \cite{hu2022high}.

\section{Preliminaries}

\begin{definition}[Differential Privacy \cite{dwork2006calibrating}]
    A randomized mechanism \( M \) for the data universe \( \mathcal{D} \) satisfies \( (\varepsilon, \delta) \)-differential privacy if, for all measurable subsets \( S \subseteq \operatorname{Range}(M) \) and for all pairs of adjacent datasets \( D, D' \in \mathcal{D} \) (differing by at most one data point),
    \(
    \Pr\bigl[M(D) \in S\bigr] \;\le\; e^\varepsilon\,\Pr\bigl[M(D') \in S\bigr] \;+\;\delta,
    \)
    where the probability space is over the randomness of the mechanism \( M \).
\end{definition}

\begin{definition}[Laplacian Mechanism]
 Given a function $q: X^n \rightarrow \mathbb{R}^d$, the Laplacian Mechanism is defined as:
 \(
 \mathcal{M}_L(D, q, \epsilon)=q(D)+\left(Y_1, Y_2, \cdots, Y_d\right),
 \)
 where $Y_i$ is i.i.d. drawn from a Laplacian Distribution $\operatorname{Lap}\left(\frac{\Delta_1(q)}{\epsilon}\right)$, and $\Delta_1(q)$ is the $\ell_1$-sensitivity of the function $q$, i.e.,
 \(
 \Delta_1(q)=\sup _{D \sim D^{\prime}}\left\|q(D)-q\left(D^{\prime}\right)\right\|_1.
 \)
 For a parameter $\lambda$, the Laplacian distribution has the density function $\operatorname{Lap}(\lambda)(x)=\frac{1}{2 \lambda} \exp \left(-\frac{|x|}{\lambda}\right)$. The Laplacian Mechanism preserves $\epsilon$-DP.
\end{definition}

\begin{definition}[Sub-Gaussian Vector]
    A random vector \(\mathbf{X} \in \mathbb{R}^d\) is sub-Gaussian if every one-dimensional projection \(\langle \mathbf{X}, \mathbf{v} \rangle\), for any unit vector \(\mathbf{v} \in \mathbb{R}^d\), is a sub-Gaussian random variable. That is, there exists \(\sigma^2 > 0\) such that for all \(\lambda \in \mathbb{R}\),
    \(
    \mathbb{E}\left[e^{\lambda (\langle \mathbf{X}, \mathbf{v} \rangle - \mathbb{E}[\langle \mathbf{X}, \mathbf{v} \rangle])}\right] \leq e^{\frac{\lambda^2 \sigma^2}{2}}.
    \)
\end{definition}

We consider a sparse linear model \( y_i = x_i^\top \boldsymbol{\beta}^* + \varepsilon_i \), where \( \boldsymbol{\beta}^* \in \mathbb{R}^d \) is the underlying unknown parameter vector, and \( \varepsilon_i \) represents the noise term. In the high dimensional setting, we assume \( \boldsymbol{\beta}^* \)  is $s^* $-sparse, {\em i.e., }  \( s^* = |\operatorname{supp}(\boldsymbol{\beta}^*)| \) with \( s^* \ll d \). In DP linear regression, given a dataset where each sample is i.i.d. sampled from the linear model, the goal is to develop some $(\epsilon, \delta)$-DP estimator $\boldsymbol{\beta}^{priv}$ to make the estimation error $\|\boldsymbol{\beta}^{priv}-\boldsymbol{\beta}^*\|_2$ as small as possible.  We adopt the following general assumptions, requiring bounded eigenvalues and a bounded \( \boldsymbol{\beta}^* \), which are common in high dimensional setting \cite{shen2023computationally,sun2020adaptive,cai2020cost} :

\begin{assumption}\label{assumption1}
We assume the following:
\begin{enumerate}
    \item The covariates \(\{x_i\}\) are zero-mean $O(1)$-sub-Gaussian with covariance matrix \(\boldsymbol{\Sigma}=\mathbb{E}[xx^\top]\). The eigenvalues of \(\boldsymbol{\Sigma}\) are bounded as follows:
    \(
    c_l \;\le\; \lambda_{\min}(\boldsymbol{\Sigma}) \;\le\; \lambda_{\max}(\boldsymbol{\Sigma}) \;\le\; c_u.
    \)
    \item \(\|\boldsymbol{\beta}^*\|_2 \,\leq\, c_u^{1/2} r \;=\; L\), where $r$ is a constant.
\end{enumerate}
\end{assumption}
 In this paper, we focus on linear models with heavy-tailed responses. Unlike previous work on  the DP linear model, here we only assume the response (or the noise) has only bounded $1+\zeta$-th moment: 

\begin{assumption}\label{assumption2}
We assume the  noise \(\varepsilon_i\) has zero mean, \(\mathbb{E}[\varepsilon_i] = 0\), and a finite \((1+\zeta)\)-th moment with some $\zeta \in (0,1] $: 
    \begin{equation*}
    v_\zeta \;=\; \dfrac{1}{n} \sum_{i=1}^n \mathbb{E}\bigl(\lvert\varepsilon_i\rvert^{1+\zeta}\bigr) \;<\; \infty.
    \end{equation*}
\end{assumption}

\section{DP Iterative Hard Thresholding with Huber Loss}
\label{sec:dp-iht-h}

In the non-private case, recently \cite{tong2023functional,sun2020adaptive} show that an estimator based on the Huber loss \cite{huber1964robust} can achieve the optional estimation rate:
\begin{equation}\label{eq:huber}
\mathcal{L}_\tau(\boldsymbol{\beta}) 
\;=\; 
\dfrac{1}{n} 
\sum_{i=1}^n 
\ell_\tau\!\bigl(y_i - x_i^\top \boldsymbol{\beta}\bigr), {\text s.t. } \|\boldsymbol{\beta}\|_0\leq s, \|\boldsymbol{\beta}\|_2\leq L, 
\end{equation}
where $s$ is a parameter and  the Huber loss with parameter $\tau$ is defined as 
\begin{equation*}
\ell_\tau(x) \;:=\;
\begin{cases}
\dfrac{x^2}{2}, & \text{if } |x| < \tau, \\[6pt]
\tau\,|x|\; - \;\dfrac{\tau^2}{2}, & \text{otherwise}.
\end{cases}
\end{equation*}
Compared to the squared loss in the classical linear model, 
Huber loss is more robust to outliers and heavy-tailed noise, making it highly effective in non-DP scenarios. However, with the DP constraint, it is difficult to privatize such an estimator due to the following key challenges:
1). The optimization problem (non-convex and non-smooth) associated with Huber loss (\ref{eq:huber}) lacks an efficient algorithm for the solution. For instance, \cite{sun2020adaptive} proposes an adaptive Huber loss estimator but does not provide an efficient algorithm to solve it.
2). If we directly use the previous DP methods to the Huber loss such as objective perturbation~\cite{chaudhuri2009privacy} or gradient perturbation methods \cite{bassily2019private}, then we need to introduce a noise with scale $\Omega(d)$, which is extremely large as we consider the high dimensional setting where $d\gg n$. 

To address these challenges, we propose the DP-IHT-H algorithm. This algorithm:
1). Efficiently leverages the Huber loss to perform a linear regression.
2). Achieves $(\epsilon, \delta)$-DP guarantees with an error bound that only logarithmically depends on the dimension $d$.
In our DP-IHT-H algorithm, we first shrink the original feature vector \(x\) to make it have bounded $\ell_\infty$-norm. Next, we calculate the gradient on the shrunken data using the Huber loss and update our vector via gradient descent. Due to the bounded gradient of Huber loss, the error bound can be controlled with high probability, even for heavy-tailed data. Finally, we perform a "Peeling" step \cite{cai2021cost} to select the top \(s\) indices with the largest magnitudes in the vector while preserving DP.

The Peeling algorithm achieves privacy protection by adding Laplace noise twice. First, noise is added to each entry of the vector to perturb the original data, ensuring that no single component can be directly exposed during the selection process. Based on the noisy values, the algorithm iteratively selects the indices corresponding to the \(s\) largest magnitudes. After selecting the \(s\) indices, additional Laplace noise is added to further obscure the true values of the selected components. The final output is a sparse vector where only the selected components retain their perturbed values, while all other components are set to zero. Compared to using Gaussian noise, this approach results in a smaller noise scale, effectively protecting privacy while maintaining higher accuracy and output quality. In the following, we will provide the privacy and utility guarantees of DP-IHT-H. 

\begin{algorithm}[!tb]
\caption{DP IHT with Huber Loss (DP-IHT-H)}
\label{alg:dp-iht-h}
\textbf{Input}:  \(n\)-size dataset \(\mathbf{D}=\{(y_i, x_i)\}_{i \in [n]}\), Step size \(\eta\), Sparsity level \(s\), Privacy parameters \(\varepsilon, \delta\), Huber loss parameter \(\tau\), Truncation parameter \(K\), Total steps \(T\).

\textbf{Output}: \(\boldsymbol{\beta}^T\)

\begin{algorithmic}[1]
    \STATE Initialize \(\boldsymbol{\beta}^0 = 0\).
    \STATE \textbf{Clipping:} For each \(i \in [n]\), define the truncated sample \(\tilde{x}_i \in \mathbb{R}^d\) by 
    \[
    \tilde{x}_{i, j} = \operatorname{sign}(x_{i, j}) \min \!\Bigl\{\lvert x_{i, j} \rvert, K \Bigr\}, 
    \quad \forall j \in [d].
    \]
    \STATE Denote the truncated dataset as \(\tilde{D}=\{(\tilde{x}_i, \tilde{y}_i)\}_{i=1}^n\).
    \STATE Split the data \(\tilde{D}\) into \(T\) parts \(\{\tilde{D}_t\}_{t=1}^T\), each with \(m=\frac{n}{T}\) samples.
    \FOR{\(t = 0\) to \(T-1\)}
        \STATE \(\displaystyle \boldsymbol{\beta}^{t+0.5} 
        = \boldsymbol{\beta}^t \;-\; \frac{\eta}{m} \,\sum_{i\in \tilde{D}_t} \ell_\tau^{\prime}\left(y_i-\tilde{x}_i^{\top} \boldsymbol{\beta}^t\right) \tilde{x}_i\) 
        \STATE \(\displaystyle \boldsymbol{\beta}^{t+1} 
            = 
            \Pi_L\!\Bigl(\text{Peeling}\!\bigl(\boldsymbol{\beta}^{t+0.5}, \tilde{D}_t, s, \varepsilon, \delta, \tfrac{\eta \tau K}{m}\bigr)\Bigr)\), where $\Pi_L$ is the projection onto the $L$-radius ball. 
    \ENDFOR
    \STATE \textbf{return} \(\boldsymbol{\beta}^T\)
\end{algorithmic}
\end{algorithm}

\begin{algorithm}[!tb]
\caption{Peeling Procedure}
\label{alg:peeling}
\textbf{Input}: 
Vector \(\boldsymbol{x} \in \mathbb{R}^d\) (based on \(\mathbf{D}\)), Sparsity \(s\), Privacy parameters \(\varepsilon, \delta\), Noise scale \(\lambda\).

\textbf{Output}: \(\boldsymbol{x}\big|_S + \tilde{\boldsymbol{w}}_S\)

\begin{algorithmic}[1]
    \STATE Initialize \(S = \emptyset\).
    \FOR{\(i = 1\) to \(s\)}
        \STATE Generate noise \(\boldsymbol{w}_i \in \mathbb{R}^d\) where each component is drawn from:
        \(
        w_{i,j} \sim 
        \operatorname{Lap}\Bigl(\frac{2\lambda\sqrt{3s\log(1/\delta)}}{\varepsilon}\Bigr).
        \)
        \STATE Append 
        \(j^* = \arg\max_{j \in [d] \setminus S}\!\Bigl(|\boldsymbol{x}_j| + w_{i,j}\Bigr)\) to \(S\).
    \ENDFOR
    \STATE Generate \(\tilde{\boldsymbol{w}} \in \mathbb{R}^d\) where each component is drawn from:
    \(
    \tilde{w}_j \sim \operatorname{Lap}\Bigl(\frac{2\lambda\sqrt{3s\log(1/\delta)}}{\varepsilon}\Bigr).
    \)
    \STATE \textbf{return} \(\boldsymbol{x}\big|_S + \tilde{\boldsymbol{w}}_S\).
\end{algorithmic}
\end{algorithm}

\begin{theorem}
\label{thm:dp}
For any $0<\epsilon, 0<\delta<1$, DP-IHT-H is \((\varepsilon,\delta)\)-DP.
\end{theorem}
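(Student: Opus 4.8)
The plan is to isolate the single iteration of Algorithm~\ref{alg:dp-iht-h} that actually touches the record in which the two adjacent datasets differ, show that this iteration is $(\varepsilon,\delta)$-DP on its own, and conclude by post-processing. Three structural observations drive the argument. First, the clipping step applies the deterministic, parameter-free map $x_{i,j}\mapsto\operatorname{sign}(x_{i,j})\min\{|x_{i,j}|,K\}$ to each record in isolation, so if $\mathbf D\sim\mathbf D'$ then the clipped datasets satisfy $\tilde{\mathbf D}\sim\tilde{\mathbf D}'$ and agree on every unchanged record; hence it is harmless to argue directly about $\tilde{\mathbf D}$, and clipping consumes no privacy budget. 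Second, the data are split into the \emph{disjoint} folds $\tilde D_0,\dots,\tilde D_{T-1}$, and each fold $\tilde D_t$ enters the computation only through the vector $\boldsymbol\beta^{t+0.5}$ of the gradient step (the Peeling procedure, Algorithm~\ref{alg:peeling}, reads only that vector, not $\tilde D_t$). Third, the gradient step is a deterministic function of $(\boldsymbol\beta^{t},\tilde D_t)$ and the projection $\Pi_L$ is deterministic, so all of the randomness in iteration $t$ lives inside the Peeling call.

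Next I would fix adjacent $\mathbf D\sim\mathbf D'$ and let $t_0$ be the unique index whose fold $\tilde D_{t_0}$ contains the differing record, so that $\tilde D_t=\tilde D'_t$ for all $t\neq t_0$. Since $\boldsymbol\beta^0=0$ and, for $t<t_0$, $\boldsymbol\beta^{t+1}$ is a randomized function of $\boldsymbol\beta^{t}$ and $\tilde D_t$ only, the two runs can be coupled so that $\boldsymbol\beta^{0},\dots,\boldsymbol\beta^{t_0}$ coincide; for $t>t_0$, each iterate is a randomized function of the previous iterate and the \emph{unchanged} fold $\tilde D_t$, hence a post-processing of $\boldsymbol\beta^{t_0+1}$. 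It therefore suffices to prove that the map $\tilde D_{t_0}\mapsto\boldsymbol\beta^{t_0+1}$, with $\boldsymbol\beta^{t_0}$ held fixed, is $(\varepsilon,\delta)$-DP; and by post-processing under $\Pi_L$ this reduces to the privacy of $\mathrm{Peeling}\bigl(\boldsymbol\beta^{t_0+0.5},\tilde D_{t_0},s,\varepsilon,\delta,\tfrac{\eta\tau K}{m}\bigr)$.

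What remains is a sensitivity calculation feeding a known lemma. Because the Huber derivative is clipped, $|\ell_\tau'(u)|\le\tau$ for every $u$, and each clipped coordinate satisfies $|\tilde x_{i,j}|\le K$; therefore, since $\tilde D_{t_0}$ and $\tilde D'_{t_0}$ differ in a single sample, each coordinate of $\boldsymbol\beta^{t_0+0.5}=\boldsymbol\beta^{t_0}-\tfrac{\eta}{m}\sum_{i\in\tilde D_{t_0}}\ell_\tau'(y_i-\tilde x_i^\top\boldsymbol\beta^{t_0})\,\tilde x_i$ changes by at most $\tfrac{\eta}{m}\cdot\tau\cdot K$. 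Note that a heavy-tailed, unclipped response $y_i$ causes no blow-up here, precisely because $|\ell_\tau'|\le\tau$. Thus the per-coordinate ($\ell_\infty$) sensitivity of the vector handed to Peeling is at most $\lambda:=\tfrac{\eta\tau K}{m}$, which is exactly the noise-scale parameter supplied to Algorithm~\ref{alg:peeling}. It then remains to invoke the privacy guarantee of the Peeling mechanism (cf.\ \cite{cai2021cost}): with sparsity $s$, per-coordinate sensitivity $\lambda$, and fresh Laplace noise of scale $\tfrac{2\lambda\sqrt{3s\log(1/\delta)}}{\varepsilon}$ added to every coordinate in each of the $s$ greedy selection rounds and in the final release, Peeling is $(\varepsilon,\delta)$-DP; this is where the $\sqrt{s\log(1/\delta)}$ factor appears, from combining report-noisy-max with the observation that on any fixed size-$s$ support the perturbation is amplified only to $\sqrt s\,\lambda$ in $\ell_2$, together with an advanced-composition/$(\varepsilon,\delta)$-relaxation over the $s{+}1$ noisy steps. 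Chaining back through the two reductions above finishes the proof.

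I expect the only genuinely delicate ingredient to be this last one: stating and applying the Peeling privacy lemma so that its noise calibration lines up with the per-coordinate sensitivity $\lambda=\eta\tau K/m$ — in particular, being careful that the quantity the lemma consumes is the $\ell_\infty$/per-coordinate sensitivity and not the $\ell_2$-sensitivity (which would carry an extra $\sqrt d$), since this is exactly what keeps the overall noise, and hence the final error bound, only logarithmic in $d$. Everything else is bookkeeping: the coupling across folds, the two post-processing applications, and the elementary bound $|\ell_\tau'|\le\tau$.
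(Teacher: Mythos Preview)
Your proposal is correct and follows essentially the same route as the paper: bound the $\ell_\infty$-sensitivity of the gradient step by $\eta\tau K/m$ using $|\ell_\tau'|\le\tau$ and $\|\tilde x_i\|_\infty\le K$, then invoke the Peeling privacy lemma from \cite{cai2021cost}. Your treatment is in fact more explicit than the paper's in one respect: you spell out the parallel-composition argument arising from the disjoint folds (coupling before $t_0$, post-processing after), whereas the paper simply appeals to ``the composition property'' without distinguishing parallel from sequential composition---your version makes clear why the final guarantee is $(\varepsilon,\delta)$ rather than $(T\varepsilon,T\delta)$.
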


\begin{theorem}\label{theorem2}
If Assumption~\ref{assumption1} and ~\ref{assumption2} hold and assume $n$ is sufficiently large such that 
\(n \geq  \tilde{\Omega}\bigl((s^*)^{\tfrac{2}{3}}\log d\bigr)\). 
Setting 
\(s = O\bigl(s^*\bigr),\)
stepsize \(\eta = O(1)\), $K = O\bigl(\log d\bigr)$, \(T = O\bigl(\log n\bigr)\), and
\( \tau = O\Bigl(\Bigl(\frac{n}{T\,{(s^*)}^{\tfrac{3}{2}}\log^{\tfrac12}\!\bigl(\tfrac{1}{\sigma}\bigr)\,\log d}\Bigr)^{\tfrac{\zeta}{1+\zeta}}\Bigr)\) in Algorithm \ref{alg:dp-iht-h}, then with probability at least 
\(1 - O\Bigl(d^{-1} + T e^{-c_1 n}\Bigr)\) for a constant $c_1$,
we obtain the following bound on the final estimate:
\[
\|\boldsymbol{\beta}^T-\boldsymbol{\beta}^*\|_2 = O\Biggl( (s^*)^{\frac{1}{2}} \left(\frac{\log d}{n}\right)^{\frac{\zeta}{1+\zeta}} 
+ (s^*)^{\frac{1}{2}} \left(\frac{(s^*)^{\frac{1}{2}}\,\log^2 d\,\log(1/\delta)}{n\,\varepsilon}\right)^{\frac{\zeta}{1+\zeta}} \Biggr).
\]
\end{theorem}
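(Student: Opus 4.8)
The plan is to analyze the iterative hard thresholding recursion by tracking the distance $\|\boldsymbol{\beta}^t - \boldsymbol{\beta}^*\|_2$ and showing it contracts geometrically up to an additive error floor, which we then balance by choosing $\tau$ and the other parameters. First I would decompose the half-step update $\boldsymbol{\beta}^{t+0.5} = \boldsymbol{\beta}^t - \tfrac{\eta}{m}\sum_{i \in \tilde D_t} \ell_\tau'(y_i - \tilde x_i^\top \boldsymbol{\beta}^t)\,\tilde x_i$ into (a) the population gradient of the Huber risk, (b) a statistical deviation term, (c) a bias term arising from clipping $x_i$ to $\tilde x_i$ at level $K$, and (d) the bias/truncation error of the Huber loss itself relative to the square loss. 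The data-splitting into $T$ parts is crucial here: on each batch $\tilde D_t$ the estimator $\boldsymbol{\beta}^t$ is independent of the fresh samples, so I can apply concentration conditionally. For the statistical term, since $\tilde x_i$ has bounded $\ell_\infty$-norm $K$ and $\ell_\tau'$ is bounded by $\tau$, each summand has coordinates bounded by $\tau K$, so a Bernstein/sub-exponential bound over the $s^*$-sparse directions (a restricted-eigenvalue / restricted-strong-convexity argument as in Jain–Tewari–Kar or Cai–Wang for IHT) controls the relevant $\ell_\infty$ norm by $O\!\big(\tau K \sqrt{\log d / m}\big)$ plus the moment-dependent bias $O(\tau^{-\zeta} v_\zeta)$ from Huberization — this last term is exactly where Assumption~\ref{assumption2} enters, via the standard bound $|\mathbb{E}[\ell_\tau'(\varepsilon) ] | \lesssim \tau^{-\zeta}\mathbb{E}|\varepsilon|^{1+\zeta}$.

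Next I would handle the Peeling step. Conditioned on $\boldsymbol{\beta}^{t+0.5}$, Lemma-style guarantees for the noisy hard-thresholding oracle (as in Cai–Wang–Zhang's private IHT) give that $\Pi_L(\mathrm{Peeling}(\boldsymbol{\beta}^{t+0.5},\cdot))$ is, up to the projection, an approximate best $s$-sparse approximation of $\boldsymbol{\beta}^{t+0.5}$: there is a deterministic contraction factor from comparing $s$-sparse to $s^*$-sparse supports (requiring $s = O(s^*)$ with a large enough constant), plus the Laplace-noise error. The Laplace noise added in Peeling has scale $\lambda = \tfrac{\eta\tau K}{m}$ inflated by $\tfrac{2\sqrt{3 s \log(1/\delta)}}{\varepsilon}$, and over $s$ selected coordinates its $\ell_2$ norm is $O\!\big(\tfrac{\eta \tau K}{m}\cdot \tfrac{s\sqrt{\log(1/\delta)}}{\varepsilon}\cdot\sqrt{\log d}\big)$ with high probability (union bound over $d$ coordinates and $T$ rounds gives the $d^{-1} + Te^{-c_1 n}$ failure probability, the $e^{-c_1 n}$ part coming from the sub-Gaussian concentration on each batch). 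Combining, I would establish a recursion of the form
\[
\|\boldsymbol{\beta}^{t+1}-\boldsymbol{\beta}^*\|_2 \;\le\; \rho\,\|\boldsymbol{\beta}^t-\boldsymbol{\beta}^*\|_2 \;+\; C\big(\underbrace{\sqrt{s^*}\,\tau^{-\zeta}v_\zeta}_{\text{Huber bias}} + \underbrace{\sqrt{s^*}\,\tau K\sqrt{\tfrac{\log d}{m}}}_{\text{stat.}} + \underbrace{\tfrac{\eta\tau K\, s\sqrt{\log d\,\log(1/\delta)}}{m\varepsilon}}_{\text{privacy}} + \underbrace{(\text{clipping bias})}_{\text{small for }K=O(\log d)}\big)
\]
with $\rho < 1$ guaranteed by $\eta = O(1)$, restricted strong convexity/smoothness of the Huber risk under Assumption~\ref{assumption1}, and $s/s^*$ large enough. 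Unrolling over $T = O(\log n)$ steps drives $\rho^T\|\boldsymbol{\beta}^0-\boldsymbol{\beta}^*\|_2 = \rho^T L$ below the error floor.

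Finally I would substitute $m = n/T$, $K = O(\log d)$, $T = O(\log n)$ and optimize over $\tau$: balancing the Huber-bias term $\sqrt{s^*}\tau^{-\zeta}$ against the combined statistical-plus-privacy term, which is linear in $\tau$ and of order $\tau\cdot\big(\sqrt{s^*}\sqrt{\log d/m} + (s^*)^{3/2}\log^2 d\sqrt{\log(1/\delta)}/(n\varepsilon)\big)$ up to logs, yields $\tau \asymp \big(\tfrac{n}{T (s^*)^{3/2}\log^{1/2}(1/\sigma)\log d}\big)^{\zeta/(1+\zeta)}$ as stated, and back-substitution produces the two-term bound $(s^*)^{1/2}(\log d/n)^{\zeta/(1+\zeta)} + (s^*)^{1/2}\big((s^*)^{1/2}\log^2 d\,\log(1/\delta)/(n\varepsilon)\big)^{\zeta/(1+\zeta)}$. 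The condition $n \ge \tilde\Omega((s^*)^{2/3}\log d)$ is what makes $\rho<1$ survive after all the $s^*$ and $\log d$ factors are absorbed. I expect the main obstacle to be the restricted-strong-convexity / contraction argument for the non-smooth, non-convex Huberized IHT step: one must show that the (random, clipped) Huber gradient, restricted to $O(s^*)$-sparse directions and evaluated on fresh batch samples, behaves like a strongly convex smooth gradient with high probability — this requires a careful uniform concentration over sparse supports together with control of the interaction between the Huber truncation level $\tau$, the clipping level $K$, and the heavy tail of $\varepsilon_i$, and it is where the interplay of all three parameters must be managed simultaneously.
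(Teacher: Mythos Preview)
Your proposal is essentially correct and reaches the same final bound via the same $\tau$-balancing, but the route differs from the paper's in one structural respect. You track the \emph{parameter distance} $\|\boldsymbol{\beta}^t-\boldsymbol{\beta}^*\|_2$ directly and aim for a recursion $\|\boldsymbol{\beta}^{t+1}-\boldsymbol{\beta}^*\|_2\le\rho\|\boldsymbol{\beta}^t-\boldsymbol{\beta}^*\|_2+\text{(bias + stat + privacy)}$; the paper instead tracks the \emph{Huber loss values}, proving a key lemma (their Lemma~\ref{IHTbd}, adapted from the Cai--Wang--Zhang private-IHT analysis) that gives $\mathcal{L}_\tau(\boldsymbol{\beta}^{t+1})-\mathcal{L}_\tau(\boldsymbol{\beta}^*)\le(1-c_l/(\rho c_u))\bigl(\mathcal{L}_\tau(\boldsymbol{\beta}^t)-\mathcal{L}_\tau(\boldsymbol{\beta}^*)\bigr)+c_3\bigl(\sum_i\|\boldsymbol{w}_i^t\|_\infty^2+\|\tilde{\boldsymbol{w}}^t_{S^{t+1}}\|_2^2\bigr)$, iterates this over $T\asymp\log n$ rounds, and only at the end converts to $\|\boldsymbol{\beta}^T-\boldsymbol{\beta}^*\|_2$ via the RSC lower bound $\mathcal{L}_\tau(\boldsymbol{\beta}^T)-\mathcal{L}_\tau(\boldsymbol{\beta}^*)\ge\tfrac{c_l}{2}\|\boldsymbol{\beta}^T-\boldsymbol{\beta}^*\|_2^2-\langle\nabla\mathcal{L}_\tau(\boldsymbol{\beta}^*),\boldsymbol{\beta}^*-\boldsymbol{\beta}^T\rangle$, together with a separate lemma bounding $\|\nabla\mathcal{L}_\tau(\boldsymbol{\beta}^*)\|_\infty\lesssim \tau L t/n$ once $\tau\asymp(n/t)^{1/(1+\zeta)}$. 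The function-value route buys a cleaner treatment of the Peeling step, because the elaborate Peeling lemmas (their Lemmas~\ref{A.31}--\ref{A.3}) compose naturally with RSS/RSC in loss space and the Huber bias enters only once at the end through $\nabla\mathcal{L}_\tau(\boldsymbol{\beta}^*)$; your distance-space route is more transparent about where each error term originates and makes the $\tau$-optimization explicit from the start, at the cost of having to argue the gradient-step contraction and the hard-thresholding expansion factor jointly (this is the obstacle you correctly flag, and it is exactly what the paper sidesteps by working in loss values). Both are valid; the ingredients---RSC/RSS for the Huber loss under Assumption~\ref{assumption1}, the $\tau^{-\zeta}$ Huber bias from Assumption~\ref{assumption2}, and the Laplace-noise bound from Peeling---are the same.
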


\noindent
\textbf{Remark.} 
There are two terms in the above error bound. 
The first one corresponds to the optimal error bound in the non-private case~\cite{sun2020adaptive}. The second term corresponds to the error due to the noises added to ensure DP. 
    In the case $\epsilon=O(1)$ the overall error bound is dominated by the second term.  Moreover,  when \(\zeta = 1\), i.e., the noise has bounded variance, the error rate becomes to 
\(
\tilde{O}\!\bigl(\frac{(s^*)^{3/4}\,\log d}{\sqrt{n\,\varepsilon}}\bigr).
\)

Under similar conditions, that is, assume that \( x \) is sub-Gaussian and \( y \) has a finite \( 2\zeta \) -th moment with $\zeta\geq 1$ -\cite{hu2022high} establishes an upper bound for the privacy component, given by
\(
\tilde{O}\!\left((s^*)^{\frac{1+2\zeta}{1+\zeta}} \cdot \left(\frac{\log^3 n\,\log^2 d}{n^2\epsilon^2}\right)^{\frac{\zeta}{1+\zeta}}\right).
\) Thus, our results can be considered as an extension to the $(1+\zeta)$-th moment case.  For sub-Gaussian \( x \) and \( y \) a related bound is also provided in \cite{cai2021cost}:
\(
\tilde{O}\left(\sqrt{\frac{s^* \log d}{n}}+\frac{s^* \log d \sqrt{\log ^3 n}}{n \varepsilon}\right). 
\)
However, this result is not directly comparable to ours due to the strong assumptions imposed on the covariance matrix.

\section{DP Iterative Hard Thresholding with $\ell_1$ Loss}
\label{sec:dp_iht_l}

In DP-IHT-H, the algorithm achieves its lowest error bound when \(\zeta = 1\). This naturally raises the question: Can we refine the algorithm so that its performance becomes independent of \(\zeta\) even when $\zeta<1$?  In this section, we propose a new method, DP-IHT-L, to address the shortcomings of DP-IHT-H. The primary issue with DP-IHT-H lies in the dependence of the bounded gradient of the Huber loss on \(\zeta\), which results in larger noise being introduced during the ``Peeling'' step as the best value depends on $n$ and $\zeta$ (see Theorem \ref{theorem2}). Thus, it is necessary to use other loss functions that do not have such a parameter $\tau$ and are with a constant gradient bound. Motivated by \cite{shen2023computationally}, in the following we will show the  \(\ell_1\) loss (absolute loss function) satisfies this requirement. See Algorithm \ref{alg:dp-iht-l} for details. 

The basic idea of DP-IHT-L is similar to that of DP-IHT-H. First, we clip \(x\) to ensure it has an $\tilde{O}(1)$ bounded $\ell_\infty$-norm. Then we update \(\boldsymbol{\beta}^t\), but instead of using the gradient of the Huber loss, we replace it with the gradient of the \(\ell_1\) loss, thereby avoiding the dependence on the bound related to \(\zeta\). Finally, we apply the ``Peeling'' algorithm to introduce noise and preserve differential privacy.

\begin{theorem}
\label{thm:dp-l}
For $0<\epsilon$ and $0<\delta<1$, the DP-IHT-L algorithm is \((\varepsilon,\delta)\)-DP.
\end{theorem}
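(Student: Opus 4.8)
The plan is to mirror the privacy analysis of Theorem~\ref{thm:dp}, exploiting the fact that DP-IHT-L shares the exact skeleton of DP-IHT-H: a deterministic coordinate-wise clipping of each $x_i$ to $\ell_\infty$-radius $K$, a disjoint split of the data into $T$ folds $\{\tilde D_t\}$, and then $T$ rounds, each consisting of a gradient step followed by a call to the Peeling procedure (Algorithm~\ref{alg:peeling}) whose output is run through $\Pi_L$. The only structural change is that the Huber-loss gradient $\ell_\tau'(y_i-\tilde x_i^\top\boldsymbol{\beta}^t)\tilde x_i$ is replaced by the $\ell_1$-loss subgradient $\operatorname{sign}(y_i-\tilde x_i^\top\boldsymbol{\beta}^t)\tilde x_i$. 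Since $(\varepsilon,\delta)$-DP is a worst-case property of the mechanism, the only thing that must be re-verified is the sensitivity of the vector fed into Peeling in each round.

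First I would bound the $\ell_\infty$-sensitivity of $\boldsymbol{\beta}^{t+0.5}$ with respect to one data point of $\tilde D_t$, holding $\boldsymbol{\beta}^t$ fixed. Because $|\operatorname{sign}(\cdot)|\le 1$ and clipping guarantees $\|\tilde x_i\|_\infty\le K$, each summand $\tfrac{\eta}{m}\operatorname{sign}(y_i-\tilde x_i^\top\boldsymbol{\beta}^t)\tilde x_i$ has $\ell_\infty$-norm at most $\tfrac{\eta K}{m}$, so changing one sample perturbs $\boldsymbol{\beta}^{t+0.5}$ by at most $\tfrac{\eta K}{m}$ per coordinate (the extra factor $2$ from replace-one adjacency is absorbed into the constant already present in the Peeling noise scale). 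This is exactly the value $\lambda=\tfrac{\eta K}{m}$ that Algorithm~\ref{alg:dp-iht-l} passes to Peeling — the key point being that it no longer carries the factor $\tau$ that inflated the noise in DP-IHT-H.

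Next I would invoke the privacy guarantee of the Peeling procedure: with input $\ell_\infty$-sensitivity $\lambda$ and Laplace noise of scale $\tfrac{2\lambda\sqrt{3s\log(1/\delta)}}{\varepsilon}$, one Peeling call is $(\varepsilon,\delta)$-DP — the $s$ noisy-max selections together with the final noisy release compose (via advanced composition) to $(\varepsilon,\delta)$, precisely as argued for Theorem~\ref{thm:dp}. The clipping is a deterministic per-sample map and $\Pi_L$ together with the implicit hard thresholding is post-processing, so neither affects the guarantee. Finally, since the folds $\{\tilde D_t\}$ are disjoint, any two adjacent datasets differ inside exactly one fold $\tilde D_{t_0}$: rounds $t<t_0$ are identical; round $t_0$ releases $\boldsymbol{\beta}^{t_0+1}$, which is $(\varepsilon,\delta)$-DP in $\tilde D_{t_0}$ by the above (the un-released gradient step is already accounted for in the sensitivity bound $\lambda$); and rounds $t>t_0$ are post-processing of $\boldsymbol{\beta}^{t_0+1}$ and of folds not containing the differing point. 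Combining disjointness (parallel composition) with post-processing yields $(\varepsilon,\delta)$-DP overall.

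I do not anticipate a serious obstacle. The two points that require care are (i) tracking that the $\ell_1$-loss subgradient contributes the bound $1$ rather than $\tau$, so that the Peeling noise scale is $\tfrac{\eta K}{m}$ and the analysis is otherwise verbatim that of Theorem~\ref{thm:dp}; and (ii) making the disjoint-fold argument precise, so that running $T$ rounds does not incur a $T$-fold blow-up in the privacy budget — i.e., that composition is ``parallel'' across folds and ``post-processing'' across rounds, not naive sequential composition.
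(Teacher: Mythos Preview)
Your proposal is correct and follows exactly the approach the paper takes for the analogous Theorem~\ref{thm:dp} (the paper does not spell out a separate proof of Theorem~\ref{thm:dp-l}, implicitly leaving it as the obvious modification, and your argument about parallel composition across the disjoint folds is in fact more explicit than the paper's own treatment). One small bookkeeping note: Algorithm~\ref{alg:dp-iht-l} actually passes $\lambda=\tfrac{2\eta K}{m}$ to Peeling, not $\tfrac{\eta K}{m}$, which is precisely the replace-one $\ell_\infty$-sensitivity you computed before ``absorbing'' the factor $2$.
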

To get our bound, we pose additional assumptions on the noise. 
\begin{assumption}\label{assumption3}
We assume the following:
Let the noise terms \(\{\varepsilon_i\}_{i=1}^n\) be i.i.d. with density \(h_{\varepsilon}(\cdot)\) and distribution function \(H_{\varepsilon}(\cdot)\), and define \(\gamma = \mathbb{E}\bigl[\lvert \varepsilon_i\rvert\bigr]\). There exist constants \(b_0, b_1 > 0\) (possibly depending on \(\gamma\)) such that
\[
\begin{aligned}
& h_{\varepsilon}(x) \;\ge\; \tfrac{1}{b_0},
&& \text{for all } \lvert x\rvert \le 8\,\bigl(\tfrac{c_u}{c_l}\bigr)^{\tfrac12}\,\gamma,
\\[6pt]
& h_{\varepsilon}(x) \;\le\; \tfrac{1}{b_1},
&& \text{for all } x \in \mathbb{R}.
\end{aligned}
\]
\end{assumption}

\noindent
\textbf{Remark:} The lower bound in the density is essentially a (local) Bernstein condition \cite{alquier2019estimation} and is easily satisfied by many heavy-tailed distributions (e.g., any \(t\)-distribution with degrees of freedom \(v > 2\)). The upper bound simply requires that the noise distribution does not have unbounded peaks, which is also satisfied by common distributions such as Gaussian, Laplace, and \(t\)-distributions with \(v > 2\). As a result, Assumption~\ref{assumption3} is quite relaxed for a wide range of heavy-tailed settings. 

Under Assumption~\ref{assumption3}, in $t$-th iteration, the sub-gradient of the loss function 
\begin{align*}
\mathbf{G}_t 
&= \sum_{i\in \tilde{D}_t} \operatorname{sign}\Bigl(x_i^\top \boldsymbol{\beta}^t - y_i\Bigr)\tilde{x}_i \\
&= \sum_{i\in \tilde{D}_t} \operatorname{sign}\bigl(x_i^\top (\boldsymbol{\beta}^t - \boldsymbol{\beta}^*)\bigr) \tilde{x}_i 
   - \sum_{i\in \tilde{D}_t} \operatorname{sign}\bigl(x_i^\top \varepsilon_i\bigr) \tilde{x}_i
\end{align*}
 exhibits two distinct regimes based on the magnitude of \(\|\boldsymbol{\beta}^t - \boldsymbol{\beta}^*\|_2\). These regimes determine the behavior and convergence rate of the algorithm:

\begin{itemize}
    \item \textbf{Large Deviation Regime:} When \(\|\boldsymbol{\beta}^t - \boldsymbol{\beta}^*\|_2\) is relatively large, the sub-gradient \(\mathbf{G}_t\) is dominated by the term \(\sum_{i\in \tilde{D}_t} \operatorname{sign}\bigl(x_i^\top (\boldsymbol{\beta}^t - \boldsymbol{\beta}^*)\bigr) \tilde{x}_i\) leading to a larger error bound of $\mathbf{G}_t$. Hence the updates in this phase are large, allowing the algorithm to converge rapidly during the early iterations. This ensures efficient progress toward reducing the parameter error.

    \item \textbf{Small Deviation Regime:} Once \(\|\boldsymbol{\beta}^t - \boldsymbol{\beta}^*\|_2\) becomes small, the sub-gradient \(\mathbf{G}_t\) is primarily influenced by the noise term \(\sum_{i\in \tilde{D}_t} \operatorname{sign}\bigl(x_i^\top \varepsilon_i\bigr) \tilde{x}_i\). In this case the error bound of \(\mathbf{G}_t\) is small leading to a slower convergence rate that ensures refinement near the underlying parameter \(\boldsymbol{\beta}^*\).

\end{itemize}

Based on these differing bounds, we can establish the following convergence result.

\begin{algorithm}[tb]
\caption{Differentially Private Iterative Hard Thresholding with General Loss (DP-IHT-L)}
\label{alg:dp-iht-l}
\textbf{Input}: \(n\)-size dataset \(\mathbf{D}=\{(y_i, x_i)\}_{i=1}^n\), Step size \(\eta_t\), Sparsity level \(s\), Privacy parameters \(\varepsilon, \delta\), Truncation parameter \(K\), Total steps \(T\).

\textbf{Output}: \(\boldsymbol{\beta}^T\)

\begin{algorithmic}[1]
    \STATE \textbf{Initialization:} Set \(\boldsymbol{\beta}^0 = 0\).
    \STATE \textbf{Clipping:} For each \(i \in [n]\), define the truncated sample \(\tilde{x}_i \in \mathbb{R}^d\) by
    \[
    \tilde{x}_{i,j} = \operatorname{sign}(x_{i,j}) \min\!\Bigl\{\lvert x_{i,j} \rvert, K \Bigr\},\quad \forall j \in [d].
    \]
    \STATE Denote the truncated dataset as \(\tilde{D}=\{(\tilde{x}_i, y_i)\}_{i=1}^n\).
    \STATE \textbf{Data Splitting:} Split \(\tilde{D}\) into \(T\) disjoint subsets \(\{\tilde{D}_t\}_{t=0}^{T-1}\), each containing \(m = \frac{n}{T}\) samples.
    \FOR{\(t = 0\) to \(T-1\)}
        \STATE \(\displaystyle \boldsymbol{\beta}^{t+0.5} 
        = \boldsymbol{\beta}^t - \frac{\eta_t}{m} \sum_{i\in \tilde{D}_t} \operatorname{sign}\Bigl(x_i^\top \boldsymbol{\beta}^t - y_i\Bigr)\tilde{x}_i\)
        \STATE \(\displaystyle \boldsymbol{\beta}^{t+1} 
        = \Pi_{L}\Bigl(\text{Peeling}\Bigl(\boldsymbol{\beta}^{t+0.5}, \tilde{D}_t, s, \varepsilon, \delta, \tfrac{2\eta K}{m}\Bigr)\Bigr)\)
    \ENDFOR
    \STATE \textbf{return} \(\boldsymbol{\beta}^T\)
\end{algorithmic}
\end{algorithm}

\begin{theorem}\label{thm:two-phase}
Under Assumption \ref{assumption1}, \ref{assumption2} and \ref{assumption3} and assume \(n \geq  O\bigl(c_u\,c_l^{-1} s^* \log d\bigr)\). Set \(s = \Omega\bigl((c_u / c_l)^8 (b_0 / b_1)^8 s^*\bigr)\), $K=O(\log d)$, and the initial step size \(\eta_0\) satisfying
\[
\left[
\tfrac{c_l^{1/2}\,\|\boldsymbol{\beta}_0 - \boldsymbol{\beta}^*\|_2}{8\,n\,c_u}, \,
\tfrac{3c_l^{1/2}\,\|\boldsymbol{\beta}_0 - \boldsymbol{\beta}^*\|_2}{8\,n\,c_u}
\right],
\] 
then for the sequence \(\{\boldsymbol{\beta}^t\}\) generated by Algorithm~\ref{alg:dp-iht-l}, there are two distinct convergence phases: 
\begin{description}
    \item[\textbf{Phase One:}] 
    When \(\|\boldsymbol{\beta}^t - \boldsymbol{\beta}^*\|_2 \ge 8\,c_l^{-1/2}\,\gamma\), using \(\eta_t = (1 - c_1)^t \eta_0\) with \(c_1 = O(c_l\,c_u^{-1})\) ensures
    \[
    \|\boldsymbol{\beta}^{t+1} - \boldsymbol{\beta}^*\|_2 
    \;\;\le\;\; 
    (1 - c_1)^{t+1}\|\boldsymbol{\beta}_0 - \boldsymbol{\beta}^*\|_2 
    \;+\; 
     O\bigl(\sqrt{\boldsymbol{W}}\bigr)\bigr). 
    \]
    
    \item[\textbf{Phase Two:}] 
    Once \(\|\boldsymbol{\beta}^t - \boldsymbol{\beta}^*\|_2 \le 8\,c_l^{-1/2}\,\gamma\), switching to a constant step size \(\eta_t = O\bigl(c_l^{1/2} b_1^2 (n b_0 c_u)^{-1}\bigr)\) yields
    \[
    \|\boldsymbol{\beta}^{t+1} - \boldsymbol{\beta}^*\|_2 
    \;\le\;
    (1 - c_2^*)\,\|\boldsymbol{\beta}^t - \boldsymbol{\beta}^*\|_2 
    \;+\;
     O\bigl(\sqrt{\boldsymbol{W}}\bigr)\bigr).
    \]
\end{description}
Here, 
\[
 O\bigl(\sqrt{\boldsymbol{W}}\bigr)\bigr)
\;=\;
O\Bigl(\tfrac{T \bigl(s^*\bigr)^{3/2}\,\log d\,\bigl(\log(1/\delta)\bigr)^{1/2}\,\log (T/n)}{n\,\varepsilon}\Bigr)
\]
represents the cumulative effect of the Laplace noise, and \(c_2^* \in (0,1)\) indicates a strict contraction rate once \(\boldsymbol{\beta}^t\) is sufficiently close to the true parameter vector \(\boldsymbol{\beta}^*\).
\end{theorem}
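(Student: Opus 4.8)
\noindent\emph{Proof plan.}
The strategy is to distill the randomized iteration into a deterministic ``noisy iterative hard thresholding with restricted curvature'' recursion, following the template of the noisy IHT analyses in \cite{cai2021cost,hu2022high}, and then to supply the one genuinely new ingredient: a \emph{restricted one-point monotonicity} (curvature) bound for the sign-gradient of the $\ell_1$ loss that automatically produces the two regimes in the statement. Fix one iteration, write $\Delta^t=\beta^t-\beta^*$, let $I_t\supseteq\operatorname{supp}(\beta^t)\cup\operatorname{supp}(\beta^{t+1})\cup\operatorname{supp}(\beta^*)$ with $|I_t|\le 2s+s^*$, and let $w_t$ be the Laplace vector injected by Peeling. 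Three facts give the per-step estimate. First, since $\beta^{t+1}$ is (up to $w_t$) the best $s$-sparse approximation of $\beta^{t+0.5}$ and $\Pi_L$ is non-expansive with $\|\beta^*\|_2\le L$, the standard hard-thresholding inequality yields $\|\beta^{t+1}-\beta^*\|_2\le\bigl(1+O(\sqrt{s^*/s})\bigr)\,\|(\beta^{t+0.5}-\beta^*)_{I_t}\|_2+\|w_t\|_2$. Second,
\[
\|(\beta^{t+0.5}-\beta^*)_{I_t}\|_2^2=\|\Delta^t\|_2^2-\tfrac{2\eta_t}{m}\langle \mathbf G_t,\Delta^t\rangle+\tfrac{\eta_t^2}{m^2}\|(\mathbf G_t)_{I_t}\|_2^2,
\]
where $\|(\mathbf G_t)_{I_t}\|_2\le\sqrt{2s+s^*}\,\|\mathbf G_t\|_\infty\le\sqrt{2s+s^*}\,mK$ because $|\tilde x_{i,j}|\le K$. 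Third, everything therefore reduces to a lower bound on the curvature term $\tfrac1m\langle\mathbf G_t,\Delta^t\rangle=\tfrac1m\sum_{i\in\tilde D_t}\operatorname{sign}(x_i^\top\Delta^t-\varepsilon_i)(\tilde x_i^\top\Delta^t)$.

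For the curvature term I would first compute it in population. Conditioning on $x$ and using $\varepsilon\perp x$, $\mathbb E[\operatorname{sign}(x^\top\Delta-\varepsilon)\mid x]=2H_\varepsilon(x^\top\Delta)-1$, so $\mathbb E[\operatorname{sign}(x^\top\Delta-\varepsilon)(x^\top\Delta)]=\mathbb E_x\!\bigl[(2H_\varepsilon(x^\top\Delta)-1)(x^\top\Delta)\bigr]$ up to a bias of order $b_0^{-1}$ coming from the fact that $\mathbb E[\varepsilon]=0$ does not force $H_\varepsilon(0)=\tfrac12$ (controlled via Assumption~\ref{assumption3}). Assumption~\ref{assumption3} gives $h_\varepsilon\ge 1/b_0$ on $[-R,R]$ with $R=8(c_u/c_l)^{1/2}\gamma$ together with monotonicity of $H_\varepsilon$, whence $(2H_\varepsilon(v)-1)\,v\ge\tfrac{2}{b_0}\min\{v^2,R|v|\}$ for all $v$. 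Taking $\mathbb E_x$ and using that $x$ is $O(1)$-sub-Gaussian with $c_l\le\lambda_{\min}(\Sigma)\le\lambda_{\max}(\Sigma)\le c_u$: in the \emph{small-deviation} regime $\|\Delta\|_2\lesssim c_l^{-1/2}\gamma$ the truncation at $R$ is essentially inactive, so $\mathbb E_x[\min\{(x^\top\Delta)^2,R|x^\top\Delta|\}]\gtrsim\Delta^\top\Sigma\Delta\gtrsim c_l\|\Delta\|_2^2$; in the \emph{large-deviation} regime $\|\Delta\|_2\gtrsim c_l^{-1/2}\gamma$ the linear branch dominates and the same expectation is $\gtrsim R\,\mathbb E|x^\top\Delta|\gtrsim\gamma\|\Delta\|_2$ (up to $c_l,c_u$ factors). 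This is precisely the dichotomy: curvature $\gtrsim b_0^{-1}c_l\|\Delta\|_2^2$ (Phase Two) versus $\gtrsim b_0^{-1}\gamma\|\Delta\|_2$ (Phase One).

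Next I would upgrade this to an \emph{empirical} bound valid simultaneously over all admissible deviations, i.e.\ over $\{\Delta:\|\Delta\|_0\le 2s+s^*,\ \|\Delta\|_2\le 2L\}$. Because $(x,\varepsilon)\mapsto\operatorname{sign}(x^\top\Delta-\varepsilon)\tilde x$ is coordinatewise bounded by $K$, a union bound over the $\binom{d}{2s+s^*}$ supports with a net on each $\ell_2$-ball gives, with probability $1-O(d^{-1})$, $\bigl|\tfrac1m\langle\mathbf G_t,\Delta\rangle-\mathbb E[\cdot]\bigr|\le O\!\bigl(K\sqrt{s^*\log d/m}\bigr)(\|\Delta\|_2+\gamma)$ for all such $\Delta$ and all $t$; a further union bound (sub-Gaussian tails, $K=\Theta(\log d)$) ensures clipping never alters $\operatorname{sign}(x_i^\top\beta^t-y_i)$ on the sampled data outside an event of probability $O(Te^{-c_1 n})$. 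The assumption $n\ge O(c_uc_l^{-1}s^*\log d)$ makes this deviation — and the term $\eta_t^2(2s+s^*)K^2$ — lower-order against the curvature gain, while $s=\Omega((c_u/c_l)^8(b_0/b_1)^8 s^*)$ pins the hard-thresholding blow-up $1+O(\sqrt{s^*/s})$ close enough to $1$ and absorbs the constants from the bias, the clipping, and the support mismatch. Assembling the three facts, completing the square in $\|\Delta^t\|_2$, and plugging in the prescribed step sizes then gives $\|\Delta^{t+1}\|_2\le(1-c_1)\|\Delta^t\|_2+\|w_t\|_2+(\text{lower order})$ in the large-deviation regime with $c_1=\Theta(c_l/c_u)$, and $\|\Delta^{t+1}\|_2\le(1-c_2^*)\|\Delta^t\|_2+\|w_t\|_2+(\text{lower order})$ in the small-deviation regime with $c_2^*\in(0,1)$ of order $c_l^{3/2}b_1^2/(b_0c_u^{3/2})$. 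Unrolling each geometric recursion and summing the Laplace contributions — each $\|w_t\|_2$ is, with high probability, $O\!\bigl(\tfrac{\eta_t K}{m}\cdot\tfrac{s\sqrt{\log(1/\delta)\log d}}{\varepsilon}\bigr)$, so with $K=O(\log d)$, $m=n/T$, $s=O(s^*)$ and the stated $\eta$ they accumulate to $O(\sqrt{\boldsymbol W})$ as defined in the statement — yields the two displayed bounds.

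\medskip\noindent
The \textbf{main obstacle} is the curvature step: turning the local-Bernstein Assumption~\ref{assumption3} into a restricted one-point monotonicity bound for the sign-gradient with the correct two-regime behavior, and — the harder part — making it hold \emph{uniformly} over all $O(s^*)$-sparse deviations of norm up to $2L$ while simultaneously controlling (a) the bias from $H_\varepsilon(0)\neq\tfrac12$, (b) the clipping mismatch $\tilde x$ vs $x$ inside the sign, and (c) the gap between the support returned by Peeling and $\operatorname{supp}(\beta^*)$. The constants incurred in (a)--(c) are exactly what dictate the large factor $(c_u/c_l)^8(b_0/b_1)^8$ in the choice of $s$.
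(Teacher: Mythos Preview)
Your skeleton matches the paper's exactly: expand $\|\beta^{t+0.5}-\beta^*\|_2^2$ via the parallelogram identity, lower-bound the cross term $\langle\mathbf G_t,\Delta^t\rangle$ in two regimes, upper-bound $\|\mathbf G_t\|$, and absorb the Peeling noise into the additive $\sqrt{\boldsymbol W}$ term. The only substantive difference is in how the curvature bound is obtained. The paper does not derive it: it invokes convexity to get $\langle\mathbf G_t,\Delta^t\rangle\ge f(\beta^t)-f(\beta^*)$ and then cites Lemma~1 of \cite{shen2023computationally} for the two-regime sharpness inequality $f(\beta^t)-f(\beta^*)\gtrsim n\,c_l^{1/2}\|\Delta^t\|_2$ (Phase One) and its quadratic analogue (Phase Two). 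Your route --- computing $\mathbb E[\operatorname{sign}(x^\top\Delta-\varepsilon)(x^\top\Delta)]$ directly from the density hypothesis --- is more self-contained and makes the role of Assumption~\ref{assumption3} explicit, and your treatment of the hard-thresholding blow-up $1+O(\sqrt{s^*/s})$ is more careful than the paper's (which simply asserts $\|\beta^{t+1}-\beta^*\|_2^2\le\|\beta^{t+0.5}-\beta^*\|_2^2+\boldsymbol W_t$).

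There is, however, one point where your derivation undershoots. In Phase One you bound $(2H_\varepsilon(v)-1)\,v\ge\tfrac{2}{b_0}\min\{v^2,R|v|\}$ via the density lower bound, which after taking $\mathbb E_x$ gives curvature $\gtrsim b_0^{-1}\gamma\,\|\Delta\|_2$. But since $\int h_\varepsilon\le 1$ forces $b_0\ge 2R$, this can be arbitrarily weaker than what the theorem claims, and with the stated step size it would \emph{not} produce the contraction constant $c_1=O(c_l/c_u)$. The sharp Phase-One bound does not use Assumption~\ref{assumption3} at all: Markov on $|\varepsilon|$ gives $(2H_\varepsilon(v)-1)\,v\ge |v|-2\gamma$ for every $v$, whence $\mathbb E_x[(2H_\varepsilon(x^\top\Delta)-1)(x^\top\Delta)]\ge\mathbb E_x|x^\top\Delta|-2\gamma\gtrsim c_l^{1/2}\|\Delta\|_2$ once $\|\Delta\|_2\ge 8c_l^{-1/2}\gamma$. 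This is what the sharpness lemma in \cite{shen2023computationally} encodes, and it is what makes $c_1$ depend only on $c_l,c_u$. The density lower bound is genuinely needed only for Phase Two, where $|x^\top\Delta|$ is typically small and the Markov argument is vacuous.
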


\noindent
Because the sub-gradient operates under two different regimes, the overall estimation error follows a two-phase pattern. In Phase~One, the sub-gradient is governed by the smoothness property, leading to rapid convergence from larger errors. In Phase~Two, the strong convexity takes over once the estimator is sufficiently close to \(\boldsymbol{\beta}^*\), and the convergence becomes linear in nature. The \( O\bigl(\sqrt{\boldsymbol{W}}\bigr)\bigr)\) term arises from comparing against the exact parameter \(\boldsymbol{\beta}^*\), which is not affected by the additional noise injected via the peeling procedure. With these two phases established, we can derive a global error bound as follows.

\begin{theorem}\label{thm:error-bound}
Consider the same settings as in Theorem \ref{thm:two-phase}, with probability at least 
\(
1 - \exp\Bigl(-\,C\,s^*\,\log \Bigl(\tfrac{2\,d}{s^*}\Bigr)\Bigr),
\)
after at most
\[
T=O\Bigl(\log \Bigl(\tfrac{\|\boldsymbol{\beta}_0 - \boldsymbol{\beta}^*\|_2}{\gamma}\Bigr) 
\;+\; 
\log \Bigl(n\,\gamma\,b_0^{-1}\,\log\bigl(\tfrac{2\,d}{s^*}\bigr)\Bigr)\Bigr)
\]
iterations, Algorithm~\ref{alg:dp-iht-l} produces an estimator \(\boldsymbol{\beta}^T\) satisfying
\[
\|\boldsymbol{\beta}^T - \boldsymbol{\beta}^*\|_2 
\;\le\; 
O\Bigl(\frac{(s^*)^{3/2}\,\log d \;\bigl(\log(\tfrac{1}{\delta})\bigr)^{1/2}\,\log n}{n\,\varepsilon}\Bigr).
\]
\end{theorem}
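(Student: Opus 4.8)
The plan is to chain the two recursions of Theorem~\ref{thm:two-phase} into a single global bound: first bound the number of Phase~One iterations needed to bring the iterate into the ball $\{\boldsymbol\beta:\|\boldsymbol\beta-\boldsymbol\beta^*\|_2\le 8c_l^{-1/2}\gamma\}$, then bound the number of additional Phase~Two iterations needed to reach the noise floor, and finally sum the geometric series of the per-iteration Laplace noise. Throughout, I condition on the event of probability at least $1-\exp(-Cs^*\log(2d/s^*))$ on which both recursions of Theorem~\ref{thm:two-phase} hold; this event already accounts for the concentration of the truncated (sub-)gradients and the restricted-eigenvalue/sign-vector estimates on each of the $T$ disjoint data splits, and since $T=O(\log n)$ while the per-split failure probability is $\exp(-\Omega(s^*\log(d/s^*)))$, the union bound over splits is absorbed into the stated probability.

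\textbf{Phase One.} Since $\boldsymbol\beta^0=0$, Assumption~\ref{assumption1} gives $\|\boldsymbol\beta_0-\boldsymbol\beta^*\|_2\le L=c_u^{1/2}r$. With the decaying schedule $\eta_t=(1-c_1)^t\eta_0$ and the constant rate $c_1=\Theta(c_l c_u^{-1})$, the recursion $\|\boldsymbol\beta^{t+1}-\boldsymbol\beta^*\|_2\le(1-c_1)^{t+1}\|\boldsymbol\beta_0-\boldsymbol\beta^*\|_2+O(\sqrt{\boldsymbol W})$ drives the geometric term below $4c_l^{-1/2}\gamma$ after $t_1=O\!\big(c_1^{-1}\log(Lc_l^{1/2}/\gamma)\big)=O\!\big(\log(\|\boldsymbol\beta_0-\boldsymbol\beta^*\|_2/\gamma)\big)$ steps; because $\gamma=\mathbb E|\varepsilon_i|\le v_\zeta^{1/(1+\zeta)}$ is a fixed constant, $t_1=O(1)$. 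Since $n$ is large enough that $O(\sqrt{\boldsymbol W})\le 4c_l^{-1/2}\gamma$, this gives $\|\boldsymbol\beta^{t_1}-\boldsymbol\beta^*\|_2\le 8c_l^{-1/2}\gamma$, so Phase~Two applies; moreover $(1-c_2^*)\cdot 8c_l^{-1/2}\gamma+O(\sqrt{\boldsymbol W})\le 8c_l^{-1/2}\gamma$ under the same smallness of $\sqrt{\boldsymbol W}$, so this ball is invariant under the Phase~Two update and the two recursions chain cleanly without loss.

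\textbf{Phase Two and conclusion.} Unrolling the Phase~Two contraction from iteration $t_1$ gives $\|\boldsymbol\beta^{t_1+k}-\boldsymbol\beta^*\|_2\le(1-c_2^*)^k\,8c_l^{-1/2}\gamma+\sum_{j\ge0}(1-c_2^*)^j O(\sqrt{\boldsymbol W})=(1-c_2^*)^k\,8c_l^{-1/2}\gamma+O(\sqrt{\boldsymbol W}/c_2^*)$. Choosing $k$ so the first term falls to the level of the noise floor requires $k=O\!\big((c_2^*)^{-1}\log(c_2^*c_l^{-1/2}\gamma/\sqrt{\boldsymbol W})\big)=O\!\big(\log(n\gamma b_0^{-1}\log(2d/s^*))\big)$ once the Phase~Two step size $\eta=O(c_l^{1/2}b_1^2(nb_0c_u)^{-1})$ and the expression for $\sqrt{\boldsymbol W}$ are substituted; this is exactly the second logarithmic term in the stated iteration count, so $T=t_1+k$ has the advertised form and $\|\boldsymbol\beta^T-\boldsymbol\beta^*\|_2=O(\sqrt{\boldsymbol W}/c_2^*)$. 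Substituting $\sqrt{\boldsymbol W}=O\!\big(\tfrac{T(s^*)^{3/2}\log d\,(\log(1/\delta))^{1/2}\log(T/n)}{n\varepsilon}\big)$ with $T=O(\log n)$ (valid since $\gamma,b_0,L$ are $O(1)$ and $\log\log d=O(\log n)$) and $c_2^*=\Theta(1)$ with respect to the logarithmic factors, and collecting all logs into a single $\log n$, yields $\|\boldsymbol\beta^T-\boldsymbol\beta^*\|_2=O\!\big(\tfrac{(s^*)^{3/2}\log d\,(\log(1/\delta))^{1/2}\log n}{n\varepsilon}\big)$.

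The hard part is entirely in Theorem~\ref{thm:two-phase} — the two-regime control of $\mathbf G_t$, the restricted-strong-convexity argument behind the contraction factor $c_2^*$, and the sparsity oversampling $s=\Omega((c_u/c_l)^8(b_0/b_1)^8 s^*)$ that makes the IHT step contract — all of which we may assume. What remains here is organizational: pinning down $t_1$ and $k$ so that $T$ has exactly the stated form, verifying that $8c_l^{-1/2}\gamma$ is an invariant region so the phases can be concatenated, and making sure the dependence of $c_2^*$ and of the step sizes on $c_l,c_u,b_0,b_1$ does not surface in the final rate beyond the $\tilde O(\cdot)$ bookkeeping. A secondary point to watch is the union bound over the $T$ splits when asserting the global high-probability event, but as noted this is harmless since $T=O(\log n)$.
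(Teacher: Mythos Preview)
Your proposal is correct and follows essentially the same approach as the paper: you chain the Phase~One and Phase~Two recursions from Theorem~\ref{thm:two-phase}, count the iterations $T_1=O(\log(\|\boldsymbol\beta_0-\boldsymbol\beta^*\|_2/\gamma))$ and $T_2=O(\log(n\gamma b_0^{-1}\log(2d/s^*)))$ needed in each, and observe that the final error is the noise floor $O(\sqrt{\boldsymbol W})$. Your write-up is in fact slightly more careful than the paper's on two secondary points---the invariance of the ball $\{\|\boldsymbol\beta-\boldsymbol\beta^*\|_2\le 8c_l^{-1/2}\gamma\}$ under the Phase~Two update, and the union bound over the $T$ data splits---neither of which the paper makes explicit.
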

Overall, our DP-IHT-L algorithm achieves an error bound of  
\(
\tilde{O}\!\Bigl(\frac{(s^*)^{3/2} \log d}{n \varepsilon}\Bigr)
\)
for high-dimensional sparse data with heavy-tailed distributions. In comparison, the DP-IHT-H algorithm attains an error of  
\(
\tilde{O}\!\Bigl(s^{*\frac{1+2\zeta}{2+2\zeta}} \Bigl(\frac{\log^2 d}{n \varepsilon}\Bigr)^{\frac{\zeta}{1+\zeta}}\Bigr),
\)
which is larger than our previous bound. Similarly, when \(\zeta=1\) , our error bound is almost the same as in \cite{hu2022high}, which is given by  
\(
\tilde{O}\!\Bigl(s^{*\frac{1+2\zeta}{1+\zeta}} \Bigl(\frac{\log^3 n \log^2 d}{n^2 \varepsilon^2}\Bigr)^{\frac{\zeta}{1+\zeta}}\Bigr).
\) Note that in the case of  \(\zeta=1\), \cite{cai2021cost} achieves an error bound of  
\(
\tilde{O}\!\Bigl(\sqrt{\frac{s^* \log d}{n}}+\frac{s^* \log d}{n \varepsilon}\Bigr). 
\)
However, their analysis requires the strong condition that  
\(
\|x_I\|_\infty \leq O\!\Bigl(\frac{1}{\sqrt{|I|}}\Bigr)
\)
for any index set \(I \subseteq [d]\). Thus, their results is imcomparable with ours. 
\section{Experiments}
\label{sec:experiments}
In this section, we evaluate the practical performance of our proposed algorithms on both synthetic and real-world datasets.
\subsection{Experimental Setup}
\label{subsec:data_generation}

\paragraph{Synthetic Data.}
We generate synthetic data for linear regression following the model
\[
    y_i = \langle x_i, \boldsymbol{\beta}^* \rangle + \varepsilon_i, \quad i = 1, \ldots, n,
\]
where each feature vector \(x_i \in \mathbb{R}^d\) is drawn from \(\mathcal{N}(\mathbf{0}, \mathbf{I}_d)\). The true coefficient vector \(\boldsymbol{\beta}^*\) has \(s^*\) nonzero entries, which are sampled from a scaled standard normal distribution, with the remaining entries set to zero. To simulate heavy-tailed noise, the error terms \(\{\varepsilon_i\}\) are drawn from a Student-\(t\) distribution. Specifically, we set the degrees of freedom \(\nu\) to 1.75 when \(\zeta = 0.5\) and to 3 when \(\zeta = 1\).

\paragraph{Real-World Data.}
For additional validation, we evaluate our algorithms on real-world datasets, including: \textbf{NCI-60 cancer cell line dataset}~\cite{Reinhold2012}, with \(n = 59\) samples and \(d = 14{,}342\) features. And \textbf{METABRIC} (Molecular Taxonomy of Breast Cancer International Consortium) dataset~\cite{Curtis2012,Pereira2016}, with \(n = 1{,}904\) samples and \(d = 24{,}368\) features. These datasets, commonly found in publicly available biological databases, are known to exhibit heavy-tailed distributions.

\paragraph{Parameter Choices.}  
Unless otherwise specified, the default parameter settings for the DP-IHT-H algorithm are as follows: \(s^* = 5\), \(\epsilon = 0.5\), \(\zeta = 1\), \(\delta = 1 / n^{1.1}\), and the Huber loss parameter \(\tau = 1\). For DP-IHT-L, the same parameters are applied. The truncation parameter \(K\) is set to \(\log d\). Across all algorithms, we use a constant step size of \(\eta = 0.01\).

\paragraph{Evaluation Metrics.}
We evaluate algorithm performance using the \(\ell_2\)-estimation error, defined as \(\|\hat{\boldsymbol{\beta}} - \boldsymbol{\beta}^*\|_2\). As there is no underlying parameter $\boldsymbol{\beta}^*$ for real-world data, we will use the adaHuher algorithm in~\cite{sun2020adaptive}, which achieves the optimal rate, to approximate $\boldsymbol{\beta}^*$.  Each experiment is repeated 20 times, and we report the average results to ensure statistical reliability.

\paragraph{Baselines.} 
We focus on the DP-IHT-H and DP-IHT-L algorithms. As we mentioned above, there is no previous research on the DP sparse model with heavy-tailed response that only has the $1+\zeta$-th moment with $\zeta\in (0, 1)$. For comparison, we include:
\begin{itemize}
    \item \textbf{DP-SLR}: Differentially private sparse linear regression under regular (light-tailed) data~\cite{cai2021cost}. DP-SLR could achieve the almost optimal rate in this setting. 
    \item \textbf{adaHuber}: Non-DP linear regression in high-dimensional sparse heavy-tailed settings using the adaptive Huber loss method~\cite{sun2020adaptive}. adaHuber achieves the optimal rate in the non-private setting. 
\end{itemize}

\subsection{Results on Synthetic Data}
\label{subsec:results_synthetic_data}
We first investigate whether DP-IHT-H achieves better performance under heavy tails (i.e., small \(\zeta\)), whether it outperforms differentially private algorithms designed for regular data distributions, and the performance gap between DP-IHT-H and the non-private optimal  algorithm.

\begin{figure}[t]
    \centering
    \begin{subfigure}[b]{0.45\textwidth}
        \includegraphics[width=\textwidth]{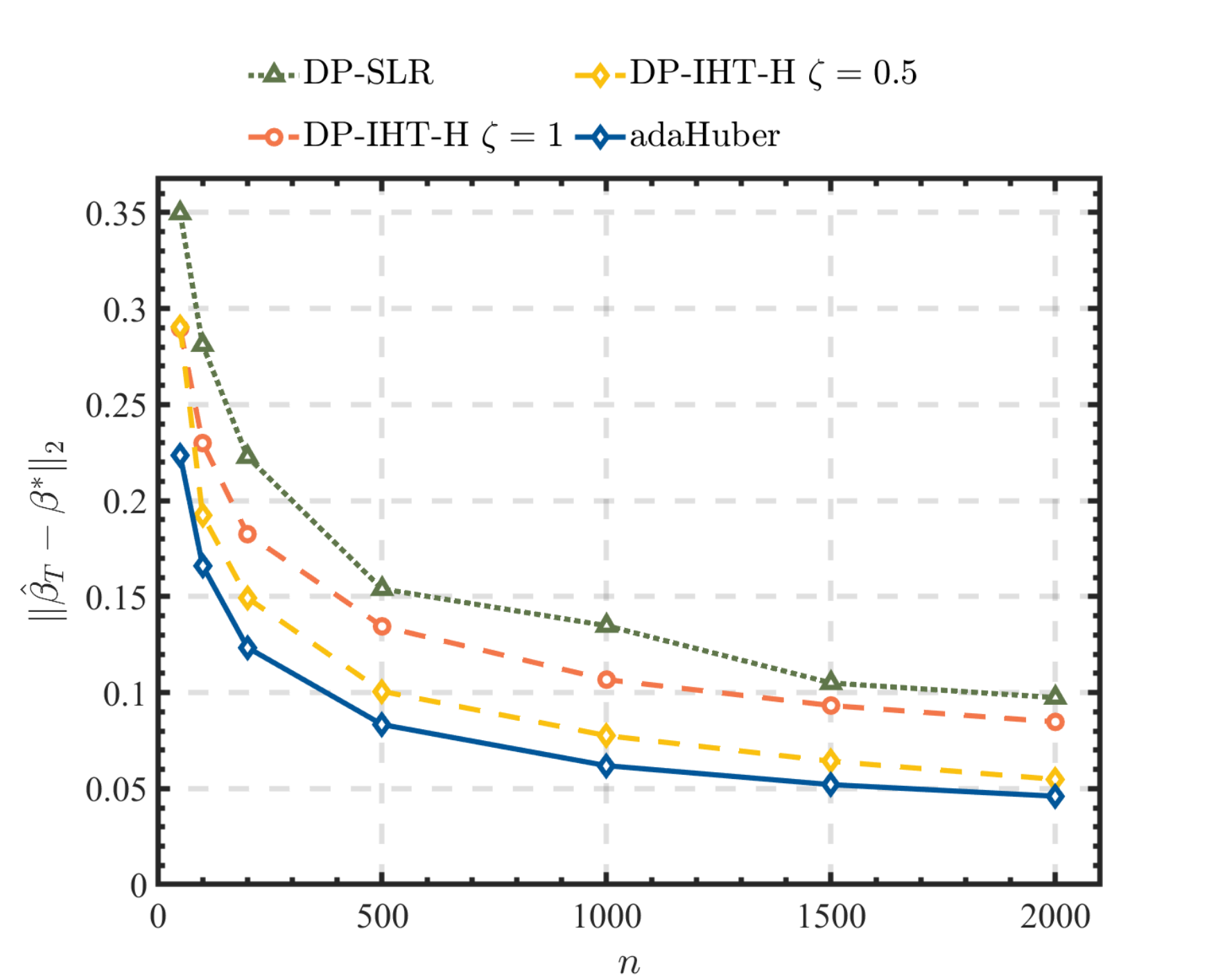}
        \caption{Comparison for different \(\zeta\)}
        \label{1.a}
    \end{subfigure}
    \hfill
    \begin{subfigure}[b]{0.45\textwidth}
        \includegraphics[width=\textwidth]{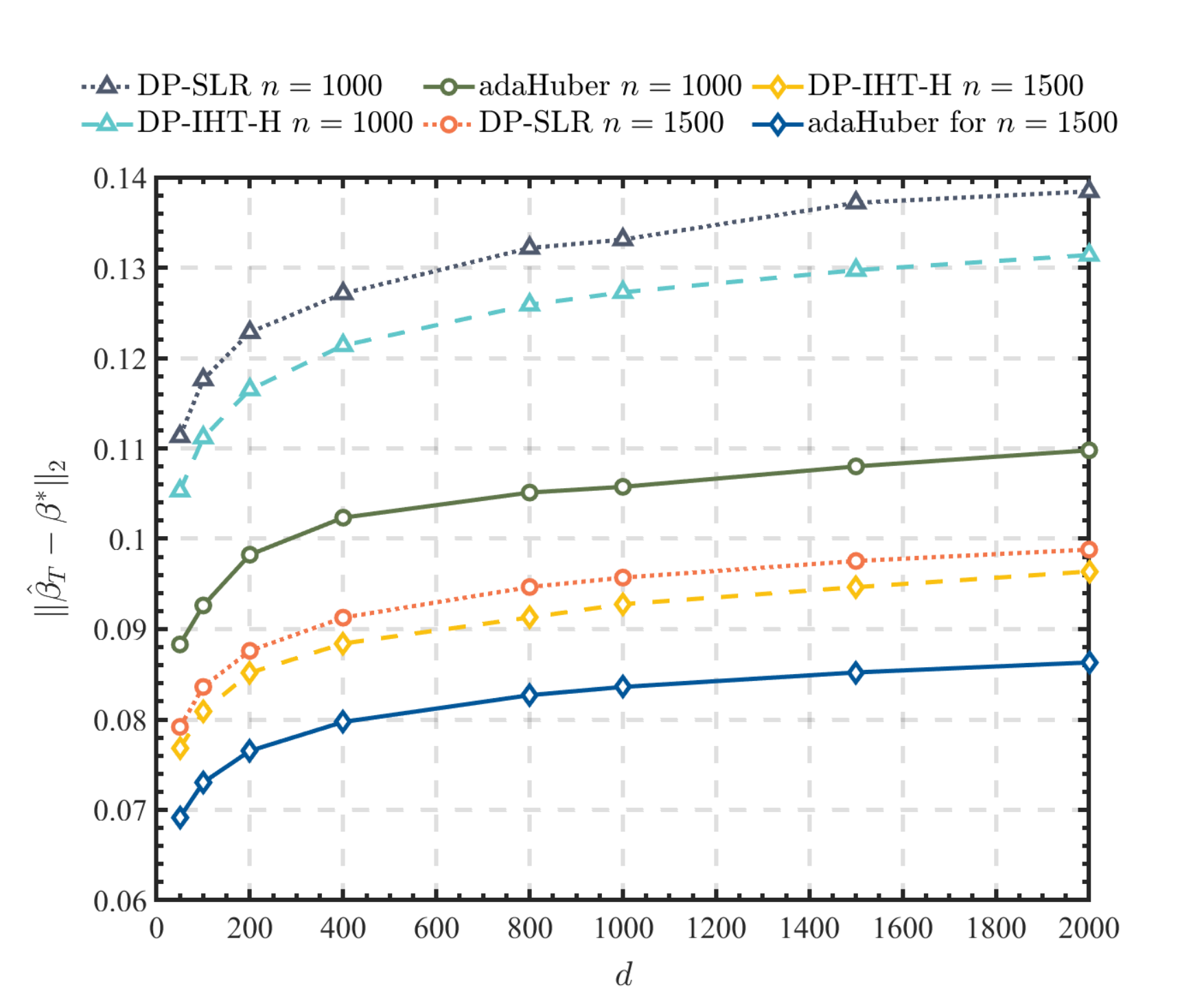}
        \caption{Performance vs.\ \(d\)}
        \label{1.b}
    \end{subfigure}
    
    \begin{subfigure}[b]{0.45\textwidth}
        \includegraphics[width=\textwidth]{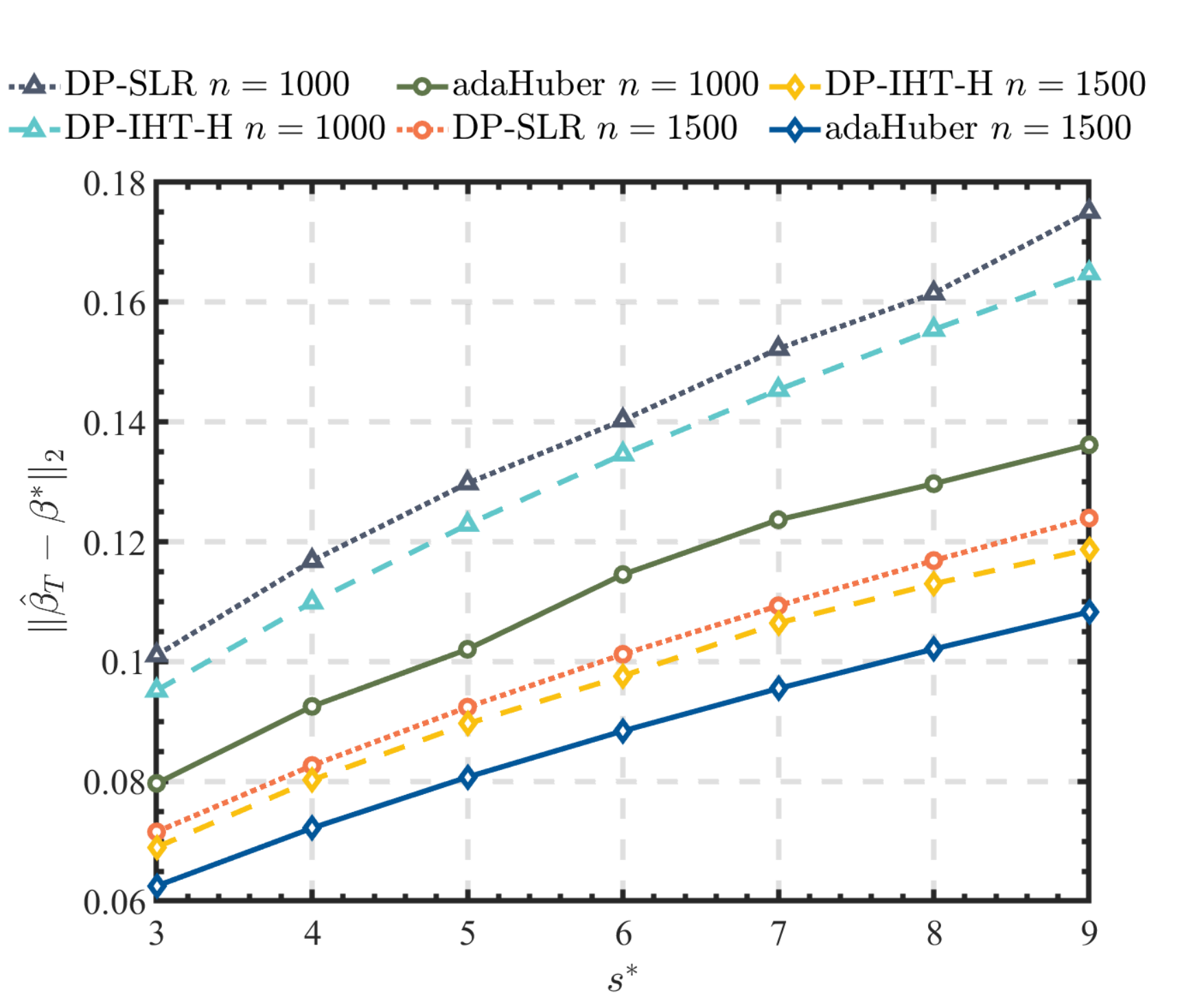}
        \caption{Performance vs.\ \(s^*\)}
        \label{1.c}
    \end{subfigure}
    \hfill
    \begin{subfigure}[b]{0.45\textwidth}
        \includegraphics[width=\textwidth]{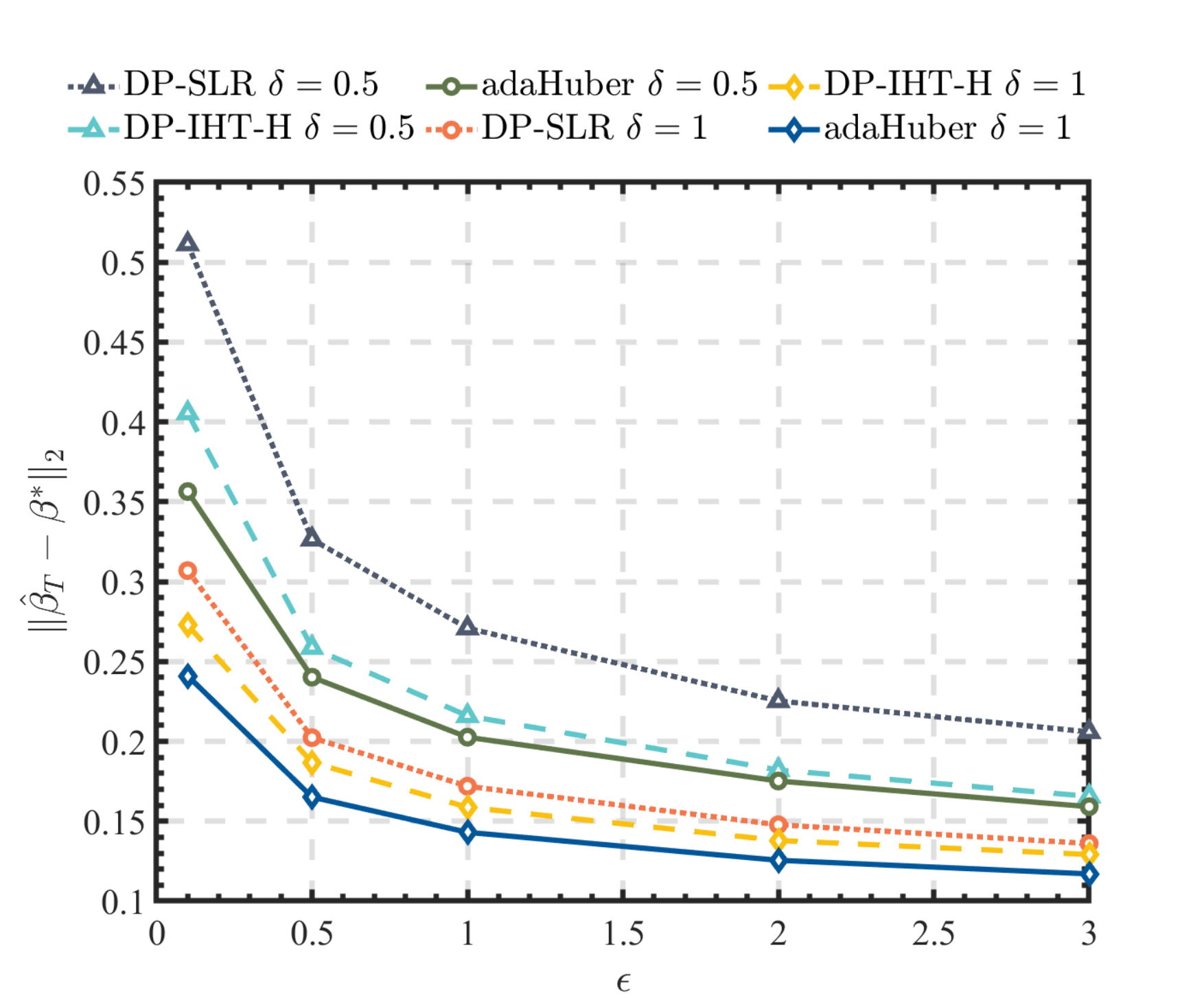}
        \caption{Performance vs.\ \(\epsilon\)}
        \label{1.d}
    \end{subfigure}
    \caption{Comparison of DP-IHT-H, DP-SLR, and adaHuber methods.}
\end{figure}

Figure~\ref{1.a} shows that when the response variable is heavy-tailed, DP-IHT-H achieves significantly lower errors than DP-SLR, possibly because of the square loss used in DP-SLR is less robust to outliers. Moreover, Figure~\ref{1.b} illustrates that as the dimensionality \(d\) increases, the estimation error grows more gradually for DP-IHT-H than for the other two methods. In addition, the figure clearly indicates that DP-IHT-H outperforms the DP-SLR algorithm across various values of \(d\). Figures~\ref{1.c} and \ref{1.d} further confirm that in heavy-tailed scenarios, DP-IHT-H consistently outperforms the DP-SLR algorithm. Across varying sparsity \(s^*\) and privacy budgets \(\epsilon\), DP-IHT-H maintains robust performance, demonstrating its advantage in handling heavy-tailed data while preserving differential privacy.

We next evaluate the DP-IHT-L algorithm in comparison with DP-IHT-H, focusing on whether DP-IHT-L provides a similar error guarantee for different values of \(\zeta\), and whether it outperforms DP-IHT-H for larger \(\zeta\). From Figures~\ref{fig:2a} and \ref{fig:2b}, we observe that DP-IHT-L generally outperforms DP-IHT-H, particularly when \(\zeta\) is smaller. Figure~\ref{fig:2b} also indicates that when \(\zeta = 1\), the difference between the two methods is minimal. As \(\zeta\) decreases, the performance of DP-IHT-L remains relatively stable. Figure~\ref{fig:2c} shows that DP-IHT-H can perform better with respect to \(s^*\), consistent with its dependence of \(O(s^{*3/4})\) (in contrast to \(O(s^{*3/2})\) for DP-IHT-L). Finally, Figure~\ref{fig:2d} demonstrates that for \(\zeta = 1\), both algorithms has less estimation error as $\epsilon$ decrease.

\begin{figure}[t]

    \centering
    \begin{subfigure}[b]{0.45\textwidth}
        \includegraphics[width=\textwidth]{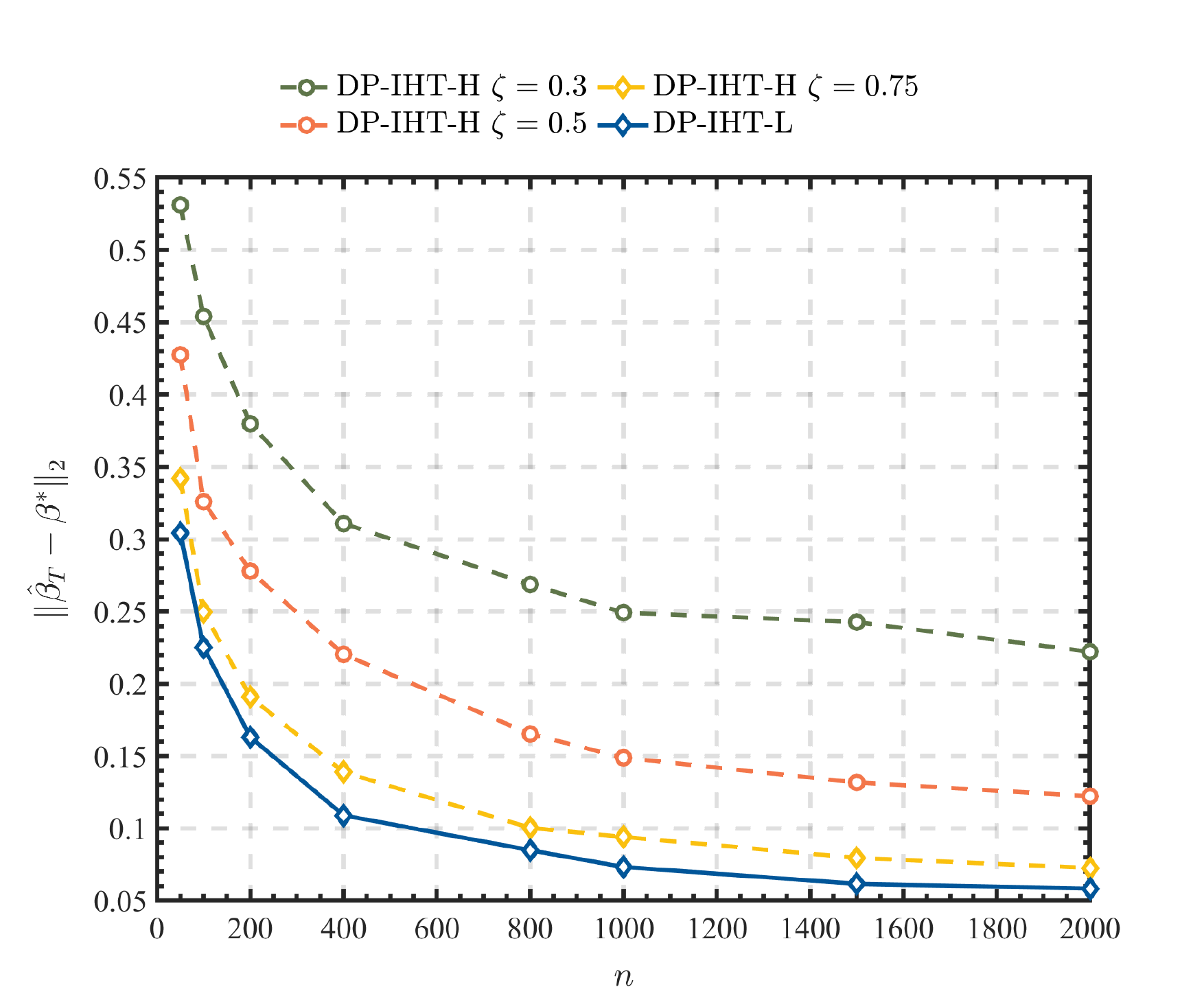}
        \caption{Comparison for different \(\zeta\)}
        \label{fig:2a}
    \end{subfigure}
    \hfill
    \begin{subfigure}[b]{0.45\textwidth}
        \includegraphics[width=\textwidth]{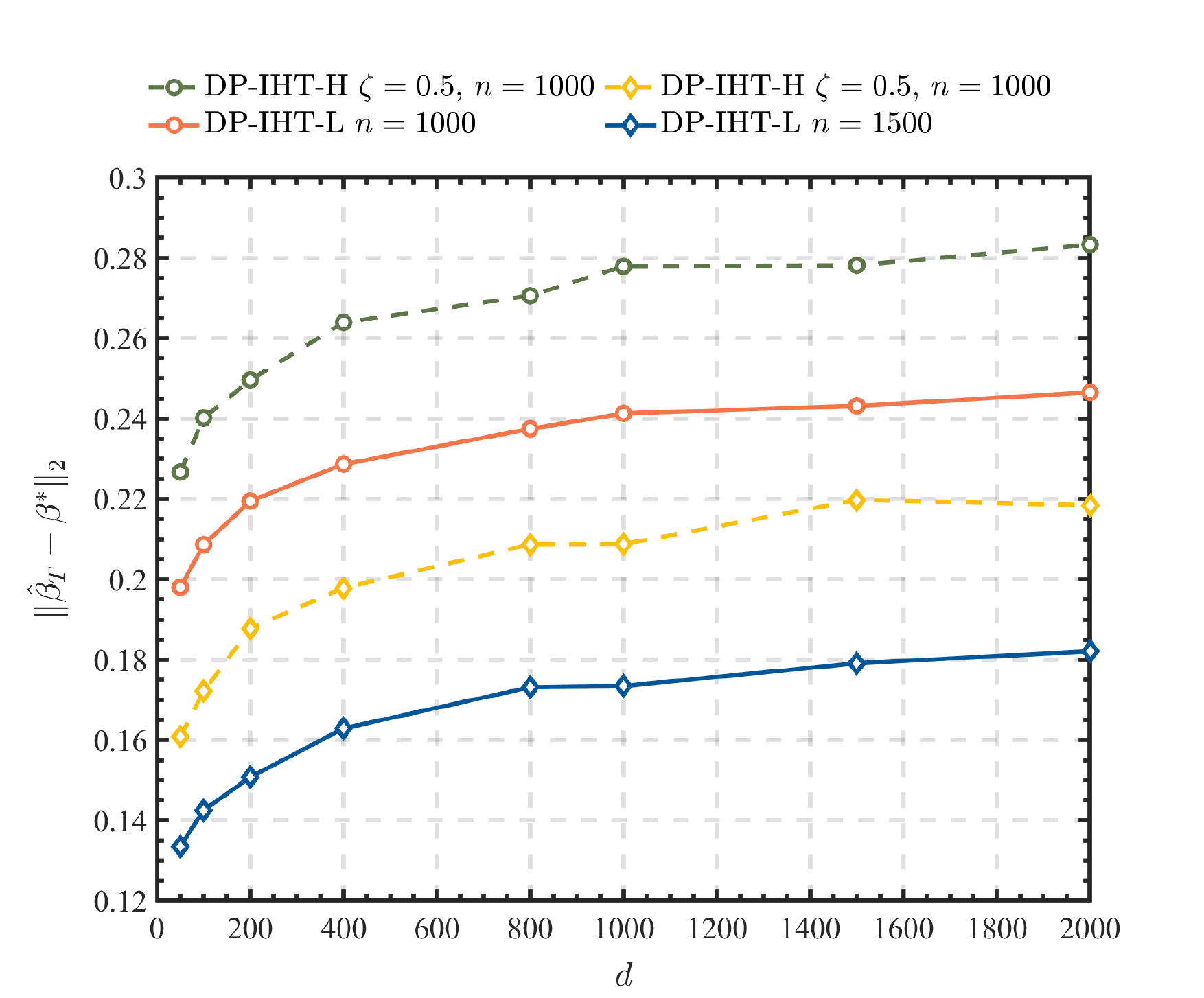}
        \caption{Performance vs.\ \(d\)}
        \label{fig:2b}
    \end{subfigure}


    \begin{subfigure}[b]{0.45\textwidth}
        \includegraphics[width=\textwidth]{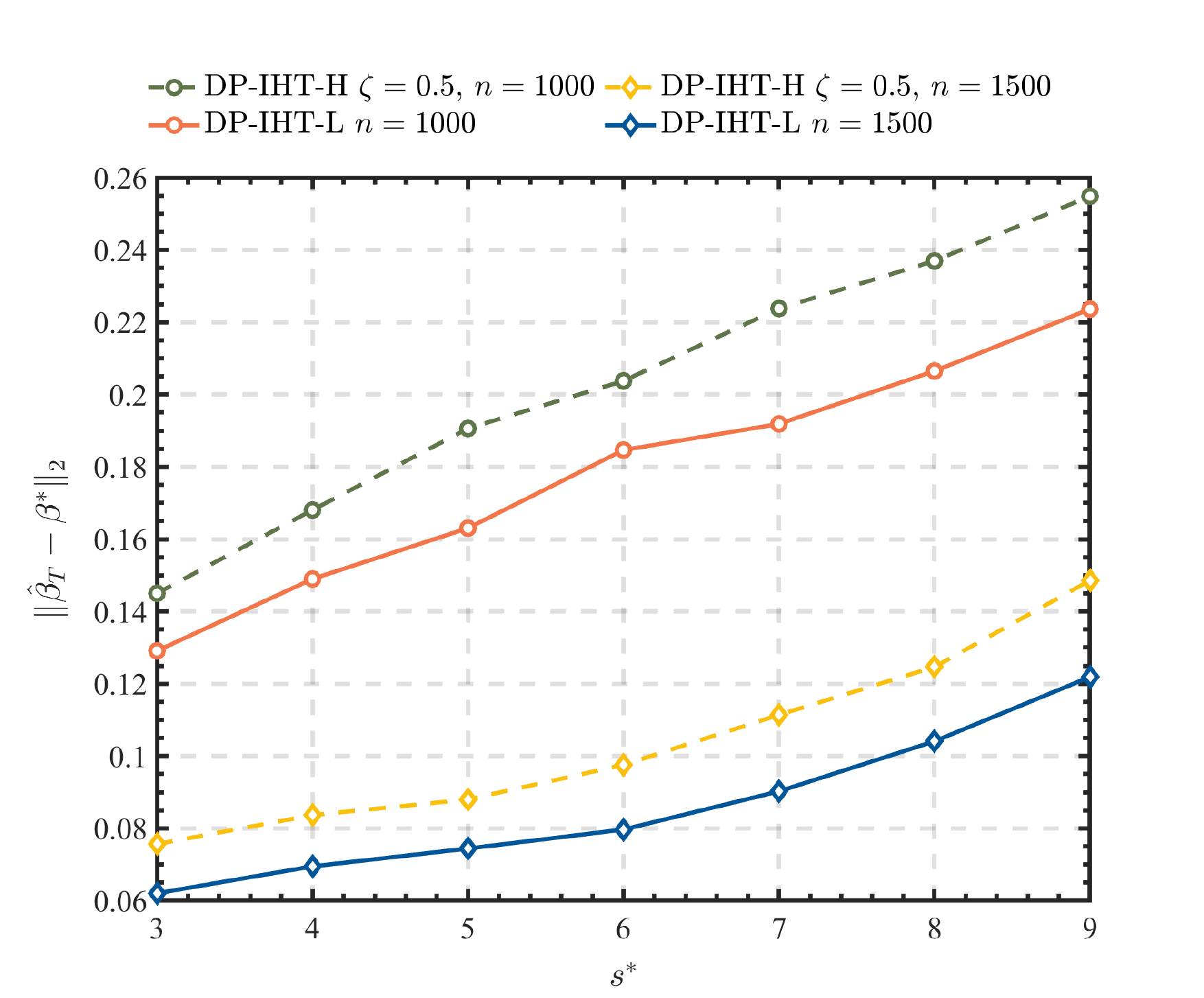}
        \caption{Performance vs.\ \(s^*\)}
        \label{fig:2c}
    \end{subfigure}
    \hfill
    \begin{subfigure}[b]{0.45\textwidth}
        \includegraphics[width=\textwidth]{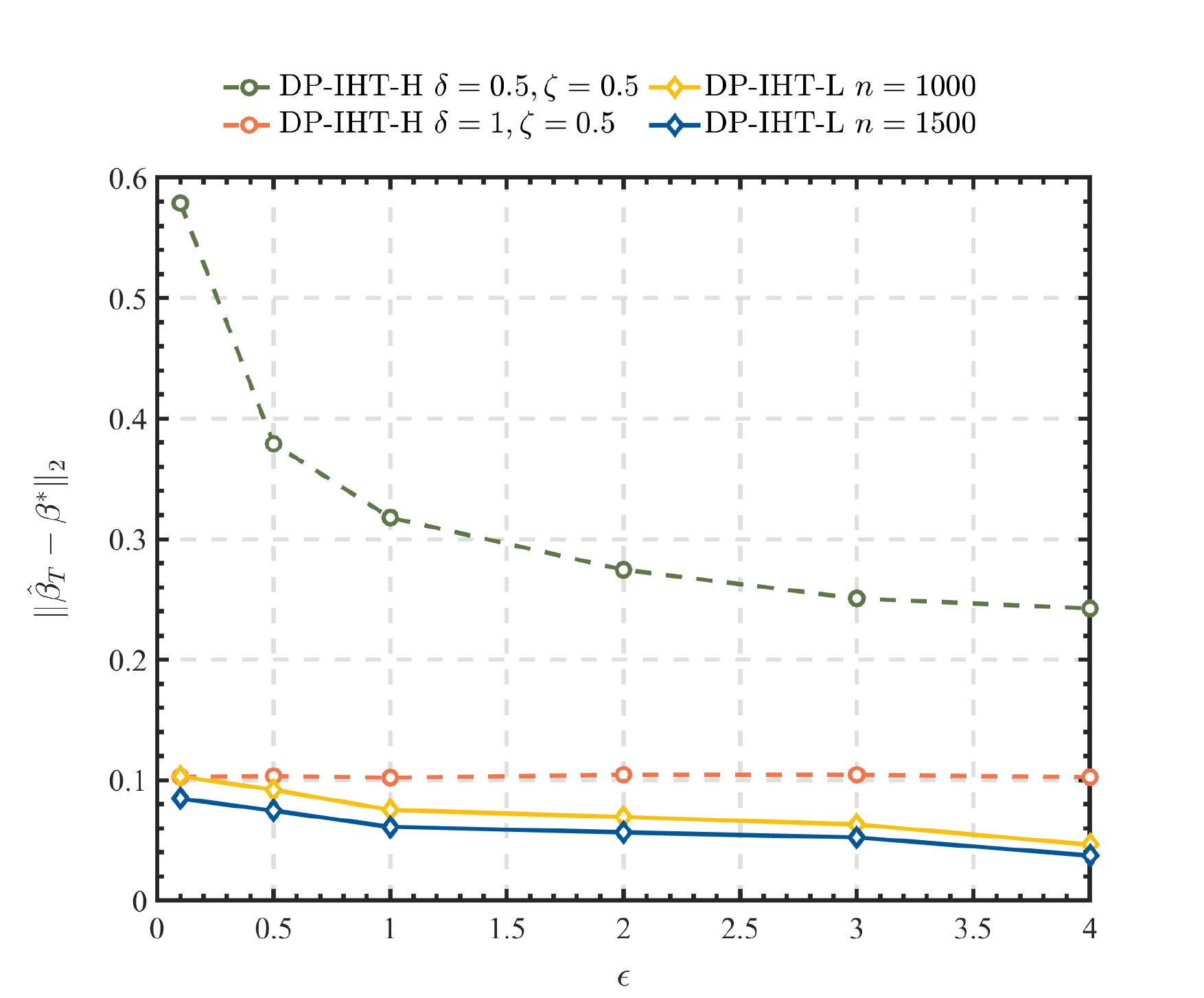}
        \caption{Performance vs.\ \(\epsilon\)}
        \label{fig:2d}
    \end{subfigure}
    \caption{Comparison of DP-IHT-L and DP-IHT-H across various metrics.  }
    \label{fig:comparison}
\end{figure}

Overall, these results indicate that DP-IHT-H and DP-IHT-L are particularly well suited for highly heavy-tailed data, while DP-IHT-L may be preferred in more moderate scenarios due to its stability. 

\subsection{Real Data Analysis}
\label{subsec:real_data_analysis}
We evaluate four methods—adaHuber, DP-SLR, DP-IHT-H, and DP-IHT-L—on two genomic datasets to assess their performance in privacy-preserving robust estimation.

\paragraph{NCI-60 Dataset.}
Following the protocols in \cite{Reinhold2012,sun2020adaptive}, we analyze a dataset of protein expression (from 162 antibodies) and RNA transcript levels across 60 cancer cell lines to identify genes affecting \emph{KRT19} expression \cite{Nakata2004}. After preprocessing, the dataset comprises \(n=59\) samples and \(d=14\,342\) features. For the DP methods, we set \(s^* = 5\), \(\varepsilon = 0.5\), and \(\delta = 1/n^{1.1}\). Table~\ref{tab:real_data_results} summarizes the results.

\begin{table}[h!]
\centering
\caption{Results on the NCI-60 dataset.}
\label{tab:real_data_results}
\renewcommand{\arraystretch}{1.2}
\setlength{\tabcolsep}{4pt}
\small
\begin{tabularx}{\textwidth}{cccc}
\toprule
\textbf{Method} & \textbf{MAE} & \textbf{Size} & \textbf{Selected Genes} \\
\midrule
adaHuber   & 2.07 & 5 & MALL, TM4SF4, ANXA3, ADRB2, NRN1 \\
DP-SLR     & 2.72 & 5 & MALL, TGFBI, S100A6, LPXN, DSP \\
DP-IHT-H   & 2.40 & 5 & MALL, ANXA3, NRN1, CA2, EPS8L2 \\
DP-IHT-L   & 2.34 & 5 & MALL, NRN1, DSP, AUTS2, EPS8L2 \\
\bottomrule
\end{tabularx}
\end{table}

The non-private adaHuber achieves the lowest MAE, while the DP-IHT variants perform competitively and surpass DP-SLR, highlighting the benefits of robust estimation in heavy-tailed data.

\paragraph{METABRIC Dataset.}
We further assess the methods on the METABRIC breast cancer dataset \cite{Curtis2012,Pereira2016}, which contains \(n = 1\,904\) samples and \(d = 24\,368\) features. Here, the parameters are set as \(s^* = 5\), \(\varepsilon = 1.0\), and \(\delta = 1/n^{1.1}\). Table~\ref{tab:metabric_gene_results} reports the results.

\begin{table}[h!]
\centering
\caption{Results on the METABRIC dataset.}
\label{tab:metabric_gene_results}
\renewcommand{\arraystretch}{1.2}
\setlength{\tabcolsep}{4pt}
\small
\begin{tabularx}{\textwidth}{cccc}
\toprule
\textbf{Method} & \textbf{MAE} & \textbf{Size} & \textbf{Selected Genes} \\
\midrule
adaHuber   & 0.92 & 5 & PIK3CA, MUC16, SYNE1, KMT2C, GATA3 \\
DP-SLR     & 1.22 & 5 & MUC16, CDH1, MAP3K1, NCOR2, CBFB \\
DP-IHT-H   & 1.08 & 5 & PIK3CA, MUC16, AHNAK2, MAP3K1, GATA3 \\
DP-IHT-L   & 1.05 & 5 & PIK3CA, SYNE1, NOTCH1, TG, KMT2C \\
\bottomrule
\end{tabularx}
\end{table}

As in the NCI-60 dataset, adaHuber attains the best MAE, and the DP-IHT methods perform comparably while outperforming DP-SLR, even in this lighter-tailed setting.

\section{Conclusion}
This work presented novel approaches for differentially private linear regression that address the challenges posed by heavy-tailed data distributions. We introduced two algorithms: DP-IHT-H and DP-IHT-L, each designed to handle different tail behaviors. DP-IHT-H demonstrates strong performance for moderately heavy-tailed data, achieving an optimized error bound under $(\epsilon, \delta)$-differential privacy. However, its performance degrades for heavier tails, prompting the development of DP-IHT-L, which achieves stable error bounds irrespective of the tail behavior. Extensive experiments on synthetic and real-world datasets verified the effectiveness of our methods, showing their robustness and applicability in diverse scenarios.

\section*{Acknowledgements}
This work is supported in part by the funding BAS/1/1689-01-01, URF/1/4663-01-01, REI/1/5232-01-01, REI/1/5332-01-01, and URF/1/5508-01-01 from KAUST, and funding from KAUST - Center of Excellence for Generative AI, under award number 5940.


\newpage
\appendix
\section{Proof of Theorem~\ref{thm:dp}}\label{proof:dp}
\begin{proof}

Consider two neighboring datasets \(D\) and \(D'\) that differ by exactly one record. At each iteration of DP-IHT-H, we update
\[
\boldsymbol{\beta}^{t+0.5} 
\;=\; 
\boldsymbol{\beta}^t 
\;-\; 
\eta\,\nabla \mathcal{L}_\tau(\boldsymbol{\beta}^t).
\]
Within both the quadratic and absolute-value regimes of the Huber function, the gradient satisfies \(\|\nabla \mathcal{L}_\tau(\boldsymbol{\beta}^t)\|_{\infty} \le \tau K\) because of clipping each feature $x_i$ to $\left\|\tilde{x}_i\right\|_{\infty} \leq K$. Hence,
\begin{align*}
\bigl\|\boldsymbol{\beta}^{t+0.5} - \boldsymbol{\beta}^{\prime t+0.5}\bigr\|_\infty 
&= \bigl\|-\eta\nabla \mathcal{L}_\tau(\boldsymbol{\beta}^t) 
+ \eta\nabla \mathcal{L}_\tau(\boldsymbol{\beta}^{\prime t})\bigr\|_\infty \\[6pt]
&\leq \eta \,\bigl\|\nabla \mathcal{L}_\tau(\boldsymbol{\beta}^t) 
- \nabla \mathcal{L}_\tau(\boldsymbol{\beta}^{\prime t})\bigr\|_\infty.
\end{align*}
Because \(D\) and \(D'\) differ by only one sample, the change in their gradients is on the order of \(\tfrac{\tau K}{m}\). Therefore,
\[
\bigl\|\boldsymbol{\beta}^{t+0.5} - \boldsymbol{\beta}^{\prime t+0.5}\bigr\|_\infty
\;\le\;
\eta \cdot \frac{\tau K}{m}.
\]
By applying the composition property of differential privacy (``Peeling'' is \((\varepsilon,\delta)\)-DP was proved in lemma 3.3 of \cite{cai2021cost}) and noting that each iteration's update is influenced by only one sample to a bounded extent, we conclude that every step is \((\varepsilon, \delta)\)-DP. Consequently, the entire procedure maintains \((\varepsilon,\delta)\)-DP.
\end{proof}
\section{Lemmas for Theorem \ref{theorem2}}
There are two main preparations for Theorem \ref{theorem2}. The first part relates to the properties of Restricted Strong Convexity (RSC) and Restricted Strong Smoothness (RSS). The second part concerns the properties of the ``Peeling" algorithm.

We start by discussing key properties related to RSC and RSS.

\begin{lemma}[\textbf{Restricted Strong Convexity}]
Under Assumption \ref{assumption1} and \ref{assumption2} for \(\tau \geq 2 \max\{(4v_\zeta)^{\frac{1}{1+\zeta}}, 4A_1^2r\}\) and \(n \gtrsim (\tau/r)^2(d+t)\), with probability at least \(1 - e^{-t}\), we have:
\begin{equation*}
    \langle\nabla\mathcal{L}_\tau(\boldsymbol{\beta})-\nabla \mathcal{L}_\tau(\boldsymbol{\beta}^*),\boldsymbol{\beta}-\boldsymbol{\beta}^*\rangle \geq c_l\|\boldsymbol{\beta}-\boldsymbol{\beta}^*\|^2_{\boldsymbol{\Sigma}, 2}
\end{equation*}
uniformly over \(\boldsymbol{\beta} \in \Theta_0(r) = \{\boldsymbol{\beta} \in \mathbb{R}^d : \|\boldsymbol{\beta} - \boldsymbol{\beta}^*\|_{\boldsymbol{\Sigma}, 2} \leq r\}\).
\end{lemma}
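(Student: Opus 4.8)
The plan is to establish the restricted strong convexity (RSC) inequality by exploiting the structure of the Huber gradient. Write $\nabla \mathcal{L}_\tau(\boldsymbol{\beta}) - \nabla \mathcal{L}_\tau(\boldsymbol{\beta}^*) = \frac{1}{n}\sum_{i=1}^n \bigl(\psi_\tau(y_i - x_i^\top \boldsymbol{\beta}) - \psi_\tau(y_i - x_i^\top\boldsymbol{\beta}^*)\bigr)(-x_i)$, where $\psi_\tau = \ell_\tau'$ is the derivative of the Huber loss (a truncation of the identity at level $\pm\tau$). First I would observe that $\psi_\tau$ is monotone and $1$-Lipschitz, and more importantly that on the event where $|y_i - x_i^\top\boldsymbol{\beta}^*| = |\varepsilon_i|$ and $|x_i^\top(\boldsymbol{\beta} - \boldsymbol{\beta}^*)|$ are both small relative to $\tau$, the map $\psi_\tau$ acts as the identity, so each summand contributes exactly $\langle x_i, \boldsymbol{\beta}-\boldsymbol{\beta}^*\rangle^2$ to the inner product $\langle\nabla\mathcal{L}_\tau(\boldsymbol{\beta})-\nabla \mathcal{L}_\tau(\boldsymbol{\beta}^*),\boldsymbol{\beta}-\boldsymbol{\beta}^*\rangle$. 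The contribution on the bad event (where truncation is active) is nonnegative by monotonicity of $\psi_\tau$, so it can only help; the task is to lower bound the good-event contribution.

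The key steps, in order: (1) By monotonicity of $\psi_\tau$, each term $\bigl(\psi_\tau(y_i - x_i^\top \boldsymbol{\beta}) - \psi_\tau(y_i - x_i^\top\boldsymbol{\beta}^*)\bigr)\cdot\bigl(x_i^\top(\boldsymbol{\beta} - \boldsymbol{\beta}^*)\bigr)\geq 0$, so I can freely restrict attention to a favorable subset of indices. (2) Introduce a localization/truncation argument: lower bound $\psi_\tau(u) - \psi_\tau(v) \geq (u-v)\,\mathbf{1}\{|u|\le \tau/2, |v|\le\tau/2\}$ using that $\psi_\tau$ is the identity on $[-\tau/2,\tau/2]$; more robustly, use the standard device $\bigl(\psi_\tau(u)-\psi_\tau(v)\bigr)(u-v) \geq (u-v)^2 \mathbf{1}\{|u|\le\tau/2\}\mathbf{1}\{|v-u|\le\tau/2\}$ or a smoothed version, which gives $\langle\nabla\mathcal{L}_\tau(\boldsymbol{\beta})-\nabla \mathcal{L}_\tau(\boldsymbol{\beta}^*),\boldsymbol{\beta}-\boldsymbol{\beta}^*\rangle \geq \frac{1}{n}\sum_i \langle x_i, \boldsymbol{\delta}\rangle^2 \mathbf{1}\{|\varepsilon_i|\le\tau/2\}\mathbf{1}\{|\langle x_i,\boldsymbol{\delta}\rangle|\le\tau/2\}$ where $\boldsymbol{\delta} = \boldsymbol{\beta}-\boldsymbol{\beta}^*$. (3) Control the first indicator: since $v_\zeta<\infty$, Markov gives $\Pr(|\varepsilon_i| > \tau/2) \le (2/\tau)^{1+\zeta} v_\zeta$, which is small once $\tau \gtrsim v_\zeta^{1/(1+\zeta)}$; this is exactly where the hypothesis $\tau \ge 2(4v_\zeta)^{1/(1+\zeta)}$ enters. (4) Control the second indicator uniformly over $\boldsymbol{\delta}$ in the ellipsoid $\Theta_0(r)$: since $\|\boldsymbol{\delta}\|_{\boldsymbol{\Sigma},2}\le r$ and the covariates are $O(1)$-sub-Gaussian, $\langle x_i,\boldsymbol{\delta}\rangle$ is sub-Gaussian with parameter $O(r)$, so $|\langle x_i,\boldsymbol{\delta}\rangle|\le\tau/2$ holds with high probability when $\tau \gtrsim r$ (this is where $\tau \ge 8 A_1^2 r$ is used), and the truncated second moment $\mathbb{E}[\langle x_i,\boldsymbol{\delta}\rangle^2\mathbf{1}\{|\langle x_i,\boldsymbol{\delta}\rangle|\le\tau/2\}] \ge (1 - o(1))\|\boldsymbol{\delta}\|_{\boldsymbol{\Sigma},2}^2 \ge c_l \|\boldsymbol{\delta}\|_{\boldsymbol{\Sigma},2}^2$ for a suitable constant $c_l$. (5) Finally upgrade the pointwise (in $\boldsymbol{\delta}$) in-expectation bound to a uniform high-probability bound via a standard covering/chaining argument over the ellipsoid (or invoke a bounded-differences / Talagrand concentration inequality for the empirical process $\boldsymbol{\delta}\mapsto \frac{1}{n}\sum_i \langle x_i,\boldsymbol{\delta}\rangle^2\mathbf{1}\{\cdots\}$), using truncation of $\langle x_i,\boldsymbol{\delta}\rangle$ at level $\tau/2$ to get bounded increments; the sample-size requirement $n\gtrsim (\tau/r)^2(d+t)$ is precisely what makes the fluctuation term smaller than a constant fraction of $\|\boldsymbol{\delta}\|_{\boldsymbol{\Sigma},2}^2$, yielding the $1-e^{-t}$ probability.

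The main obstacle is step (5): obtaining the inequality \emph{uniformly} over the continuum $\Theta_0(r)$ rather than for a fixed $\boldsymbol{\delta}$. A naive union bound over a net does not immediately work because the function $\boldsymbol{\delta}\mapsto\langle x_i,\boldsymbol{\delta}\rangle^2$ is not Lipschitz in $\boldsymbol{\delta}$ without controlling $\|x_i\|$; the resolution is the truncation built into the indicator $\mathbf{1}\{|\langle x_i,\boldsymbol{\delta}\rangle|\le\tau/2\}$, which renders the summands uniformly bounded by $\tau^2/4$ and makes the relevant function class effectively $\tau$-Lipschitz after an appropriate reparametrization. One then runs a peeling/chaining argument over the scales of $\|\boldsymbol{\delta}\|_{\boldsymbol{\Sigma},2}$, or applies a contraction inequality (Ledoux–Talagrand) to the truncated quadratic process, exactly as in the adaptive Huber regression analysis of \cite{sun2020adaptive}. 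I expect the bookkeeping of constants (reconciling the constant $c_l$ here with $\lambda_{\min}(\boldsymbol{\Sigma})$ and the truncation losses) to be the most delicate but routine part, while the conceptual content is the monotonicity reduction of step (1)–(2) together with the truncation-enabled concentration of step (5).
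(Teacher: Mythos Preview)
Your proposal is correct and matches the approach of the paper, which simply defers to Lemma~4 of \cite{sun2020adaptive} without reproducing any details; your sketch is in fact a faithful outline of that very argument (monotonicity of $\psi_\tau$, indicator lower bound on the ``good'' event $\{|\varepsilon_i|\le\tau/2,\;|\langle x_i,\boldsymbol{\delta}\rangle|\le\tau/2\}$, moment/sub-Gaussian control of the indicators, and uniform concentration of the truncated quadratic process over the ellipsoid).
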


\begin{proof}
    The proof of this lemma follows directly from Lemma 4 in \cite{sun2020adaptive}. 
\end{proof}

\begin{lemma}[\textbf{Restricted Strong Smoothness}]
Under Assumption \ref{assumption1} and \ref{assumption2} with \(0 < c_l < c_u\), we have:
\begin{equation*}
    \langle\nabla\mathcal{L}_\tau(\boldsymbol{\beta})-\nabla \mathcal{L}_\tau(\boldsymbol{\beta}^*),\boldsymbol{\beta}-\boldsymbol{\beta}^*\rangle \leq c_u\|\boldsymbol{\beta}-\boldsymbol{\beta}^*\|^2_{\boldsymbol{\Sigma}, 2}.
\end{equation*}
\end{lemma}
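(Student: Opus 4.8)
The plan is to reduce the restricted strong smoothness of the Huber loss to that of the ordinary squared loss, exploiting that the derivative of the Huber function, $\ell_\tau'(t)=\max\{-\tau,\min\{\tau,t\}\}$, is simply the clipping map and hence $1$-Lipschitz (and nondecreasing). Write $r_i(\boldsymbol{\beta})=y_i-x_i^\top\boldsymbol{\beta}$, so that $\nabla\mathcal{L}_\tau(\boldsymbol{\beta})=-\frac{1}{n}\sum_{i=1}^n\ell_\tau'\!\bigl(r_i(\boldsymbol{\beta})\bigr)x_i$. Using only the identity $r_i(\boldsymbol{\beta})-r_i(\boldsymbol{\beta}^*)=-x_i^\top(\boldsymbol{\beta}-\boldsymbol{\beta}^*)$, I would first rewrite the symmetrized inner product as
\[
\langle\nabla\mathcal{L}_\tau(\boldsymbol{\beta})-\nabla\mathcal{L}_\tau(\boldsymbol{\beta}^*),\,\boldsymbol{\beta}-\boldsymbol{\beta}^*\rangle
=\frac{1}{n}\sum_{i=1}^n\bigl(\ell_\tau'(r_i(\boldsymbol{\beta}))-\ell_\tau'(r_i(\boldsymbol{\beta}^*))\bigr)\bigl(r_i(\boldsymbol{\beta})-r_i(\boldsymbol{\beta}^*)\bigr).
\]

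Next, since $\ell_\tau'$ is $1$-Lipschitz, each summand is bounded by $\bigl|\ell_\tau'(r_i(\boldsymbol{\beta}))-\ell_\tau'(r_i(\boldsymbol{\beta}^*))\bigr|\cdot\bigl|r_i(\boldsymbol{\beta})-r_i(\boldsymbol{\beta}^*)\bigr|\le\bigl(r_i(\boldsymbol{\beta})-r_i(\boldsymbol{\beta}^*)\bigr)^2=\bigl(x_i^\top(\boldsymbol{\beta}-\boldsymbol{\beta}^*)\bigr)^2$ (equivalently, since $\ell_\tau''\le 1$ almost everywhere, a mean-value argument gives the same bound and does not even require monotonicity). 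Summing over $i$ yields
\[
\langle\nabla\mathcal{L}_\tau(\boldsymbol{\beta})-\nabla\mathcal{L}_\tau(\boldsymbol{\beta}^*),\,\boldsymbol{\beta}-\boldsymbol{\beta}^*\rangle\;\le\;(\boldsymbol{\beta}-\boldsymbol{\beta}^*)^\top\widehat{\boldsymbol{\Sigma}}(\boldsymbol{\beta}-\boldsymbol{\beta}^*),\qquad \widehat{\boldsymbol{\Sigma}}:=\frac{1}{n}\sum_{i=1}^n x_i x_i^\top,
\]
i.e.\ the quadratic form driving the Huber smoothness is dominated by the one driving the squared-loss smoothness, whose Hessian is exactly the empirical second-moment matrix.

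Finally I would pass from $\widehat{\boldsymbol{\Sigma}}$ to $\boldsymbol{\Sigma}$. Under the sub-Gaussian covariate assumption (Assumption~\ref{assumption1}), a standard matrix-concentration bound gives $(\boldsymbol{\beta}-\boldsymbol{\beta}^*)^\top\widehat{\boldsymbol{\Sigma}}(\boldsymbol{\beta}-\boldsymbol{\beta}^*)\le c_u\|\boldsymbol{\beta}-\boldsymbol{\beta}^*\|_{\boldsymbol{\Sigma},2}^2$ — uniformly over all directions once $n\gtrsim d$, or uniformly over the relevant $O(s^*)$-sparse cone once $n\gtrsim s^*\log d$, which is the regime assumed in Theorem~\ref{theorem2} — and this is precisely the restricted strong smoothness of the squared loss invoked in \cite{sun2020adaptive}. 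Combining the three steps finishes the proof. The only external input is this last covariance comparison, and it is essentially free here because we are \emph{upper}-bounding; the genuinely delicate companion statement is the restricted strong convexity lower bound, where clipping can annihilate the quadratic behaviour on the heavy-tailed residuals and one must instead control the fraction of clipped samples — which is exactly why that lemma carries the extra conditions on $\tau$ and $n$.
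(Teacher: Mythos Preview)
Your proposal is correct and follows essentially the same strategy as the paper: bound the Huber quadratic form by the empirical second-moment matrix $\widehat{\boldsymbol{\Sigma}}=n^{-1}\sum_i x_ix_i^\top$ and then invoke $\widehat{\boldsymbol{\Sigma}}\preceq c_u\boldsymbol{\Sigma}$ (or $c_u I$) under Assumption~\ref{assumption1}. The only cosmetic difference is that the paper reaches $\widehat{\boldsymbol{\Sigma}}$ via the mean-value theorem on the gradient (writing the Hessian $\nabla^2\mathcal{L}_\tau(\bar{\boldsymbol{\beta}})=n^{-1}\sum_i x_ix_i^\top\mathbf{1}\{|y_i-x_i^\top\bar{\boldsymbol{\beta}}|\le\tau\}\preceq\widehat{\boldsymbol{\Sigma}}$) and then applies Cauchy--Schwarz, whereas you bound the inner product directly using the $1$-Lipschitzness of $\ell_\tau'$; your route is marginally cleaner but not genuinely different.
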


\begin{proof}
By the mean value property:
\begin{equation*}
    \|\nabla\mathcal{L}_\tau(\boldsymbol{\beta})-\nabla \mathcal{L}_\tau(\boldsymbol{\beta}^*)\|_2 = \|\nabla^2L_\tau(\bar{\boldsymbol{\beta}})(\boldsymbol{\beta}-\boldsymbol{\beta}^*)\|_2,
\end{equation*}
where \(\bar{\boldsymbol{\beta}}\) is a point on the line segment between \(\boldsymbol{\beta}\) and \(\boldsymbol{\beta}^*\).

From Assumption \ref{assumption2}, we know:
\begin{align*}
    \langle u, \nabla^2L_\tau(\bar{\boldsymbol{\beta}})u \rangle &= \langle u, \frac{1}{n}\sum_{i=1}^n x_i x_i^\top 1(|y_i - x_i^\top \bar{\boldsymbol{\beta}}| \leq \tau) u \rangle \\
    &\leq u^\top S_n u \leq c_u u^\top u,
\end{align*}
where \(S_n\) is the empirical covariance matrix.

Since \(\nabla^2L_\tau(\bar{\boldsymbol{\beta}})\) is positive definite, we also have:
\begin{equation*}
    \|\nabla^2L_\tau(\bar{\boldsymbol{\beta}})^\top u\|_2^2 \leq c_u^2 \|u\|_2^2 \quad \Rightarrow \quad \|\nabla\mathcal{L}_\tau(\boldsymbol{\beta})-\nabla \mathcal{L}_\tau(\boldsymbol{\beta}^*)\|^2_2 \leq c_u^2\|\boldsymbol{\beta}-\boldsymbol{\beta}^*\|^2_{\boldsymbol{\Sigma}, 2}.
\end{equation*}

Thus:
\begin{equation*}
    \langle\nabla\mathcal{L}_\tau(\boldsymbol{\beta})-\nabla \mathcal{L}_\tau(\boldsymbol{\beta}^*),\boldsymbol{\beta}-\boldsymbol{\beta}^*\rangle \leq \|\nabla\mathcal{L}_\tau(\boldsymbol{\beta})-\nabla \mathcal{L}_\tau(\boldsymbol{\beta}^*)\|_2 \|\boldsymbol{\beta}-\boldsymbol{\beta}^*\|_2 \leq c_u\|\boldsymbol{\beta}-\boldsymbol{\beta}^*\|^2_{\boldsymbol{\Sigma}, 2}.
\end{equation*}
\end{proof}

\begin{lemma}\label{newl}
Under assumption \ref{assumption1} and \ref{assumption2} for \(0 < \delta \leq 1\), with probability at least \(1 - 2e^{-t}\), we have:
\begin{equation*}
\| \nabla \mathcal{L}_\tau(\boldsymbol{\beta}^*)\|_\infty \lesssim 2L \sqrt{\frac{v_\zeta \tau^{1-\delta} t}{n}} + L \frac{\tau t}{2n} + Lv_\zeta \tau^{-\delta}.
\end{equation*}

By tuning \(\tau \asymp (n/t)^{\frac{1}{1+\zeta}}\), it also holds that:
\begin{equation*}
    \|\nabla \mathcal{L}_\tau(\boldsymbol{\beta}^*)\|_\infty \lesssim \tau L\frac{t}{n}.
\end{equation*}
\end{lemma}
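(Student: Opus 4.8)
The plan is to reduce the $\ell_\infty$ bound to coordinate-wise concentration of the Huber score and then follow the argument of \cite{sun2020adaptive}. Since $y_i - x_i^\top\boldsymbol{\beta}^* = \varepsilon_i$, the gradient at the truth is $\nabla\mathcal{L}_\tau(\boldsymbol{\beta}^*) = -\tfrac1n\sum_{i=1}^n \psi_\tau(\varepsilon_i)\,x_i$, where $\psi_\tau := \ell_\tau'$ is $1$-Lipschitz and satisfies $|\psi_\tau|\le\tau$. Hence $\|\nabla\mathcal{L}_\tau(\boldsymbol{\beta}^*)\|_\infty = \max_{j\in[d]}|g_j|$ with $g_j = \tfrac1n\sum_i \psi_\tau(\varepsilon_i)\,x_{i,j}$, so it suffices to control each $g_j$ at confidence $1-2e^{-t}$ and union-bound over $j$, which replaces $t$ by $t+\log d\asymp t$ in the regime of interest.

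For a fixed $j$ I would split $g_j = \mathbb{E}[g_j] + \bigl(g_j - \mathbb{E}[g_j]\bigr)$. The \emph{bias} $\mathbb{E}[g_j]$ is the Huber-truncation bias: from $|\psi_\tau(x) - x| \le |x|\,\mathbf{1}\{|x|>\tau\} \le |x|^{1+\delta}\tau^{-\delta}$ and $\mathbb{E}[\varepsilon_i\mid x_i]=0$, one obtains $|\mathbb{E}[g_j]| \lesssim \tau^{-\delta}\,\mathbb{E}\bigl[|\varepsilon_i|^{1+\delta}\,|x_{i,j}|\bigr] \lesssim L\,v_\zeta\,\tau^{-\delta}$, where $\mathbb{E}|x_{i,j}| \le c_u^{1/2}$ by Assumption~\ref{assumption1} (absorbed into $L$) and $v_\zeta$ comes from Assumption~\ref{assumption2}. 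For the \emph{fluctuation}, apply Bernstein's inequality to the i.i.d.\ summands $W_i = \psi_\tau(\varepsilon_i)\,x_{i,j}$: the variance proxy is $\mathbb{E}[W_i^2] = \mathbb{E}[\psi_\tau(\varepsilon_i)^2]\,\mathbb{E}[x_{i,j}^2] \lesssim v_\zeta\,\tau^{1-\delta}\,c_u$ via the elementary split $\psi_\tau(x)^2 = \psi_\tau(x)^{1+\delta}\psi_\tau(x)^{1-\delta} \le |x|^{1+\delta}\tau^{1-\delta}$, and the envelope is $|W_i|\le \tau\,|x_{i,j}| = \tilde{O}(\tau)$ once $\max_i|x_{i,j}|\lesssim\sqrt{\log n}$. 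Bernstein then yields $|g_j - \mathbb{E}[g_j]| \lesssim L\sqrt{v_\zeta\tau^{1-\delta}t/n} + L\,\tau t/n$ with probability $\ge 1-2e^{-t}$.

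Adding the two bounds and union-bounding over $j\in[d]$ gives the first displayed inequality. For the second bound, substitute $\tau\asymp(n/t)^{1/(1+\zeta)}$ with $\delta=\zeta$: a one-line computation shows that each of $v_\zeta\tau^{-\zeta}$, $\tau t/n$ and $\sqrt{v_\zeta\tau^{1-\zeta}t/n}$ has order $(t/n)^{\zeta/(1+\zeta)}\asymp \tau t/n$, so the three terms collapse to $\tilde{O}(\tau L t/n)$.

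I expect the delicate step to be the envelope control in Bernstein's inequality: because the covariates are only sub-Gaussian, $|W_i|$ is not uniformly bounded, so one must either run the concentration on the clipped features $\tilde{x}_i$ (with $\|\tilde{x}_i\|_\infty\le K$, at the price of a super-polynomially small extra truncation bias when $K=O(\log d)$) or use a truncated-Bernstein / sub-exponential tail bound as in \cite{sun2020adaptive}; keeping the dimension dependence logarithmic throughout is the only genuinely non-routine part, the remaining estimates being the moment computations already essentially present in \cite{sun2020adaptive}.
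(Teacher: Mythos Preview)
Your proposal is correct and follows essentially the same approach as the paper: both decompose the $j$-th coordinate of the gradient into a Huber-truncation bias term bounded via $|\psi_\tau(x)-x|\le |x|^{1+\delta}\tau^{-\delta}$ and a fluctuation term controlled by Bernstein's inequality with variance proxy $v_\zeta\tau^{1-\delta}$, then tune $\tau\asymp(n/t)^{1/(1+\zeta)}$ to balance terms. Your treatment is in fact more careful than the paper's sketch --- you explicitly address the union bound over $j\in[d]$ and the envelope issue for sub-Gaussian (unbounded) covariates, whereas the paper's proof silently writes $|\mathbb{E}(\xi_i x_{ij})| = |x_{ij}|\cdot|\mathbb{E}(\xi_i)|\le Lv_\zeta\tau^{-\delta}$ as if $|x_{ij}|\le L$ deterministically and applies Bernstein without discussing the envelope.
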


\begin{proof}
Under Assumption \ref{assumption2} with \(0 < \delta < 1\), define \(\xi_i = \psi_\tau(\varepsilon_i)\), where \(\varepsilon_i\) is the regression error. Then:
\[
\nabla \mathcal{L}_\tau(\boldsymbol{\beta}^*) = -\frac{1}{n} \sum_{i=1}^n \xi_i \boldsymbol{x}_i.
\]
For each \(1 \leq j \leq d\):
\[
\left|\mathbb{E}(\xi_i x_{ij})\right| = |x_{ij}| \cdot \left|\mathbb{E}(\xi_i)\right| \leq L v_\zeta \tau^{-\delta}.
\]

Using Bernstein's inequality, with probability at least \(1 - 2e^{-t}\):
\[
\left|\frac{1}{n} \sum_{i=1}^n (\xi_i x_{ij} - \mathbb{E} \xi_i x_{ij})\right| \lesssim 2L \sqrt{\frac{v_\zeta \tau^{1-\delta} t}{n}} + L \frac{\tau t}{2n}.
\]

Combining these results gives:
\[
\| \nabla \mathcal{L}_\tau(\boldsymbol{\beta}^*)\|_\infty \lesssim 2L \sqrt{\frac{v_\zeta \tau^{1-\delta} t}{n}} + L \frac{\tau t}{2n} + Lv_\zeta \tau^{-\delta}.
\]
By setting \(\tau \asymp (n/t)^{\frac{1}{1+\zeta}}\), we further simplify:
\[
\|\nabla \mathcal{L}_\tau(\boldsymbol{\beta}^*)\|_\infty \lesssim \tau L \frac{t}{n}.
\]
\end{proof}

Then we introduce two key properties of the ``Peeling" algorithm (Algorithm \ref{alg:peeling}).

\begin{lemma}(\cite{cai2021cost})\label{A.31}
    Let $\tilde{P}_s$ be defined as in Algorithm 2. For any index set $I$, any $\boldsymbol{v} \in \mathbb{R}^I$ and $\hat{\boldsymbol{v}}$ such that $\|\hat{\boldsymbol{v}}\|_0 \leq \hat{s} \leq s$, we have that for every $c>0$,
$$
\left\|\tilde{P}_s(\boldsymbol{v})-\boldsymbol{v}\right\|_2^2 \leq(1+1 / c) \frac{|I|-s}{|I|-\hat{s}}\|\hat{\boldsymbol{v}}-\boldsymbol{v}\|_2^2+4(1+c) \sum_{i \in[\mathrm{s}]}\left\|\boldsymbol{w}_i\right\|_{\infty}^2
$$
\end{lemma}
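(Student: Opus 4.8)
The plan is to deduce this from the classical \emph{noise-free} hard-thresholding inequality together with a change-of-vector trick that absorbs the peeling noise. Write $S\subseteq I$ for the size-$s$ index set produced by Algorithm~\ref{alg:peeling}, set $S^c:=I\setminus S$, let $\hat S=\operatorname{supp}(\hat{\boldsymbol v})$ (so $|\hat S|\le\hat s$), and take $\tilde P_s(\boldsymbol v)=\boldsymbol v|_S$, so $\|\tilde P_s(\boldsymbol v)-\boldsymbol v\|_2^2=\|\boldsymbol v|_{S^c}\|_2^2$. First I would isolate the deterministic fact: if $S_0$ is an \emph{exact} top-$s$ coordinate set of some $\boldsymbol u\in\mathbb R^I$ and $|\hat S|\le\hat s\le s$, then
\[
\|\boldsymbol u|_{S_0^c}\|_2^2\;\le\;\tfrac{|I|-s}{|I|-\hat s}\,\|\boldsymbol u|_{\hat S^c}\|_2^2\;\le\;\tfrac{|I|-s}{|I|-\hat s}\,\|\boldsymbol u-\hat{\boldsymbol v}\|_2^2 ,
\]
the last step because $\hat{\boldsymbol v}$ is supported on $\hat S$. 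Then I would apply the \emph{first} inequality above not to $\boldsymbol v$ itself — for which $S$ need not be a top-$s$ set — but to a surrogate vector for which $S$ is genuinely extremal.

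To prove the deterministic inequality I would set $M:=\max_{j\in S_0^c}u_j^2$, so that $u_j^2\ge M$ for all $j\in S_0$ and $u_j^2\le M$ for all $j\in S_0^c$. The first fact gives $\|\boldsymbol u|_{S_0^c}\|_2^2\le(|I|-s)M$. Decomposing $\hat S^c=(\hat S^c\cap S_0)\sqcup(\hat S^c\cap S_0^c)$, applying the two pointwise bounds on $M$, and using the counts $|\hat S^c\cap S_0|=s-|S_0\cap\hat S|$ and $|S_0^c\cap\hat S|=|\hat S|-|S_0\cap\hat S|$, one obtains $\|\boldsymbol u|_{\hat S^c}\|_2^2\ge\|\boldsymbol u|_{S_0^c}\|_2^2+(s-|\hat S|)M\ge\|\boldsymbol u|_{S_0^c}\|_2^2+(s-\hat s)M$; substituting $M\ge\|\boldsymbol u|_{S_0^c}\|_2^2/(|I|-s)$ and rearranging yields exactly the factor $\tfrac{|I|-s}{|I|-\hat s}$ (the case $\|\boldsymbol u|_{S_0^c}\|_2=0$ being trivial). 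The point worth stressing is that the sharp constant comes from \emph{averaging against $M$}, not from matching $S_0^c$ one-to-one with $S_0\setminus\hat S$, which would only give the much weaker factor $\tfrac{|I|-s}{s-\hat s}$.

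For the reduction I would define $\tilde{\boldsymbol v}\in\mathbb R^I$ by $\tilde v_j=|v_j|$ for $j\notin S$ and $\tilde v_{j_i}=|v_{j_i}|+2\|\boldsymbol w_i\|_\infty$ for the index $j_i$ peeled in round $i$. Since at round $i$ the index $j_i$ maximizes $|v_j|+w_{i,j}$ over $I\setminus S_{i-1}$ and $S_{i-1}\subseteq S$, every $j'\in S^c$ satisfies $|v_{j_i}|\ge|v_{j'}|-2\|\boldsymbol w_i\|_\infty$, hence $\tilde v_{j_i}\ge\max_{j'\notin S}|v_{j'}|=\max_{j'\notin S}\tilde v_{j'}$; thus $S$ is an exact top-$s$ set of $\tilde{\boldsymbol v}$, and the deterministic inequality applies with $\boldsymbol u=\tilde{\boldsymbol v}$, $S_0=S$. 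To translate back, note $\|\tilde{\boldsymbol v}|_{S^c}\|_2^2=\|\boldsymbol v|_{S^c}\|_2^2$, while on $\hat S^c$ the coordinates lying in $S$ are precisely those of $T:=S\setminus\hat S$, and Young's inequality $(|v_{j_i}|+2\|\boldsymbol w_i\|_\infty)^2\le(1+\tfrac1c)v_{j_i}^2+4(1+c)\|\boldsymbol w_i\|_\infty^2$ (together with $1\le1+\tfrac1c$ on $\hat S^c\cap S^c$ and $\sum_{i:\,j_i\in T}\|\boldsymbol w_i\|_\infty^2\le\sum_{i\in[s]}\|\boldsymbol w_i\|_\infty^2$) gives $\|\tilde{\boldsymbol v}|_{\hat S^c}\|_2^2\le(1+\tfrac1c)\|\boldsymbol v|_{\hat S^c}\|_2^2+4(1+c)\sum_{i\in[s]}\|\boldsymbol w_i\|_\infty^2$. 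Chaining the three estimates and using $\tfrac{|I|-s}{|I|-\hat s}\le1$ to drop that factor on the noise term yields the claimed bound.

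The main obstacle I anticipate is bookkeeping rather than any deep idea: pinning down the exact constant $\tfrac{|I|-s}{|I|-\hat s}$ in the deterministic step (the averaging argument above, including the cardinality identities), and verifying rigorously that the surrogate $\tilde{\boldsymbol v}$ is a legitimate top-$s$ vector for $S$ — which is exactly where the greedy, round-by-round structure of the peeling procedure and the nesting $S_{i-1}\subseteq S$ are indispensable. Once those two points are in place, the $(1+1/c)$ and $4(1+c)$ constants fall out of a single application of Young's inequality.
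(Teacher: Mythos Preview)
Your proof is correct and takes a genuinely different route from the paper. The paper (following \cite{cai2021cost}) proceeds in two steps: first it proves a \emph{swapping} statement --- for equal-size sets $R_1\subseteq S$, $R_2\subseteq S^c$ there is a bijection $\psi$ so that $\|\boldsymbol v_{R_2}\|_2^2\le(1+1/c)\|\boldsymbol v_{R_1}\|_2^2+4(1+c)\sum_i\|\boldsymbol w_i\|_\infty^2$ (this is what the proof under Lemma~\ref{A.31} actually establishes) --- and then, in the proof of Lemma~\ref{A.3}, introduces the \emph{true} top-$s$ set $T$ of $\boldsymbol v$, swaps $S^c\cap T$ against $S\cap T^c$, and finally runs the averaging argument on $T^c$ to produce the factor $\tfrac{|I|-s}{|I|-\hat s}$. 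You instead bypass $T$ altogether: by inflating the coordinates in $S$ by $2\|\boldsymbol w_i\|_\infty$ you manufacture a surrogate $\tilde{\boldsymbol v}$ for which $S$ itself is an exact top-$s$ set, apply the noise-free averaging inequality directly with $S_0=S$, and let a single Young inequality absorb the inflation. Both routes use the greedy structure of peeling at exactly the same place (to compare a selected index $j_i$ with any $j'\in S^c$) and end at the same constants; yours is a little cleaner because it avoids the auxiliary set $T$ and the separate bijection lemma, while the paper's decomposition has the mild advantage that its swapping step is reusable on its own (and is in fact invoked again later in the proof of Lemma~\ref{IHTbd}).
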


\begin{proof}
 Let $\psi: R_2 \rightarrow R_1$ be a bijection. By the selection criterion of Algorithm \ref{alg:peeling}, for each $j \in R_2$ we have $\left|v_j\right|+w_{i j} \leq\left|v_{\psi(j)}\right|+w_{i \psi(j)}$, where $i$ is the index of the iteration in which $\psi(j)$ is appended to $S$. It follows that, for every $c>0$,

\begin{align*}
v_j^2 & \leq\left(\left|v_{\psi(j)}\right|+w_{i \psi(j)}-w_{i j}\right)^2 \\
& \leq(1+1 / c) v_{\psi(j)}^2+(1+c)\left(w_{i \psi(j)}-w_{i j}\right)^2 \leq(1+1 / c) v_{\psi(j)}^2+4(1+c)\left\|\boldsymbol{w}_i\right\|_{\infty}^2
\end{align*}

Summing over $j$ then leads to
$$
\left\|\boldsymbol{v}_{R_2}\right\|_2^2 \leq(1+1 / c)\left\|\boldsymbol{v}_{R_1}\right\|_2^2+4(1+c) \sum_{i \in[s]}\left\|\boldsymbol{w}_i\right\|_{\infty}^2
$$
\end{proof}

\begin{lemma}(\cite{cai2021cost})\label{A.3}
    Let $\bar{P}_s$ be defined as in Algorithm \ref{alg:peeling}. For any index set $I$, any $\boldsymbol{v} \in \mathbb{R}^I$ and $\hat{\boldsymbol{v}}$ such that $\|\hat{\boldsymbol{v}}\|_0 \leq \hat{s} \leq s$, we have that for every $c>0$,
$$
\left\|\tilde{P}_s(\boldsymbol{v})-\boldsymbol{v}\right\|_2^2 \leq(1+1 / c) \frac{|I|-s}{|I|-\hat{s}}\|\hat{\boldsymbol{v}}-\boldsymbol{v}\|_2^2+4(1+c) \sum_{i \in[s]}\left\|\boldsymbol{w}_i\right\|_{\infty}^2 .
$$
\end{lemma}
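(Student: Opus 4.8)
Lemma~\ref{A.3} is, up to notation, the same statement as Lemma~\ref{A.31}: $\bar P_s$ (written $\tilde P_s$ in the display) is the index-selection operator induced by the greedy loop of Algorithm~\ref{alg:peeling}, so the plan is to reproduce the argument used there, with particular care devoted to the combinatorial factor $\frac{|I|-s}{|I|-\hat s}$. First I would fix notation: let $S\subseteq I$ be the set of $s$ coordinates kept by the peeling loop on input $\boldsymbol v$, and let $\hat S=\operatorname{supp}(\hat{\boldsymbol v})$ with $|\hat S|\le\hat s\le s$. Since $\tilde P_s(\boldsymbol v)=\boldsymbol v|_S$, one has $\|\tilde P_s(\boldsymbol v)-\boldsymbol v\|_2^2=\sum_{j\in I\setminus S}v_j^2$, while the right-hand side benefits from the trivial inequality $\|\hat{\boldsymbol v}-\boldsymbol v\|_2^2\ge\sum_{j\in I\setminus\hat S}v_j^2$ because $\hat{\boldsymbol v}$ vanishes off $\hat S$. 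So it suffices to bound $\sum_{j\in I\setminus S}v_j^2$ by a controlled multiple of $\sum_{j\in I\setminus\hat S}v_j^2$ plus a noise term.

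The core is the greedy comparison. Split $I\setminus S=R_1\cup\bigl((I\setminus S)\setminus\hat S\bigr)$ with $R_1=\hat S\setminus S$, and set $R_2=S\setminus\hat S$; because $\hat s\le s$ we have $|R_1|\le|R_2|$, so there is an injection $\phi:R_1\hookrightarrow R_2$. For $j\in R_1$ let $i\in[s]$ be the iteration at which $\phi(j)$ is appended to $S$; since $j$ is never selected it is available at that iteration, so the selection rule $j^\ast=\arg\max_{k}(|v_k|+w_{ik})$ gives $|v_j|+w_{ij}\le|v_{\phi(j)}|+w_{i\phi(j)}$. Hence, with the elementary inequality $(a+b)^2\le(1+1/c)a^2+(1+c)b^2$ and $|w_{i\phi(j)}-w_{ij}|\le2\|\boldsymbol w_i\|_\infty$,
\[
v_j^2\le\bigl(|v_{\phi(j)}|+w_{i\phi(j)}-w_{ij}\bigr)^2\le(1+1/c)\,v_{\phi(j)}^2+4(1+c)\,\|\boldsymbol w_i\|_\infty^2 .
\]
Summing over $j\in R_1$, using that $\phi$ is injective into $R_2$ and that the iteration indices form a subset of $[s]$, yields $\sum_{j\in R_1}v_j^2\le(1+1/c)\sum_{j\in R_2}v_j^2+4(1+c)\sum_{i\in[s]}\|\boldsymbol w_i\|_\infty^2$. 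Since $R_2$ and $(I\setminus S)\setminus\hat S$ are disjoint and both lie inside $I\setminus\hat S$, adding $\sum_{j\in(I\setminus S)\setminus\hat S}v_j^2$ to both sides gives the desired inequality with the crude factor $(1+1/c)$ in place of $(1+1/c)\frac{|I|-s}{|I|-\hat s}$.

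To sharpen the constant to $\frac{|I|-s}{|I|-\hat s}$, I would not fix a single injection $\phi$ but average the pairwise bound over all admissible ways of completing $\hat S$ to a set of size $s$ inside $I$ (equivalently, over all injections $R_1\hookrightarrow R_2$), sum the resulting inequalities, and divide by their number; a counting identity—comparing averages of $v_j^2$ over coordinate sets of sizes $|I|-s$ and $|I|-\hat s$, exactly as in the proof of Lemma~\ref{A.31} and in \cite{cai2021cost}—replaces the averaged term $(1+1/c)\sum_{j\in R_2}v_j^2+\sum_{j\in(I\setminus S)\setminus\hat S}v_j^2$ by $(1+1/c)\frac{|I|-s}{|I|-\hat s}\sum_{j\in I\setminus\hat S}v_j^2\le(1+1/c)\frac{|I|-s}{|I|-\hat s}\|\hat{\boldsymbol v}-\boldsymbol v\|_2^2$, while leaving the noise term $4(1+c)\sum_{i\in[s]}\|\boldsymbol w_i\|_\infty^2$ untouched. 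I expect this last bookkeeping—making the averaging over completions precise so that the constant is exactly $\frac{|I|-s}{|I|-\hat s}$ rather than merely $1$—to be the main obstacle; the remaining ingredients are the greedy selection rule and the elementary quadratic inequality already recorded for Lemma~\ref{A.31}.
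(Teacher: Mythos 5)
There is a genuine gap in the sharpening step, which is precisely the part of the lemma that matters. Your first two paragraphs are fine: comparing $R_1=\hat S\setminus S$ with $R_2=S\setminus\hat S$ via the greedy selection rule and the inequality $(a+b)^2\le(1+1/c)a^2+(1+c)b^2$ correctly yields
\[
\bigl\|\tilde P_s(\boldsymbol v)-\boldsymbol v\bigr\|_2^2\;\le\;(1+1/c)\,\|\hat{\boldsymbol v}-\boldsymbol v\|_2^2+4(1+c)\sum_{i\in[s]}\|\boldsymbol w_i\|_\infty^2 .
\]
But since $\hat s\le s$ gives $\frac{|I|-s}{|I|-\hat s}\le 1$, this is strictly weaker than the claimed bound, and the missing factor is not cosmetic: in Lemma~\ref{IHTbd} the lemma is invoked with $\frac{|I^t\setminus R|-s}{|I^t\setminus R|-s^*}\le\frac{2s^*}{s+s^*}$, and the whole contraction argument hinges on making this factor small by taking $s=\Theta((c_u/c_l)^2 s^*)\gg s^*$. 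A version of the lemma with constant $1$ in place of $\frac{|I|-s}{|I|-\hat s}$ would not support that proof.

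Your proposed repair — averaging the greedy comparison over all injections $R_1\hookrightarrow R_2$ (or over completions of $\hat S$) — does not produce the right factor. Averaging over injections replaces $\sum_{R_2}v_j^2$ by $\frac{|R_1|}{|R_2|}\sum_{R_2}v_j^2$, and the ratio $\frac{|\hat S\setminus S|}{|S\setminus\hat S|}$ bears no relation to $\frac{|I|-s}{|I|-\hat s}$ (e.g.\ it is $0$ when $\hat S\subseteq S$). Worse, the untouched term $\sum_{(I\setminus S)\setminus\hat S}v_j^2$ cannot be discounted at all in your decomposition, because the noisily selected set $S$ gives no guarantee that the coordinates left in $(I\setminus S)\setminus\hat S$ are small. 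The paper's route is different: it introduces the \emph{exact} top-$s$ set $T$ as an intermediary, splits $\sum_{j\in S^c}v_j^2=\sum_{S^c\cap T^c}v_j^2+\sum_{S^c\cap T}v_j^2$, controls the second piece by Lemma~\ref{A.31} using $|S\cap T^c|=|S^c\cap T|$ (this is where the noise term enters), and arrives at $(1+1/c)\sum_{j\in T^c}v_j^2$ plus noise. The cardinality ratio then comes from the ordering property of $T$: since $T^c$ consists of the $|I|-s$ coordinates of smallest magnitude, its average $\frac{1}{|I|-s}\sum_{T^c}v_j^2$ is at most the average $\frac{1}{|I|-\hat s}\sum_{\hat S^c}v_j^2$ over the larger set $\hat S^c$, giving $\sum_{T^c}v_j^2\le\frac{|I|-s}{|I|-\hat s}\|\hat{\boldsymbol v}-\boldsymbol v\|_2^2$. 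That order-statistics comparison, not an average over injections, is the missing idea.
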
 
\begin{proof}
    Let $T$ be the index set of the top $s$ coordinates of $v$ in terms of absolute values. We have

\begin{align*}
\left\|\tilde{P}_s(\boldsymbol{v})-\boldsymbol{v}\right\|_2^2 & =\sum_{j \in S^c} v_j^2=\sum_{j \in S^c \cap T^c} v_j^2+\sum_{j \in S^c \cap T} v_j^2 \\
& \leq \sum_{j \in S^c \cap T^c} v_j^2+(1+1 / c) \sum_{j \in S \cap T^c} v_j^2+4(1+c) \sum_{i \in[s]}\left\|\boldsymbol{w}_i\right\|_{\infty}^2 .
\end{align*}

The last step is true by observing that $\left|S \cap T^c\right|=\left|S^{\mathrm{c}} \cap T\right|$ and applying lemma \ref{A.31}.
Now, for an arbitrary $\hat{\boldsymbol{v}}$ with $\|\hat{\boldsymbol{v}}\|_0=\hat{s} \leq s$, let $\hat{S}=\operatorname{supp}(\hat{\boldsymbol{v}})$. We have
$$
\frac{1}{|I|-s} \sum_{j \in T^c} v_j^2=\frac{1}{\left|T^c\right|} \sum_{j \in T^c} v_j^2 \stackrel{(*)}{\leq} \frac{1}{\left|(\hat{S})^c\right|} \sum_{j \in(\hat{S})^c} v_j^2=\frac{1}{|I|-\hat{s}} \sum_{j \in(\hat{S})^c} v_j^2 \leq \frac{1}{|I|-\hat{s}} \sum_{j \in(\hat{S})^c}\|\hat{\boldsymbol{v}}-\boldsymbol{v}\|_2^2
$$
The (*) step is true because $T^c$ is the collection of indices with the smallest absolute values, and $\left|T^c\right| \leq\left|\hat{S}^c\right|$. We then combine the two displays above to conclude that

\begin{align*}
\left\|\tilde{P}_s(\boldsymbol{v})-\boldsymbol{v}\right\|_2^2 & \leq \sum_{j \in S^{\bullet} \cap T^c} v_j^2+(1+1 / c) \sum_{j \in S \cap T^c} v_j^2+4(1+c) \sum_{i \in[s]}\left\|\boldsymbol{w}_i\right\|_{\infty}^2 \\
& \leq(1+1 / c) \sum_{j \in T^c} v_j^2+4(1+c) \sum_{i \in[s]}\left\|\boldsymbol{w}_i\right\|_{\infty}^2\\
&\leq(1+1 / c) \frac{|I|-s}{|I|-\hat{s}}\|\hat{\boldsymbol{v}}-\boldsymbol{v}\|_2^2+4(1+c) \sum_{i \in[s]}\left\|\boldsymbol{w}_i\right\|_{\infty}^2
\end{align*}

\end{proof}

\begin{lemma}\label{IHTbd}
Under assumptions \ref{assumption1} and \ref{assumption2} and event $\mathcal{E}_1=\left\{\Pi_R\left(y_i\right)=y_i, \forall i \in[n]\right\}$, RSC and RSS properties implies that there exists an absolute constant $\rho$ such that
$$
\mathcal{L}_n\left(\boldsymbol{\beta}^{t+1}\right)-\mathcal{L}_n(\hat{\boldsymbol{\beta}}) \leq\left(1-\frac{c_l}{\rho c_u}\right)\left(\mathcal{L}_n\left(\boldsymbol{\beta}^t\right)-\mathcal{L}_n(\hat{\boldsymbol{\beta}})\right)+c_3\left(\sum_{i \in[s]}\left\|\boldsymbol{w}_i^t\right\|_{\infty}^2+\left\|\tilde{\boldsymbol{w}}_{S^{t+1}}^t\right\|_2^2\right)
$$
for every $t$, where $\boldsymbol{w}_1^t, \boldsymbol{w}_2^t, \cdots, \boldsymbol{w}_s^t$ are the Laplace noise added to $\boldsymbol{\beta}^t-\left(\eta^0 / n\right) \sum_{i=1}^n\left(\boldsymbol{x}_i^{\top} \boldsymbol{\beta}^t-\Pi_R\left(y_i\right)\right) \boldsymbol{x}_i$ when the support of $\boldsymbol{\beta}^{t+1}$ is iteratively selected by ``Peeling", $S^{t+1}$ is the support of $\beta^{t+1}$, and $\tilde{w}^t$ is the noise vector added to the selected s-sparse vector.
\end{lemma}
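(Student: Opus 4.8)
The plan is to run the standard RSC/RSS analysis of iterative hard thresholding, with the exact hard--thresholding projection replaced by the noisy ``Peeling'' guarantee of Lemma~\ref{A.31}/\ref{A.3}. Fix an iteration $t$ and work on the event $\mathcal{E}_1$ (together with the high--probability event that the feature clipping is inactive, which holds since the $x_i$ are $O(1)$-sub-Gaussian and $K=O(\log d)$), so that the objective $\mathcal{L}_n$ used by the algorithm coincides with the Huber objective $\mathcal{L}_\tau$ to which the RSC and RSS lemmas apply verbatim. Write $g^t=\nabla\mathcal{L}_n(\boldsymbol{\beta}^t)$, so $\boldsymbol{\beta}^{t+0.5}=\boldsymbol{\beta}^t-\eta^0 g^t$, and let $I=\operatorname{supp}(\boldsymbol{\beta}^t)\cup\operatorname{supp}(\boldsymbol{\beta}^{t+1})\cup\operatorname{supp}(\hat{\boldsymbol{\beta}})$; since $s=O(s^*)$ we have $|I|=O(s^*)$, and every inner product or squared norm below involves only coordinates in $I$. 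On vectors supported in $I$, RSC and RSS together sandwich the Hessian of $\mathcal{L}_\tau$ between $c_l$ and $\rho c_u$ times the identity for an absolute constant $\rho$ (using that $\|\cdot\|_{\boldsymbol{\Sigma},2}$ and $\|\cdot\|_2$ are comparable on $O(s^*)$-sparse vectors under Assumption~\ref{assumption1} and $n\gtrsim s^*\log d$); take the step size $\eta^0\asymp 1/(\rho c_u)$.

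Next I would invoke the restricted--smoothness descent inequality
\[
\mathcal{L}_n(\boldsymbol{\beta}^{t+1})-\mathcal{L}_n(\boldsymbol{\beta}^t)\;\le\;\langle g^t,\boldsymbol{\beta}^{t+1}-\boldsymbol{\beta}^t\rangle+\tfrac{1}{2\eta^0}\|\boldsymbol{\beta}^{t+1}-\boldsymbol{\beta}^t\|_2^2
\]
and complete the square, rewriting the right--hand side as $\tfrac{1}{2\eta^0}\bigl(\|\boldsymbol{\beta}^{t+1}-\boldsymbol{\beta}^{t+0.5}\|_2^2-\|\eta^0 g^t\|_2^2\bigr)$, all restricted to $I$. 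The first term is then controlled by Lemma~\ref{A.31}/\ref{A.3} with $\boldsymbol{v}=\boldsymbol{\beta}^{t+0.5}|_I$, index set $I$, and the $s^*$-sparse comparator $\hat{\boldsymbol{v}}=\hat{\boldsymbol{\beta}}-\eta^0(g^t)|_{\operatorname{supp}(\hat{\boldsymbol{\beta}})}$, giving
\[
\|\boldsymbol{\beta}^{t+1}-\boldsymbol{\beta}^{t+0.5}\|_2^2\;\le\;(1+1/c)\tfrac{|I|-s}{|I|-s^*}\,\|\hat{\boldsymbol{v}}-\boldsymbol{\beta}^{t+0.5}\|_2^2+4(1+c)\sum_{i\in[s]}\|\boldsymbol{w}_i^t\|_\infty^2 .
\]
The final Laplace vector added inside ``Peeling'' is accounted for separately and yields the $\|\tilde{\boldsymbol{w}}_{S^{t+1}}^t\|_2^2$ term, while the outer projection $\Pi_L$ is $1$-Lipschitz and, since $\hat{\boldsymbol{\beta}}$ lies in the $L$-ball, does not increase the distance to $\hat{\boldsymbol{\beta}}$, so it can be dropped.

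Then I would expand $\|\hat{\boldsymbol{v}}-\boldsymbol{\beta}^{t+0.5}\|_2^2$, use the square--completion identity so that the negative $\|\eta^0 g^t\|_2^2$ cancels the gradient--squared contribution, and convert the surviving first--order term into $\mathcal{L}_n(\boldsymbol{\beta}^t)-\mathcal{L}_n(\hat{\boldsymbol{\beta}})$ by combining: (i) first--order optimality of $\hat{\boldsymbol{\beta}}$, $\langle\nabla\mathcal{L}_n(\hat{\boldsymbol{\beta}}),\boldsymbol{\beta}^t-\hat{\boldsymbol{\beta}}\rangle\ge 0$ on $I$; (ii) restricted convexity, $\langle\nabla\mathcal{L}_n(\boldsymbol{\beta}^t),\boldsymbol{\beta}^t-\hat{\boldsymbol{\beta}}\rangle\ge\mathcal{L}_n(\boldsymbol{\beta}^t)-\mathcal{L}_n(\hat{\boldsymbol{\beta}})$; and (iii) RSC and RSS, which pin $\|\boldsymbol{\beta}^t-\hat{\boldsymbol{\beta}}\|_{\boldsymbol{\Sigma},2}^2$ and $\|\boldsymbol{\beta}^t-\hat{\boldsymbol{\beta}}\|_2^2$ against $\mathcal{L}_n(\boldsymbol{\beta}^t)-\mathcal{L}_n(\hat{\boldsymbol{\beta}})$ up to the nonnegative first--order term. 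Substituting these bounds back and choosing $c$ a large enough constant so that $(1+1/c)\tfrac{|I|-s}{|I|-s^*}\le 1-\tfrac{c_l}{\rho c_u}$ (this is exactly where $s=O(s^*)$ with a sufficiently large constant and $\eta^0\asymp 1/(\rho c_u)$ enter) collapses everything into the stated recursion, with $c_3$ an absolute multiple of $(1+c)$.

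The step I expect to be the main obstacle is the bookkeeping in this last part: one must juggle the Peeling contraction factor $\tfrac{|I|-s}{|I|-s^*}$, the step--size constant, and the two--sided RSC/RSS constants so that, after all cross terms are absorbed by AM--GM, exactly the coefficient $1-c_l/(\rho c_u)$ remains in front of $\mathcal{L}_n(\boldsymbol{\beta}^t)-\mathcal{L}_n(\hat{\boldsymbol{\beta}})$ and every leftover term is either nonpositive or part of the Laplace residue. A secondary subtlety is ensuring that $\rho$ is truly an absolute constant, independent of $s^*,d,n$, which relies precisely on the restricted comparison between $\|\cdot\|_{\boldsymbol{\Sigma},2}$ and $\|\cdot\|_2$ from Assumption~\ref{assumption1} together with the sample--size lower bound.
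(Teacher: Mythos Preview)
Your outline captures the right high--level strategy (RSS descent, square completion, Peeling lemma, RSC), but there is a genuine gap in the step where you claim that choosing $s=O(s^*)$ large enough forces $(1+1/c)\frac{|I|-s}{|I|-s^*}\le 1-\frac{c_l}{\rho c_u}$. This does not hold. Since $S^t$ and $S^{t+1}$ each have size $s$, the index set $I=S^t\cup S^{t+1}\cup S^*$ can have size as large as $2s+s^*$ (e.g.\ when $S^t\cap S^{t+1}=\emptyset$), and then
\[
\frac{|I|-s}{|I|-s^*}\;=\;\frac{s+s^*}{2s}\;\xrightarrow[s/s^*\to\infty]{}\;\frac12 .
\]
So the Peeling factor is bounded below by $1/2$ no matter how large $s$ is taken. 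If you now expand $\|\hat{\boldsymbol v}-\boldsymbol\beta^{t+0.5}\|_2^2$ and bound $\|\hat{\boldsymbol\beta}-\boldsymbol\beta^t\|_2^2$ via RSC, the surviving coefficient in front of $\mathcal L_n(\boldsymbol\beta^t)-\mathcal L_n(\hat{\boldsymbol\beta})$ is of order $1+\alpha\bigl(\tfrac{c_u}{c_l}-1\bigr)$ with $\alpha\approx\tfrac12$, which exceeds $1$ whenever $c_u>c_l$; you get expansion, not contraction.

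The paper avoids this by a more delicate decomposition: it does \emph{not} apply Lemma~\ref{A.3} directly on $I^t$, but first strips off a set $R\subset S^t\setminus S^{t+1}$ with $|R|=|S^{t+1}\setminus(S^t\cup S^*)|$ (using Lemma~\ref{A.31} to swap $R$ for $S^{t+1}\setminus(S^t\cup S^*)$), so that the effective index set $I^t\setminus R$ has size at most $s+2s^*$. Only then does Lemma~\ref{A.3} give the small factor $\frac{2s^*}{s+s^*}$, which is what the choice $s\asymp (c_u/c_l)^2 s^*$ drives to $O((c_l/c_u)^2)$. Making this swap work also forces $\eta<1$ (the paper takes $\eta=2/3$) so that a $(1-\eta)\langle\boldsymbol\beta^{t+1}-\boldsymbol\beta^t,g^t\rangle$ term survives and is analyzed separately on the three pieces $S^{t+1}$, $S^t\setminus S^{t+1}$, and $S^{t+1}\setminus(S^t\cup S^*)$. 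Finally, the closing step is not just RSC/RSS and first--order optimality: the paper invokes an auxiliary inequality $\|g^t_{S^t\cup S^*}\|_2^2-\tfrac{c_l^2}{4}\|\hat{\boldsymbol\beta}-\boldsymbol\beta^t\|_2^2\ge\tfrac{c_l}{2}\bigl(\mathcal L_n(\boldsymbol\beta^t)-\mathcal L_n(\hat{\boldsymbol\beta})\bigr)$, which is exactly what turns the residual quadratic terms into a negative multiple of the loss gap. Your items (i)--(iii) do not give this combination on their own.
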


\begin{proof}
    For convenience, we notate as below:

$\triangleright$ Let $S^t=\operatorname{supp}\left(\boldsymbol{\beta}^t\right), S^{t+1}=\operatorname{supp}\left(\boldsymbol{\beta}^{t+1}\right), S^*=\operatorname{supp}(\hat{\boldsymbol{\beta}})$, and define $I^t=S^{t+1} \cup S^t \cup S^*$.

$\triangleright$ Let $g^t=\nabla \mathcal{L}_n\left(\beta^t\right)$ and $\eta^0=\eta / c_u$, where $c_u$ is the constant in RSC and RSS property.

$\triangleright$ Let $\boldsymbol{w}_1, \boldsymbol{w}_2, \cdots, \boldsymbol{w}_s$ be the noise vectors added to $\boldsymbol{\beta}^t-\eta^0 \nabla \mathcal{L}_n\left(\boldsymbol{\beta}^{\mathrm{t}}\right)$ when the support of $\boldsymbol{\beta}^{t+1}$ is iteratively selected. We define $\boldsymbol{W}=$ $4 \sum_{i \in[s]}\left\|\boldsymbol{w}_i\right\|_{\infty}^2$
By RSC and RSS property, we have
\begin{align}\label{eqq}
\mathcal{L}_n\left(\boldsymbol{\beta}^{t+1}\right)-\mathcal{L}_n\left(\boldsymbol{\beta}^t\right) & \leq\left\langle\boldsymbol{\beta}^{t+1}-\boldsymbol{\beta}^t, \boldsymbol{g}^t\right\rangle+\frac{c_u}{2}\left\|\boldsymbol{\beta}^{t+1}-\boldsymbol{\beta}^t\right\|_2^2 \\
 & =\frac{c_u}{2}\left\|\boldsymbol{\beta}_{I^t}^{t+1}-\boldsymbol{\beta}_{I^{+}}^t+\frac{\eta}{c_u} \boldsymbol{g}_{I^t}^t\right\|_2^2-\frac{\eta^2}{2 c_u}\left\|\boldsymbol{g}_{I^t}^t\right\|_2^2+(1-\eta)\left\langle\boldsymbol{\beta}^{t+1}-\boldsymbol{\beta}^t, \boldsymbol{g}^t\right\rangle .
\end{align}

We first focus on the third term above. In what follows, $c$ denotes an arbitrary constant greater than 1 . We may write $\boldsymbol{\beta}^{t+1}=\tilde{\boldsymbol{\beta}}^{t+1}+\tilde{\boldsymbol{w}}_{S^{t+1}}$, so that $\tilde{\boldsymbol{\beta}}^{t+1}=\tilde{P}_s\left(\boldsymbol{\beta}^t-\eta^0 \nabla \mathcal{L}_n\left(\boldsymbol{\beta}^t\right)\right)$ and $\tilde{\boldsymbol{w}}$ is a vector consisting of $d$ i.i.d. Laplace random variables.
\begin{align*}
\left\langle\boldsymbol{\beta}^{t+1}-\boldsymbol{\beta}^t, \boldsymbol{g}^t\right\rangle & =\left\langle\boldsymbol{\beta}_{S^{t+1}}^{t+1}-\boldsymbol{\beta}_{S^{t+1}}^t, \boldsymbol{g}_{S^{t+1}}^t\right\rangle-\left\langle\boldsymbol{\beta}_{S^t \backslash S^{t+1}}^t, \boldsymbol{g}_{S^t \backslash S^{t+1}}^t\right\rangle \\
& =\left\langle\overline{\boldsymbol{\beta}}_{S^{t+1}}^{t+1}-\boldsymbol{\beta}_{S^{t+1}}^t, \boldsymbol{g}_{S^{t+1}}^t\right\rangle+\left\langle\tilde{\boldsymbol{w}}_{S^{t+1}}, \boldsymbol{g}_{S^{t+1}}^t\right\rangle-\left\langle\boldsymbol{\beta}_{S^t \backslash S^{t+1}}^t, \boldsymbol{g}_{S^t \backslash S^{t+1}}^t\right\rangle
\end{align*}

It follows that, for every $c>1$, 
\begin{equation}\label{eq}
    \left\langle\boldsymbol{\beta}^{t+1}-\boldsymbol{\beta}^t, \boldsymbol{g}^t\right\rangle \leq-\frac{\eta}{c_u}\left\|\boldsymbol{g}_{S^{t+1}}^t\right\|_2^2+c\left\|\tilde{\boldsymbol{w}}_{S^{t+1}}\right\|_2^2+(1 / 4 c)\left\|\boldsymbol{g}_{S^{t+1}}^t\right\|_2^2-\left\langle\boldsymbol{\beta}_{S^t \backslash S^{t+1}}^t, \boldsymbol{g}_{S^t \backslash S^{t+1}}^t\right\rangle
\end{equation}

Now for the last term in the display above, we have
\begin{align*}
-\left\langle\boldsymbol{\beta}_{S^t \backslash S^{t+1}}^t, \boldsymbol{g}_{S^t \backslash S^{t+1}}^t\right\rangle & \leq \frac{c_u}{2 \eta}\left(\left\|\boldsymbol{\beta}_{S^t \backslash S^{t+1}}^t-\frac{\eta}{c_u} \boldsymbol{g}_{S^t \backslash S^{t+1}}^t\right\|_2^2-\left(\frac{\eta}{c_u}\right)^2\left\|\boldsymbol{g}_{S^t \backslash S^{t+1}}^t\right\|_2^2\right) \\
& \leq \frac{c_u}{2 \eta}\left\|\boldsymbol{\beta}_{S^t \backslash S^{t+1}}^t-\frac{\eta}{c_u} \boldsymbol{g}_{S^t \backslash S^{t+1}}^t\right\|_2^2-\frac{\eta}{2 c_u}\left\|\boldsymbol{g}_{S^t \backslash S^{t+1}}^t\right\|_2^2 .
\end{align*}
We apply lemma \ref{A.31} to $\left\|\boldsymbol{\beta}_{S^t \backslash S^{t+1}}-\frac{\eta}{c_u} \boldsymbol{g}_{S^t \backslash S^{t+1}}^t\right\|_2^2$ to obtain that, for every $c>1$
\begin{align*}
&-\left\langle\boldsymbol{\beta}_{S^t \backslash S^{t+1}}^t, \boldsymbol{g}_{S^t \backslash S^{t+1}}^t\right\rangle \leq \frac{c_u}{2 \eta}\left[(1+1 / c)\left\|\tilde{\boldsymbol{\beta}}_{S^{t+1} \backslash S^t}^{t+1}\right\|_2^2+(1+c) \boldsymbol{W}\right]-\frac{\eta}{2 c_u}\left\|\boldsymbol{g}_{S^t \backslash S^{t+1}}^t\right\|_2^2 \\
&=\frac{\eta}{2 c_u}\left[(1+1 / c)\left\|\boldsymbol{g}_{S^{t+1} \backslash S^t}^t\right\|_2^2+(1+c) \frac{c_u}{2 \eta} \boldsymbol{W}\right]-\frac{\eta}{2 c_u}\left\|\boldsymbol{g}_{S^t \backslash S^{t+1}}^t\right\|_2^2 .
\end{align*}
Plugging the inequality above back into (Eq.\ref{eq}) yields
\begin{align*}
\left\langle\boldsymbol{\beta}^{t+1}-\boldsymbol{\beta}^t, \boldsymbol{g}^t\right\rangle \leq & -\frac{\eta}{c_u}\left\|\boldsymbol{g}_{S^{t+1}}^t\right\|_2^2+c\left\|\tilde{\boldsymbol{w}}_{S^{t+1}}\right\|_2^2+(1 / 4 c)\left\|\boldsymbol{g}_{S^{t+1}}^t\right\|_2^2 \\
& +\frac{\eta}{2 c_u}\left[(1+1 / c)\left\|\boldsymbol{g}_{S^{t+1} \backslash S^t}^t\right\|_2^2+(1+c) \frac{c_u}{2 \eta} \boldsymbol{W}\right]-\frac{\eta}{2 c_u}\left\|\boldsymbol{g}_{S^t \backslash S^{t+1}}^t\right\|_2^2 \\
\leq & \frac{\eta}{2 c_u}\left\|\boldsymbol{g}_{S^{t+1} \backslash S^t}^t\right\|_2^2-\frac{\eta}{2 c_u}\left\|\boldsymbol{g}_{S^t \backslash S^{t+1}}^t\right\|_2^2-\frac{\eta}{c_u}\left\|\boldsymbol{g}_{S^{t+1}}^t\right\|_2^2 \\
& +(1 / c)\left(4+\frac{\eta}{2 c_u}\right)\left\|\boldsymbol{g}_{S^{t+1}}^t\right\|_2^2+c \overline{\boldsymbol{w}}_{S^{t+1}} \|_2^2+(1+c) \frac{c_u}{2 \eta} \boldsymbol{W}
\end{align*}
Finally, for the third term of (Eq.\ref{eqq}) we have
$$
\left\langle\beta^{t+1}-\boldsymbol{\beta}^t, \boldsymbol{g}^t\right\rangle \leq-\frac{\eta}{2 c_u}\left\|\boldsymbol{g}_{S^t \cup S^{t+1}}^t\right\|_2^2+(1 / c)\left(4+\frac{\eta}{2 c_u}\right)\left\|\boldsymbol{g}_{S^{t+1}}^t\right\|_2^2+c\left\|\tilde{\boldsymbol{w}}_{S^{t+1}}\right\|_2^2+(1+c) \frac{c_u}{2 \eta} \boldsymbol{W}
$$
Now combining this bound with (Eq.\ref{eqq}) yields
\begin{align*}
& \mathcal{L}_n\left(\boldsymbol{\beta}^{t+1}\right)-\mathcal{L}_n\left(\boldsymbol{\beta}^t\right) \\
\leq & \frac{c_u}{2}\left\|\boldsymbol{\beta}_{I^t}^{t+1}-\boldsymbol{\beta}_{I^t}^t+\frac{\eta}{c_u} \boldsymbol{g}_{I^t}^t\right\|_2^2-\frac{\eta^2}{2 c_u}\left\|\boldsymbol{g}_{I^t}^t\right\|_2^2-\frac{\eta(1-\eta)}{2 c_u}\left\|\boldsymbol{g}_{S^t \cup S^{t+1}}^t\right\|_2^2 \\
& +\frac{1-\eta}{c}\left(4+\frac{\eta}{2 c_u}\right)\left\|\boldsymbol{g}_{S^{t+1}}^t\right\|_2^2+(1-\eta) c\left\|\overline{\boldsymbol{w}}_{S^{t+1}}\right\|_2^2+(1-\eta)(1+c) \frac{c_u}{2 \eta} \boldsymbol{W} \\
\leq & \frac{c_u}{2}\left\|\boldsymbol{\beta}_{I^t}^{t+1}-\boldsymbol{\beta}_{I^t}^t+\frac{\eta}{c_u} \boldsymbol{g}_{I^t}^t\right\|_2^2-\frac{\eta^2}{2 c_u}\left\|\boldsymbol{g}_{I^t \backslash\left(S^t \cup S^*\right)}^t\right\|_2^2-\frac{\eta^2}{2 c_u}\left\|\boldsymbol{g}_{S^t \cup S^*}^t\right\|_2^2-\frac{\eta(1-\eta)}{2 c_u}\left\|\boldsymbol{g}_{S^t \cup S^{t+1}}^t\right\|_2^2 \\
& +\frac{1-\eta}{c}\left(4+\frac{\eta}{2 c_u}\right)\left\|\boldsymbol{g}_{S^{t+1}}^t\right\|_2^2+(1-\eta) c\left\|\tilde{\boldsymbol{w}}_{S^{t+1}}\right\|_2^2+(1-\eta)(1+c) \frac{c_u}{2 \eta} \boldsymbol{W} \\
\leq & \frac{c_u}{2}\left\|\boldsymbol{\beta}_{I^t}^{t+1}-\boldsymbol{\beta}_{I^t}^t+\frac{\eta}{c_u} \boldsymbol{g}_{I^t}^t\right\|_2^2-\frac{\eta^2}{2 c_u}\left\|\boldsymbol{g}_{I^t \backslash\left(S^t \cup S^*\right)}^t\right\|_2^2-\frac{\eta^2}{2 c_u}\left\|\boldsymbol{g}_{S^t \cup S^*}^t\right\|_2^2-\frac{\eta(1-\eta)}{2 c_u}\left\|\boldsymbol{g}_{S^{t+1} \backslash\left(S^t \cup S^*\right)}^t\right\|_2^2\\
&+\frac{1-\eta}{c}\left(4+\frac{\eta}{2 c_u}\right)\left\|\boldsymbol{g}_{S^{t+1}}^t\right\|_2^2+(1-\eta) c\left\|\overline{\boldsymbol{w}}_{S^{t+1}}\right\|_2^2+(1-\eta)(1+c) \frac{c_u}{2 \eta} \boldsymbol{W}.
\end{align*}

The last step is true because $S^{t+1} \backslash\left(S^t \cup S^*\right)$ is a subset of $S^t \cup S^{t+1}$. We next analyze the first two terms, $\frac{c_u}{2}\left\|\boldsymbol{\beta}_{I^t}^{t+1}-\boldsymbol{\beta}_{I^t}^t+\frac{\eta}{c_u} \boldsymbol{g}_{I^t}^t\right\|_2^2-\frac{\eta^2}{2 c_u}\left\|\boldsymbol{g}_{I^t \backslash\left(S^t \cup S^*\right)}^t\right\|_2^2$.
Let $R$ be a subset of $S^t \backslash S^{t+1}$ such that $|R|=\left|I^t \backslash\left(S^t \cup S^*\right)\right|=\mid S^{t+1} \backslash\left(S^t \cup\right.$ $\left.S^*\right) \mid$. 

By the definition of $\tilde{\boldsymbol{\beta}}^{t+1}$ and lemma \ref{A.31}, we have, for every $c>1$,
$$
\frac{\eta^2}{c_u^2}\left\|\boldsymbol{g}_{I^t \backslash\left(S^t \cup S^*\right)}^t\right\|_2^2=\left\|\tilde{\boldsymbol{\beta}}_{I^t \backslash\left(S^t \cup S^*\right)}^{t+1}\right\|_2^2 \geq(1-1 / c)\left\|\boldsymbol{\beta}_R^t-\frac{\eta}{c_u} \boldsymbol{g}_R^t\right\|_2^2-c \boldsymbol{W} .
$$
It follows that
\begin{align*}
& \frac{c_u}{2}\left\|\boldsymbol{\beta}_{I^t}^{t+1}-\boldsymbol{\beta}_{I^t}^t+\frac{\eta}{c_u} \boldsymbol{g}_{I^t}^t\right\|_2^2-\frac{\eta^2}{2 c_u}\left\|\boldsymbol{g}_{I^t \backslash\left(S^t \cup S^*\right)}^t\right\|_2^2 \\
& \leq \frac{c_u}{2}\left\|\tilde{\boldsymbol{w}}_{S^{t+1}}\right\|_2^2+\frac{c_u}{2}\left\|\tilde{\boldsymbol{\beta}}_{I^t}^{t+1}-\boldsymbol{\beta}_{I^t}^t+\frac{\eta}{c_u} \boldsymbol{g}_{I^t}^t\right\|_2^2-\frac{c_u}{2}(1-1 / c)\left\|\boldsymbol{\beta}_R^t-\frac{\eta}{c_u} \boldsymbol{g}_R^t\right\|_2^2+\frac{c c_u}{2} \boldsymbol{W} \\
& =\frac{c_u}{2}\left\|\tilde{\boldsymbol{\beta}}_{I^t}^{t+1}-\boldsymbol{\beta}_{I^t}^t+\frac{\eta}{c_u} \boldsymbol{g}_{I^t}^t\right\|_2^2-\frac{c_u}{2}\left\|\mid \tilde{\boldsymbol{\beta}}_R^{t+1}-\boldsymbol{\beta}_R^t+\frac{\eta}{c_u} \boldsymbol{g}_R^t\right\|_2^2+\frac{c_u}{2}(1 / c)\left\|\boldsymbol{\beta}_R^t-\frac{\eta}{c_u} \boldsymbol{g}_R^t\right\|_2^2+\frac{c c_u}{2} \boldsymbol{W} \\
& +\frac{c_u}{2}\left\|\tilde{\boldsymbol{w}}_{S^{t+1}}\right\|_2^2 \\
& \leq \frac{c_u}{2}\left\|\tilde{\boldsymbol{\beta}}_{I^t \backslash R}^{t+1}-\boldsymbol{\beta}_{I^t \backslash R}^t+\frac{\eta}{c_u} \boldsymbol{g}_{I^{\dagger} \backslash R}^t\right\|_2^2+\frac{\eta^2}{2 c c_u}(1+1 / c)\left\|\boldsymbol{g}_{I^t \backslash\left(S^t \cup S^*\right)}^t\right\|_2^2+\frac{c c_u}{2} \boldsymbol{W}+\frac{c_u}{2}\left\|\tilde{\boldsymbol{w}}_{S^{t+1}}\right\|_2^2 .
\end{align*}

The last inequality is obtained by applying lemma \ref{A.31} to $\left\|\boldsymbol{\beta}_R^t-\frac{\eta}{c_u} \boldsymbol{g}_R^t\right\|_2^2$. Now we apply lemma \ref{A.3} to obtain
\begin{align*}
& \frac{c_u}{2}\left\|\boldsymbol{\beta}_{I^t}^{t+1}-\boldsymbol{\beta}_{I^t}^t+\frac{\eta}{c_u} \boldsymbol{g}_{I^t}^t\right\|_2^2-\frac{\eta^2}{2 c_u}\left\|\boldsymbol{g}_{I^t \backslash\left(S^t \cup S^*\right)}^t\right\|_2^2 \\
& \leq \frac{3 c_u}{4} \frac{\left|I^t \backslash R\right|-s}{\left|I^t \backslash R\right|-s^*}\left\|\boldsymbol{\beta}_{I^t \backslash R}-\boldsymbol{\beta}_{I^t \backslash R}^t+\frac{\eta}{c_u} \boldsymbol{g}_{I^t \backslash R}^t\right\|_2^2+\frac{3 c_u}{2} \boldsymbol{W} \\
& +\frac{\eta^2\left(1+c^{-1}\right)}{2 c c_u}\left\|\boldsymbol{g}_{I^{\backslash} \backslash\left(S^t \cup S^*\right)}^t\right\|_2^2+\frac{c c_u}{2} \boldsymbol{W}+\frac{c_u}{2}\left\|\tilde{\boldsymbol{w}}_{S^{t+1}}\right\|_2^2 \\
& \leq \frac{3 c_u}{4} \frac{2 s^*}{s+s^*}\left\|\boldsymbol{\beta}_{I^t \backslash R}-\boldsymbol{\beta}_{I^t \backslash R}^t+\frac{\eta}{c_u} \boldsymbol{g}_{I^t \backslash R}^t\right\|_2^2+\frac{3 c_u}{2} \boldsymbol{W}+\frac{\eta^2}{2 c c_u}(1+1 / c)\left\|\boldsymbol{g}_{S^{t+1}}^t\right\|_2^2 \\
&+\frac{c c_u}{2} \boldsymbol{W}+\frac{c_u}{2}\left\|\tilde{\boldsymbol{w}}_{S^{t+1}}\right\|_2^2
\end{align*}

The last step is true by observing that $\left|I^t \backslash R\right| \leq 2 s^*+s$, and the inclusion $I^t \backslash\left(S^t \cup S^*\right) \subseteq S^{t+1}$. We continue to simplify,
\begin{align*}
&\frac{c_u}{2}\left\|\boldsymbol{\beta}_{I^t}^{t+1}-\boldsymbol{\beta}_{I^t}^t+\frac{\eta}{c_u} \boldsymbol{g}_{I^t}^t\right\|_2^2-\frac{\eta^2}{2 c_u}\left\|\boldsymbol{g}_{I^t \backslash\left(S^t \cup S^*\right)}^t\right\|_2^2\\
& \leq \frac{c_u}{2} \frac{3 s^*}{s+s^*}\left\|\boldsymbol{\beta}_{I^t}-\boldsymbol{\beta}_{I^t}^t+\frac{\eta}{c_u} \boldsymbol{g}_{I^t}^t\right\|_2^2+\frac{3 c_u}{2} \boldsymbol{W}+\frac{\eta^2}{2 c c_u}(1+1 / c)\left\|\boldsymbol{g}_{S^{t+1}}^t\right\|_2^2+\frac{c c_u}{2} \boldsymbol{W}+\frac{c_u}{2}\left\|\tilde{\boldsymbol{w}}_{S^{t+1}}\right\|_2^2 \\
& \leq \frac{3 s^*}{s+s^*}\left(\eta\left(\hat{\boldsymbol{\beta}}-\boldsymbol{\beta}^t, \boldsymbol{g}^t\right\rangle+\frac{c_u}{2}\left\|\hat{\boldsymbol{\beta}}-\boldsymbol{\beta}^t\right\|_2^2+\frac{\eta^2}{2 c c_u}\left\|\boldsymbol{g}_{I^t}^t\right\|_2^2\right) \\
& +\frac{\eta^2}{2 c c_u}(1+1 / c)\left\|\boldsymbol{g}_{S^{t+1}}^t\right\|_2^2+\frac{(c+3) c_u}{2} \boldsymbol{W}+\frac{c_u}{2}\left\|\overline{\boldsymbol{w}}_{S^{t+1}}\right\|_2^2 \\
& \leq \frac{3 s^*}{s+s^*}\left(\eta \mathcal{L}_n(\hat{\boldsymbol{\beta}})-\eta \mathcal{L}_n\left(\boldsymbol{\beta}^t\right)+\frac{c_u-\eta c_l}{2}\left\|\hat{\boldsymbol{\beta}}-\boldsymbol{\beta}^t\right\|_2^2+\frac{\eta^2}{2 c c_u}\left\|\boldsymbol{g}_{I^t}^t\right\|_2^2\right) \\
& \quad+\frac{\eta^2}{2 c c_u}(1+1 / c)\left\|\boldsymbol{g}_{S^{t+1}}^t\right\|_2^2+\frac{(c+3) c_u}{2} \boldsymbol{W}+\frac{c_u}{2}\left\|\tilde{\boldsymbol{w}}_{S^{t+1}}\right\|_2^2 .
\end{align*}

Until now, the inequality is true for any $0<\eta<1$ and $c>1$. We now specify the choice of these parameters: let $\eta=2 / 3$ and set $c$ large enough so that
\begin{align*}
\mathcal{L}_n\left(\boldsymbol{\beta}^{t+1}\right)-\mathcal{L}_n\left(\boldsymbol{\beta}^t\right) \leq & \frac{3 s^*}{s+s^*}\left(\eta \mathcal{L}_n(\hat{\boldsymbol{\beta}})-\eta \mathcal{L}_n\left(\boldsymbol{\beta}^t\right)+\frac{c_u-\eta c_l}{2}\left\|\hat{\boldsymbol{\beta}}-\boldsymbol{\beta}^t\right\|_2^2+\frac{\eta^2}{2 c_u}\left\|\boldsymbol{g}_{I^t}^t\right\|_2^2\right) \\
& -\frac{\eta^2}{4 c_u}\left\|\boldsymbol{g}_{S^t \cup S^*}^t\right\|_2^2-\frac{\eta(1-\eta)}{4 c_u}\left\|\boldsymbol{g}_{S^{t+1} \backslash\left(S^t \cup S^*\right)}^t\right\|_2^2 \\
& +\frac{c_u}{2}\left(\frac{3 c+7}{2}\right) \boldsymbol{W}+\left(\frac{c}{3}+\frac{c_u}{2}\right)\left\|\tilde{\boldsymbol{w}}_{S^{t+1}}\right\|_2^2 .
\end{align*}

Such a choice of $c$ is available because $c_u$ is bounded above by an absolute constant thanks to the RSM condition (upper inequality of RSC and RSS property). Now we set $s=72(c_u / c_l)^2 s^*=\rho L^4 s^*$, where $\rho$ is the absolute constant referred to in Lemma 8.3 and Theorem 4.4 in \cite{sun2020adaptive}, so that $\frac{3 s^*}{s+s^*} \leq \frac{c_l^2}{24 c_u(c_u-\eta c_l)}$, and $\frac{c_l^2}{24 c_u(c_u-\eta c_l)} \leq 1 / 8$ because $c_l<c_u$. It follows that
\begin{align*}
\mathcal{L}_n\left(\boldsymbol{\beta}^{t+1}\right)-\mathcal{L}_n\left(\boldsymbol{\beta}^t\right) \leq & \frac{3 s^*}{s+s^*}\left(\eta \mathcal{L}_n(\hat{\boldsymbol{\beta}})-\eta \mathcal{L}_n\left(\boldsymbol{\beta}^t\right)\right)+\frac{c_l^2}{48 c_u}\left\|\hat{\boldsymbol{\beta}}-\boldsymbol{\beta}^t\right\|_2^2+\frac{1}{36 c_u}\left\|\boldsymbol{g}_{I^t}^t\right\|_2^2 \\
& -\frac{1}{9 c_u}\left\|\boldsymbol{g}_{S^t \cup S^*}^t\right\|_2^2-\frac{1}{18 c_u}\left\|\boldsymbol{g}_{S^{t+1} \backslash\left(S^t \cup S^*\right)}^t\right\|_2^2 \\
& +\frac{c_u}{2}\left(\frac{3 c+7}{2}\right) \boldsymbol{W}+\left(\frac{c}{3}+\frac{c_u}{2}\right)\left\|\tilde{\boldsymbol{w}}_{S^{t+1}}\right\|_2^2
\end{align*}

Because $\left\|\boldsymbol{g}_{I^t}^t\right\|_2^2=\left\|\boldsymbol{g}_{S^t \cup S^*}^t\right\|_2^2+\left\|\boldsymbol{g}_{S^{t+1} \backslash\left(S^t \cup S^*\right)}^t\right\|_2^2$, we have
\begin{align*}
\mathcal{L}_n\left(\boldsymbol{\beta}^{t+1}\right)-\mathcal{L}_n\left(\boldsymbol{\beta}^t\right) \leq & \frac{3 s^*}{s+s^*}\left(\eta \mathcal{L}_n(\hat{\boldsymbol{\beta}})-\eta \mathcal{L}_n\left(\boldsymbol{\beta}^t\right)\right)+\frac{c_l^2}{48 c_u}\left\|\hat{\boldsymbol{\beta}}-\boldsymbol{\beta}^t\right\|_2^2-\frac{3}{36 c_u}\left\|\boldsymbol{g}_{S^t \cup S^*}^t\right\|_2^2 \\
& +\frac{c_u}{2}\left(\frac{3 c+7}{2}\right) \boldsymbol{W}+\left(\frac{c}{3}+\frac{c_u}{2}\right)\left\|\overline{\boldsymbol{w}}_{S^{t+1}}\right\|_2^2 \\
\leq & \frac{3 s^*}{s+s^*}\left(\eta \mathcal{L}_n(\hat{\boldsymbol{\beta}})-\eta \mathcal{L}_n\left(\boldsymbol{\beta}^t\right)\right)-\frac{3}{36 c_u}\left(\left\|\boldsymbol{g}_{S^t \cup S^*}^t\right\|_2^2-\frac{c_l^2}{4}\left\|\hat{\boldsymbol{\beta}}-\boldsymbol{\beta}^t\right\|_2^2\right)\\
&+\frac{c_u}{2}\left(\frac{3 c+7}{2}\right) \boldsymbol{W}+\left(\frac{c}{3}+\frac{c_u}{2}\right)\left\|\overline{\boldsymbol{w}}_{S^{t+1}}\right\|_2^2 \ \text{(Eq.3)}.
\end{align*}

To continue the calculations, we consider a lemma from Lemma A.4\cite{c26}
$$
\left\|g_{S^t \cup S^*}^t\right\|_2^2-\frac{c_l^2}{4}\left\|\hat{\boldsymbol{\beta}}-\boldsymbol{\beta}^t\right\|_2^2 \geq \frac{c_l}{2}\left(\mathcal{L}_n\left(\boldsymbol{\beta}^t\right)-\mathcal{L}_n(\hat{\boldsymbol{\beta}})\right)
$$
It then follows from (Eq.3), the quoted lemma above and the definition of $\rho$ that, for an appropriate constant $c_3$,
\begin{align*}
\mathcal{L}_n\left(\boldsymbol{\beta}^{t+1}\right)-\mathcal{L}_n\left(\boldsymbol{\beta}^t\right) & \leq-\left(\frac{3 c_l}{72 c_u}+\frac{2 s^*}{s+s^*}\right)\left(\mathcal{L}_n\left(\boldsymbol{\beta}^t\right)-\mathcal{L}_n(\hat{\boldsymbol{\beta}})\right)+c_3\left(\boldsymbol{W}+\left\|\overline{\boldsymbol{w}}_{S^{t+1}}\right\|_2^2\right) \\
& \leq-\left(\frac{c_l}{\rho c_u}\right)\left(\mathcal{L}_n\left(\boldsymbol{\beta}^t\right)-\mathcal{L}_n(\hat{\boldsymbol{\beta}})\right)+c_3\left(\boldsymbol{W}+\left\|\tilde{\boldsymbol{w}}_{S^{t+1}}\right\|_2^2\right)
\end{align*}

Adding $\mathcal{L}_n\left(\boldsymbol{\beta}^t\right)-\mathcal{L}_n(\hat{\boldsymbol{\beta}})$ to both sides of the inequality concludes the proof.
\end{proof}

\section{Proof of Theorem \ref{theorem2}}
\begin{proof}
Using the lemmas stated above, we iterate over $t$ and denote
\[
\boldsymbol{W}_t = c_3\left(\sum_{i\in[s]}\|\boldsymbol{w}_i^t\|_{\infty}^2 + \|\tilde{\boldsymbol{w}}_{S^{t+1}}^t\|_2^2\right).
\]
Then, we have
\begin{align*}
\mathcal{L}_n(\boldsymbol{\beta}^T)-\mathcal{L}_n(\boldsymbol{\beta}^*)
&\le \left(1-\frac{c_l}{\rho c_u}\right)^T \Bigl(\mathcal{L}_n(\boldsymbol{\beta}^0)-\mathcal{L}_n(\boldsymbol{\beta}^*)\Bigr)
+ \sum_{k=0}^{T-1}\left(1-\frac{c_l}{\rho c_u}\right)^{T-k-1} \boldsymbol{W}_k \\
&\le \left(1-\frac{c_l}{\rho c_u}\right)^T 8Ac_0^2
+ \sum_{k=0}^{T-1}\left(1-\frac{c_l}{\rho c_u}\right)^{T-k-1} \boldsymbol{W}_k,
\end{align*}
where the second inequality follows from the RSS and RSC properties and the $\ell_2$ bounds on $\boldsymbol{\beta}^0$ and $\boldsymbol{\beta}^*$.

On the other hand, by the RSC property we also have the lower bound
\[
\mathcal{L}_n(\boldsymbol{\beta}^T)-\mathcal{L}_n(\boldsymbol{\beta}^*)
\ge \frac{c_l}{2}\|\boldsymbol{\beta}^T-\boldsymbol{\beta}^*\|_2^2
-\left\langle \nabla \mathcal{L}_n(\boldsymbol{\beta}^*),\, \boldsymbol{\beta}^*-\boldsymbol{\beta}^T \right\rangle.
\]
Combining these bounds and noting that for $T\asymp \log n$, we obtain
\begin{equation}\label{eq1}
\frac{c_l}{2}\|\boldsymbol{\beta}^T-\boldsymbol{\beta}^*\|_2^2
\le \|\nabla \mathcal{L}_n(\boldsymbol{\beta}^*)\|_{\infty}\sqrt{s+s^*}\|\boldsymbol{\beta}^*-\boldsymbol{\beta}^T\|_2
+\frac{1}{n} + \sum_{k=0}^{T-1}\left(1-\frac{c_l}{\rho c_u}\right)^{T-k-1} \boldsymbol{W}_k.
\end{equation}

To further bound $\|\boldsymbol{\beta}^T-\boldsymbol{\beta}^*\|_2^2$, we consider the event
\[
\mathcal{E}_2 = \left\{ \max_t \boldsymbol{W}_t \le K\,\frac{L^2\tau^2(s^*)^2 \log^2 d\,\log\left(\frac{1}{\sigma}\right)t^2}{n^2\varepsilon^2} \right\},
\]
in addition to the event $\mathcal{E}_1$ defined earlier. Then, by applying Eq.~\eqref{eq1}, Assumptions \ref{assumption1} and \ref{assumption2}, and Lemma \ref{IHTbd}, we deduce that
\[
\|\boldsymbol{\beta}^T-\boldsymbol{\beta}^*\|_2 \lesssim (s^*)^{\frac{1}{2}}\left\|\nabla \mathcal{L}_\tau(\boldsymbol{\beta}^*)\right\|_{\infty}
+\frac{L\tau s^*\,\log d\,\log^{\frac{1}{2}}\left(\frac{1}{\sigma}\right)t}{n\varepsilon}.
\]

Regardless of whether the noise is random or fixed, the leading term of $\nabla \mathcal{L}_\tau(\boldsymbol{\beta}^*)$ is given by
\[
C_1\tau L\frac{t}{n} + C_2\tau^{-\delta},
\]
as established in Lemma \ref{newl}. Therefore, the optimal tuning for $\tau$ is
\[
\tau \asymp \left(\frac{t}{n}\left(1+\frac{s^{*\frac{1}{2}}\log d\,\log^{\frac{1}{2}}\left(\frac{1}{\sigma}\right)}{\varepsilon}\right)\right)^{-\max\left\{\frac{1}{2},\frac{1}{1+\zeta}\right\}}.
\]

It remains to show that the events $\mathcal{E}_1$ and $\mathcal{E}_2$ occur with high probability. As shown in \cite{cai2021cost}, we have
\[
\mathbb{P}\left(\mathcal{E}_1^c\right) \le c_1 \exp\left(-c_2\,t\log d\right)
\quad \text{and} \quad
\mathbb{P}\left(\mathcal{E}_3^c\right) \le 2e^{-2\log d},
\]
where $\mathcal{E}_3$ is another event defined in the previous context. This completes the proof.
\end{proof}

\section{Proof of Theorem \ref{thm:two-phase}}

\begin{proof}
Let $\boldsymbol{\beta}^0$ denote the initial estimate and $\boldsymbol{\beta}^*$ the true parameter vector. Define
\[
D_t \;=\; (1 - c_1)^t\,\|\boldsymbol{\beta}^0 - \boldsymbol{\beta}^*\|_2,
\]
where $c_1\in (0,1)$ is the contraction constant determined by the algorithm's specifications. We prove by induction on $t$ that
\[
\|\boldsymbol{\beta}^t - \boldsymbol{\beta}^*\|_2 \;\le\; D_t,
\]
up to an additive error of order $O\bigl(\sqrt{\boldsymbol{W}}\bigr)$.

\medskip
\noindent\textbf{Base Case.} For $t=0$, the claim is immediate:
\[
\|\boldsymbol{\beta}^0 - \boldsymbol{\beta}^*\|_2 = D_0.
\]

\medskip
\noindent\textbf{Inductive Step.} Suppose that 
\[
\|\boldsymbol{\beta}^t - \boldsymbol{\beta}^*\|_2 \le D_t
\]
for some $t \ge 0$. We now show that
\[
\|\boldsymbol{\beta}^{t+1} - \boldsymbol{\beta}^*\|_2 \;\le\; D_{t+1} + O\bigl(\sqrt{\boldsymbol{W}}\bigr).
\]
According to the algorithm, the update is performed in two stages. First, a gradient descent step is executed:
\[
\boldsymbol{\beta}^{t+0.5} \;=\; \boldsymbol{\beta}^t - \eta_t\,\mathbf{G}_t,
\]
where $\mathbf{G}_t$ is a subgradient of $f(\boldsymbol{\beta})$ evaluated at $\boldsymbol{\beta}^t$. By the parallelogram identity, we have
\[
\|\boldsymbol{\beta}^{t+0.5} - \boldsymbol{\beta}^*\|_2^2 
=\|\boldsymbol{\beta}^t - \boldsymbol{\beta}^*\|_2^2 - 2\eta_t\,\langle \boldsymbol{\beta}^t - \boldsymbol{\beta}^*, \mathbf{G}_t \rangle + \eta_t^2\,\|\mathbf{G}_t\|_2^2.
\]
Since $f$ is convex, it follows that
\[
\langle \boldsymbol{\beta}^t - \boldsymbol{\beta}^*, \mathbf{G}_t \rangle \ge f(\boldsymbol{\beta}^t)-f(\boldsymbol{\beta}^*).
\]
Thus, 
\[
\|\boldsymbol{\beta}^{t+0.5} - \boldsymbol{\beta}^*\|_2^2 \le \|\boldsymbol{\beta}^t - \boldsymbol{\beta}^*\|_2^2 - 2\eta_t\Bigl[f(\boldsymbol{\beta}^t)-f(\boldsymbol{\beta}^*)\Bigr] + \eta_t^2\,\|\mathbf{G}_t\|_2^2.
\]

By Lemma~1 in \cite{shen2023computationally}, there exists a constant $c_l>0$ such that
\[
f(\boldsymbol{\beta}^t)-f(\boldsymbol{\beta}^*) \ge \frac{n}{4}\,c_l^{1/2}\,\|\boldsymbol{\beta}^t-\boldsymbol{\beta}^*\|_2,
\]
and, moreover, $\|\mathbf{G}_t\|_2 \le n\,c_u^{1/2}$ for some constant $c_u>0$. Substituting these bounds gives
\[
\|\boldsymbol{\beta}^{t+0.5} - \boldsymbol{\beta}^*\|_2^2 \le \|\boldsymbol{\beta}^t - \boldsymbol{\beta}^*\|_2^2 - \eta_t\,\frac{n}{2}\,c_l^{1/2}\,\|\boldsymbol{\beta}^t-\boldsymbol{\beta}^*\|_2 + \eta_t^2\,n^2\,c_u.
\]
By the induction hypothesis, $\|\boldsymbol{\beta}^t-\boldsymbol{\beta}^*\|_2 \le D_t$, so that
\[
\|\boldsymbol{\beta}^{t+0.5} - \boldsymbol{\beta}^*\|_2^2 \le D_t^2 - \eta_t\,\frac{n}{2}\,c_l^{1/2}\,D_t + \eta_t^2\,n^2\,c_u.
\]

To obtain $\boldsymbol{\beta}^{t+1}$, the algorithm applies a projection step that incorporates privacy-preserving noise. Let $\boldsymbol{W}_t$ denote the noise introduced in iteration $t$, so that by Lemma~\ref{A.3} (or a similar projection result),
\[
\|\boldsymbol{\beta}^{t+1} - \boldsymbol{\beta}^*\|_2^2 \le \|\boldsymbol{\beta}^{t+0.5} - \boldsymbol{\beta}^*\|_2^2 + \boldsymbol{W}_t,
\]
where $\boldsymbol{W}_t = 4\sum_{i=1}^{s}\|w_i^t\|_\infty^2$. Consequently,
\[
\|\boldsymbol{\beta}^{t+1} - \boldsymbol{\beta}^*\|_2^2 \le D_t^2 - \eta_t\,\frac{n}{2}\,c_l^{1/2}\,D_t + \eta_t^2\,n^2\,c_u + \boldsymbol{W}_t.
\]

By choosing the step size such that
\[
\eta_t \in n^{-1}\sqrt{\frac{c_l}{c_u}}\,\Bigl[\tfrac{1}{8}D_t,\;\tfrac{3}{8}D_t\Bigr],
\]
the negative (linear) term dominates the quadratic term, ensuring a geometric decrease of $D_t$, provided that $\boldsymbol{W}_t$ (and hence the overall noise) remains sufficiently small with high probability. Taking square roots and using standard inequalities, we deduce
\[
\|\boldsymbol{\beta}^{t+1} - \boldsymbol{\beta}^*\|_2 \le (1-C_1)\,D_t + \sqrt{\boldsymbol{W}_t} \le D_{t+1} + \sqrt{\boldsymbol{W}},
\]
where
\[
O(\boldsymbol{W}) = O\!\Biggl(\frac{(s^*)^{3/2}\,\log d\,(\log(1/\delta))^{1/2}\,\log n}{n\,\varepsilon}\Biggr)
\]
serves as an upper bound on the added noise. This completes the induction for Phase One.

\medskip
\noindent\textbf{Transition to Phase Two.} Once $\|\boldsymbol{\beta}^t - \boldsymbol{\beta}^*\|_2$ becomes sufficiently small, the algorithm enters Phase Two, in which a constant step size is adopted. The analysis in this phase follows analogously to that of Phase One (again leveraging Lemma~1 in \cite{shen2023computationally}), and yields the contraction
\[
\|\boldsymbol{\beta}_{l+1} - \boldsymbol{\beta}^*\|_2 \le \Bigl(1 - \widetilde{c}_2\,\frac{c_l^2}{2\,c_u^2}\,\frac{b_1^2}{b_0^2}\Bigr)\,\|\boldsymbol{\beta}_l - \boldsymbol{\beta}^*\|_2,
\]
where the constants are chosen appropriately.

\medskip
Finally, by ensuring that additional perturbations (e.g., from noise or minor discrepancies in the index sets) remain controlled (as in Theorem~9 of \cite{shen2023computationally}), we conclude that the overall algorithm converges geometrically. This completes the proof of Theorem~\ref{thm:two-phase}.
\end{proof}

\section{Proof of Theorem \ref{thm:error-bound}}

\begin{proof}
The proof follows by combining the two-phase convergence analysis from Theorem~\ref{thm:two-phase}.

\textbf{Phase One:} When 
\[
\|\boldsymbol{\beta}^t - \boldsymbol{\beta}^*\|_2 \ge 8\,c_l^{-1/2}\gamma,
\]
with the decaying step size $\eta_t = (1-c_1)^t\eta_0$, Theorem~\ref{thm:two-phase} implies that
\[
\|\boldsymbol{\beta}^{t+1} - \boldsymbol{\beta}^*\|_2 \le (1-c_1)^{t+1}\|\boldsymbol{\beta}_0 - \boldsymbol{\beta}^*\|_2 + O\bigl(\sqrt{\boldsymbol{W}}\bigr).
\]
Thus, after
\[
T_1 = O\Bigl(\log\Bigl(\frac{\|\boldsymbol{\beta}_0 - \boldsymbol{\beta}^*\|_2}{\gamma}\Bigr)\Bigr)
\]
iterations, the error is reduced to
\[
\|\boldsymbol{\beta}^{T_1} - \boldsymbol{\beta}^*\|_2 \le 8\,c_l^{-1/2}\gamma.
\]

\textbf{Phase Two:} Once the iterate enters the region 
\[
\|\boldsymbol{\beta}^t - \boldsymbol{\beta}^*\|_2 \le 8\,c_l^{-1/2}\gamma,
\]
a constant step size is employed so that
\[
\|\boldsymbol{\beta}^{t+1} - \boldsymbol{\beta}^*\|_2 \le (1-c_2^*)\,\|\boldsymbol{\beta}^t - \boldsymbol{\beta}^*\|_2 +  O\bigl(\sqrt{\boldsymbol{W}}\bigr)\bigr).
\]
This contraction implies that after an additional
\[
T_2 = O\Bigl(\log\Bigl(n\,\gamma\,b_0^{-1}\,\log\Bigl(\frac{2\,d}{s^*}\Bigr)\Bigr)\Bigr)
\]
iterations the estimation error satisfies
\[
\|\boldsymbol{\beta}^{T_1+T_2} - \boldsymbol{\beta}^*\|_2 \le  O\bigl(\sqrt{\boldsymbol{W}}\bigr)\bigr).
\]

\textbf{Conclusion:} Combining the two phases, the total number of iterations required is
\[
T = T_1 + T_2 = O\Bigl(\log\Bigl(\frac{\|\boldsymbol{\beta}_0 - \boldsymbol{\beta}^*\|_2}{\gamma}\Bigr) + \log\Bigl(n\,\gamma\,b_0^{-1}\,\log\Bigl(\frac{2\,d}{s^*}\Bigr)\Bigr)\Bigr),
\]
and the final estimator $\boldsymbol{\beta}^T$ satisfies
\[
\|\boldsymbol{\beta}^T - \boldsymbol{\beta}^*\|_2 \le O\Bigl(\frac{(s^*)^{3/2}\,\log d\,\bigl(\log(1/\delta)\bigr)^{1/2}\,\log (T/n)}{(T/n)\,\varepsilon}\Bigr).
\]
Notice that the term 
\[
O\Bigl(\frac{(s^*)^{3/2}\,\log d\,\bigl(\log(1/\delta)\bigr)^{1/2}\,\log (T/n)}{(T/n)\,\varepsilon}\Bigr)
\]
originates from the cumulative effect of the Laplace noise injected via the peeling procedure. More precisely, under the event
\[
\mathcal{E}_2 = \left\{\max_t \boldsymbol{W}_t \leq K\,\frac{L^2 \tau^2\left(s^*\right)^2 \log^2 d\,\log\left(\frac{1}{\sigma}\right)\,t^2}{(T/n)^2\varepsilon^2}\right\},
\]
which holds with high probability, the noise magnitude in each iteration is well controlled.
This completes the proof.
\end{proof}

\end{document}